\newcommand{\kb}[1]{{\color{red}\bf[KB: #1]}}
\newcommand{\murat}[1]{{\color{magenta}\bf[Murat: #1]}}
\newcommand{\lv}{\left\lVert}
\newcommand{\rv}{\right\rVert}
\newcommand{\m}{\mathbf}
\newcommand{\mb}{\mathbb}
\newcommand{\n}{\nu}
\newtheorem{theorem}{Theorem}
\newtheorem{lemma}{Lemma}
\newtheorem{remark}{Remark}
\newtheorem{proposition}{Proposition}[section]
\newtheorem{assumption}{Assumption}[section]
\title{On the Ergodicity, Bias and Asymptotic Normality of \\ Randomized Midpoint Sampling Method}
\author{Ye He\thanks{Department of Mathematics, University of California, Davis.\texttt{leohe@ucdavis.edu}. Research of this author was supported in part by NSF TRIPODS  CCF-1934568 and UC Davis CeDAR (Center for Data Science and Artificial Intelligence Research) Innovative Data Science Seed Funding Program.} 
\and Krishnakumar Balasubramanian\thanks{Department of Statistics, University of California, Davis. \texttt{kbala@ucdavis.edu}. Research of this author was supported in part by UC Davis CeDAR (Center for Data Science and Artificial Intelligence Research) Innovative Data Science Seed Funding Program.}
\and Murat A. Erdogdu\thanks{Department of Computer Science and Department of Statistical Sciences at
   the University of Toronto, and Vector Institute. \texttt{erdogdu@cs.toronto.edu}. Research of this author was supported in part by NSERC Grant [2019-06167], Connaught New Researcher Award, CIFAR AI Chairs program, and CIFAR AI Catalyst grant}
}
\begin{document}
\maketitle

\begin{abstract}
The randomized midpoint method, proposed by~\cite{shen2019randomized}, has emerged as an optimal discretization procedure for simulating the continuous time Langevin diffusions. Focusing on the case of strong-convex and smooth potentials, in this paper, we analyze several probabilistic properties of the randomized midpoint discretization method for both overdamped and underdamped Langevin diffusions. We first characterize the stationary distribution of the discrete chain obtained with constant step-size discretization and show that it is biased away from the target distribution. Notably, the step-size needs to go to zero to obtain asymptotic unbiasedness. Next, we establish the asymptotic normality for numerical integration using the randomized midpoint method and highlight the relative advantages and disadvantages over other discretizations. Our results collectively provide several insights into the behavior of the randomized midpoint discretization method, including obtaining confidence intervals for numerical integrations.  
\end{abstract} 

\section{Introduction}
We consider the problem of computing the following expectation
\begin{align}\label{eq:expectation}
  \mathbb{E}_{\pi}[\varphi(x)] \ \ \text{ where }\ \ \pi(x) = \tfrac{1}{Z_f}e^{-f(x)},
\end{align}
for a potential function $f:\mathbb{R}^d \to \mathbb{R}$
and a test function $\varphi:\mathbb{R}^d \to \mathbb{R}$, when the normalization constant $Z_f=\int e^{-f(x)}dx$
is unknown.
This problem frequently arises in statistics and machine learning with numerous
applications to high-dimensional Bayesian
inference~\cite{welling2011bayesian, li2016preconditioned, mandt2017stochastic, durmus2019high},
numerical integration~\cite{lamberton2002recursive, hairer2006geometric},
volume computation~\cite{vempala2010recent}, optimization and
learning~\cite{raginsky2017non, erdogdu2018global, mazumdar2020thompson},
graphical models~\cite{koller2009probabilistic}, and molecular
dynamics~\cite{milstein2013stochastic, leimkuhler2016molecular}.
Markov chain Monte Carlo (MCMC) methods provide a powerful framework for computing the integral in~\eqref{eq:expectation}, and have been successfully deployed in various scientific fields~\cite{liu2008monte}.

In particular, MCMC algorithms that are based on diffusion processes 
have received a lot of attention recently. The fundamental idea behind such algorithms is that a continuous-time diffusion with its invariant measure as the target $\pi$ is approximately simulated via a numerical sampler. The intuition behind the success of these methods is that by appropriately selecting the step-size parameter, the discrete approximation resulting from the numerical sampler tracks the continuous-time diffusion. Thus, rapid convergence properties of the diffusion process (see, for example,~\cite{roberts1996exponential, lelievre2016partial, eberle2016reflection, eberle2017couplings, livingstone2019geometric, durmus2017convergence}) is inherited by the discrete algorithm with an invariant measure that is close to that of the diffusion, which is the target $\pi$. While a variety of diffusion processes can lead to a rich class of MCMC samplers, algorithms that are based on discretizing Langevin dynamics have been the primary focus of research due to their simplicity, accuracy, and well-understood theoretical guarantees in high-dimensional settings~\cite{dalalyan2017furthur, cheng2017convergence,cheng2018sharp,durmus2019analysis,vempala2019rapid, ma2019there, cheng2017convergence, durmus2018efficient, durmus2017nonasymptotic,erdogdu2020convergence}. 
 
Although motivated by the problem of computing the integral in~\eqref{eq:expectation}, much of the theoretical focus on analyzing sampling methods in the recent literature has been on providing guarantees for the sampling problem itself (see~\cite{teh2016consistency} for an exception), i.e., the number of iterations needed to reach $\epsilon$-neighborhood of a $d$-dimensional target distribution in some probability metric. The choice of step-size of the sampler is crucial to obtain such theoretical guarantees.  While the problem of estimating expectations such as in~\eqref{eq:expectation} is based on sampling from the target $\pi$ itself, the theoretical guarantees established for the sampling problem can provide very little to no information on computing the expectation in~\eqref{eq:expectation} based on the sampler. The main reason for this is, the step-size choice of the sampler required to obtain optimal theoretical guarantees for numerical integration of ~\eqref{eq:expectation} turns out to be different from that of sampling. Furthermore, if the ultimate task is to perform inference on the quantity $\mathbb{E}_\pi[\varphi(x)]$, confidence intervals are required. Thus, one needs central limit theorems (CLT) to quantify the fluctuations of the estimator of the expectation in~\eqref{eq:expectation}, depending on a specific numerical integrator being used.

The randomized midpoint method, a numerical sampler proposed by~\cite{shen2019randomized}, has emerged as an optimal algorithm for sampling from strongly log-concave densities, achieving the information theoretical lower bound for this problem in terms of both dimension and tolerance dependency~\cite{cao2020complexity}.
In lieu of this optimality result, one anticipates a superior performance from the randomized midpoint method in other fundamental problems that relies on a
MCMC sampler as the main computation tool, e.g. estimating expectations of the form~\eqref{eq:expectation}. However, properties of this sampler for the purpose of numerical integration, in particular its inferential properties, are not well-studied. In this paper, we explore various probabilistic properties of the randomized midpoint discretization method, when used as a numerical integrator. Towards that, we examine several results for the randomized midpoint method considering both the overdamped and underdamped Langevin diffusions. Our first contribution is the explicit characterization of the bias of the randomized midpoint numerical scheme, namely the difference between its stationary distribution and the target distribution $\pi$. We show that asymptotic unbiasedness, a desired property in general, can be achieved under a decreasing step-size sequence. As our  principal contribution, we establish the ergodicity of the randomized midpoint method and prove a central limit theorem which can be leveraged for inference on the expectation~\eqref{eq:expectation}. We compute the bias and the variance of the asymptotic normal distribution for various step size choices, and show that different step-size sequences are suitable for making inference in different settings. 

%Our characterization clearly demonstrates the advantages and disadvantages of this numerical scheme over other discretizations. \kb{Add more details of adv and disadv here later}

 \vspace{0.1in}
\textbf{Our Contributions.} We summarize our contributions as follows:
 \vspace{-0.1in}
\begin{enumerate}[leftmargin=18pt,itemsep=3pt]
\item We show the ergodicity of constant step-size (denoted as $h$) randomized midpoint discretization of the overdamped and underdamped Langevin diffusions in Theorems~\ref{ergodicityRLMC} and~\ref{RHMCErgodicity}, respectively. For both cases, the stationary distribution $\pi_h$ of the resulting discretized Markov chain is unique and is biased away from the target distribution $\pi$.  
\item The choice of a constant step-size for the randomized midpoint discretization causes bias in sampling. We characterize this bias explicitly in Propositions~\ref{biasRLMC} and~\ref{biasRHMC} for the overdamped and underdamped Langevin diffusions, respectively. We show that Wasserstein-2 distance between $\pi_h$ and $\pi$ is of order $\mathcal{O}(h^{0.5})$ and $\mathcal{O}(h^{1.5})$ respectively for the overdamped and underdamped Langevin diffusions.
\item The established order of bias points toward using particular choices of decreasing step-size sequence for the sake of inference. Specifically, we prove a CLT for numerical integration using the randomized midpoint discretization of the overdamped and underdamped Langevin diffusions in Theorems~\ref{RLMCCLT} and~\ref{generalKLMCCLT} respectively, for various choices of decreasing step-size. Depending on the specific choice of step-size sequence, the CLT is either unbiased or biased. When discretizing the overdamped Langevin diffusion with polynomially decreasing step-size choices, the rate of unbiased CLT turns out to be $\mathcal{O}(n^{({1}/{3})-\epsilon})$ for any $\epsilon>0$. But the optimal rate turns out to be $\mathcal{O}(n^{1/3})$ for which one can only obtain a biased CLT. When discretizing underdamped Langevin diffusions with polynomially decreasing step-size choices, 
%for a carefully constructed class of test functions, 
we show that the optimal rate can be improved to $\mathcal{O}(n^{5/8})$ under a certain condition, which is satisfied only by the class of constant test functions. 
%turns out to be $\mathcal{O}(n^{5/8})$, for which we obtain an unbiased CLT. We emphasize that the established improved rates are highly dependent on the class of test functions. 
%The best rate of this CLT, with a polynomially decreasing step-size choice is of order $O(n^{1/3})$ where $n$ is the number of steps of the sampler. This CLT is biased but one could obtain an unbiased CLT at slightly weaker rates of the order $O(n^{{1}/{3}-\epsilon})$ for any $\epsilon>0$. In the underdamped setting, the best rate of the CLT with polynomially decreasing choice of step-size parameter is improved to $O(n^{2/5})$. However this CLT is biased. Hence, bias-correction methods are crucial to establish confidence intervals for numerical integration using randomized midpoint discretization of underdamped Langevin diffusion. 
%\item In unbiased CLT, for RLMC, the optimal rate $O(n^{({1}/{3})-\epsilon})$ for any $\epsilon>0$. In general (unbiased/biased) CLT, for RLMC, the optimal rate is $O(n^{1/3})$ and it's achieved in the biased CLT when $\alpha=1/3$. We never obtain an unbiased CLT with $O(n^{1/3})$ rate.

%\item  For RULMC, the optimal rate of unbiased CLT is $O(n^{5/8})$. It's achieved when $\alpha=1/4$. The optimal rate for general CLT (unbiased/biased) is  because the convergence rate for unbiased CLT (when $\alpha=1/5$) is $n^{3/5}$ which is smaller than $n^{5/8}$.

\end{enumerate}

%Rest of the paper is organized as follows. In Section \murat{complete this.}

\subsection{Notations and Preliminaries} 

We denote an $\ell$-th order symmetric tensor of dimension $d$ by $A \in \mathbb{R}^{d\otimes \ell}$.  For a given vector $u \in \mathbb{R}^d$, we use $\|u\|$ to denote the Euclidean-norm of the vector. We define the $\ell$-th order  rank-1 tensor formed from $u\in\mathbb{R}^d$ as $ u^{\otimes \ell}$. In addition, let $A$ and $B$ be two $\ell$-th order tensors,  we define the inner product between $A$ and $B$ as $\langle A, B \rangle  =  \sum_{j_1=1}^d \cdots \sum_{j_\ell = 1}^d A_{j_1j_2\ldots j_\ell} \cdot B _{j_1 j_2 \ldots j_k }$. For a function $f:\mathbb{R}^d \to \mathbb{R}$, $\nabla f \in \mathbb{R}^d$ and $D^\ell \in \mathbb{R}^{d \otimes \ell}$ represents the gradient, and $\ell$-th order derivative tensor (for $\ell >1$). We let~$(\Omega,\mathcal{F},P)$ represent a probability space, and denote by $\mathcal{B}(\mathbb{R}^d)$, the Borel~$\sigma$-field of $\mathbb{R}^d$. We use $\stackrel{d}{\to}$ and $\stackrel{p}{\to}$ to denote convergence in distribution and probability respectively. The set of all twice continuously differentiable functions $f:\mathbb{R}^d\to\mathbb{R}$ is denoted as $\mathcal{C}^2(\mathbb{R}^d)$. We use $I_d$ to represent the $d\times d$ identity matrix. Let $x_0, x_1, \ldots$ be a $d$-dimensional Markov chain. The transition probability of the chain, at the $k$-th step is defined as $P^k(x,A):=  P(x_k \in A| x_0 =x)$, for some $x \in \mathbb{R}^d$ and represents the probability that the chain is in set $A$ at time $n$ given the starting point was $x\in\mathbb{R}^d$. We use $\mathcal{\tilde{O}}$ to hide $\log$ factors. Finally, for a sequence $\gamma_k$ and positive integer $\ell$, we define  $\Gamma_n^{(\ell)}:=\sum_{k=1}^n \gamma^\ell_k$. We also make the following assumption on the potential function.

\begin{assumption}\label{assumptionsonf}
  The potential function $f \in \mathcal{C}^2(\mathbb{R}^d)$ satisfies the following properties. For some $0 < m \leq M < \infty$:
  (a) $f$ has a $M$-Lipschitz gradient; that is, $D^2 f\preceq M I_d$, and (b) $f$ is $m$-strongly convex; that is, $m I_d \preceq D^2 f$. We also define the condition number as $\kappa:=M/m$.
\end{assumption}

%%% Local Variables:
%%% mode:latex
%%% TeX-master: "./main_neurips20.tex" 
%%% End:
 
\section{Results for the Overdamped Langevin Diffusion}\label{sec:Langevin}
The overdamped Langevin diffusion is described by the following stochastic differential equation:
\begin{align}\label{eq:langevindiff}
dx(t)=-\nabla f(x(t)) dt+\sqrt{2}dW(t),
\end{align}
where $W(t)$ is a $d$-dimensional Brownian motion. 
%The integral formulation of the above diffusion is given by $ x(t)=x(0)-\int_0^t \nabla f(x(s))ds+\sqrt{2}W(t)$. 
It is well-known that this diffusion has $\pi(x) \propto e^{-f(x)}$ as its stationary distribution under mild regularity conditions. 
In general, simulating a continuous-time diffusion such as \eqref{eq:langevindiff} is impractical; thus, a numerical integration scheme is needed. 

We now describe the \emph{randomized midpoint discretization} of the above diffusion in~\eqref{eq:langevindiff}, which we denote as RLMC. 
Denoting the $n$-th iteration of the algorithm with $x_n$, the integral formulation of the diffusion with $x_n$ as the initial value would then be $x_n^*(t)=x_n-\int_0^t \nabla f(x_n^*(s))ds+\sqrt{2}W(t)$. Let $h>0$ be the choice of step size for the discretization and, let $(\alpha_n) $ be an i.i.d. sequence of random variables
following uniform distribution on $[0,1]$, i.e. $\alpha_n\sim U[0,1]$. The fundamental idea behind the randomized midpoint technique is to use $h\nabla f(x_n^*(\alpha_{n+1} h))$ to approximate the integral
$\int_0^h \nabla f(x_n^*(s)) ds$. Indeed, notice that $\mathbb{E}[h\nabla f(x_n^*(\alpha_{n+1} h))] = h\int_0^1\nabla f(x_n^*(\alpha h))d\alpha=\int_0^h\nabla f(x_n^*(s))ds$. RLMC proceeds by
approximating $x_n^*(\alpha_{n+1} h)$ with the Euler discretization, which ultimately yields an explicit numerical integration step. Although~\cite{shen2019randomized} considered this discretization only for the constant step-size choice and
the \emph{underdamped} Langevin diffusion (which we discuss in Section~\ref{sec:HMCresults}), below we present a single iteration of the RLMC algorithm with the choice of variable step-size $\gamma_{n+1}$ for the overdamped diffusion in~\eqref{eq:langevindiff}:
\begin{equation}\tag{\textsc{RLMC}}
\begin{aligned}\label{RLMCdisc}
    x_{n+\frac{1}{2}}&=x_n-\alpha_{n+1} \gamma_{n+1} \nabla f(x_n)+\sqrt{2\alpha_{n+1} \gamma_{n+1}} U_{n+1}', \\
    x_{n+1}&=x_n-\gamma_{n+1}\nabla f(x_{n+\frac{1}{2}})+\sqrt{2\gamma_{n+1}}U_{n+1},
\end{aligned}
\end{equation}
where $(U_n)$ and $(U_n')$ are sequences of i.i.d $d$-dimensional standard Gaussian vectors with cross-covariance matrix $\sqrt{\alpha_n}I_d$ for each $n$ and the initial point $x_0$. We briefly digress now to make the following remark. If instead of $\alpha_n \sim U[0,1]$, one uses $\alpha_n =1$ for all $n$ deterministically, then the iterates of~\eqref{RLMCdisc} algorithm is reminiscent of the extra-gradient descent algorithm from the optimization literature~\cite{luo1993error}, perturbed by Gaussian noise in each step. Furthermore, its noteworthy that with the deterministic choice of $\alpha_n=1$, one cannot obtain the improved rates that the uniformly random $\alpha_n$ provides. Lastly, the filtration $(\mathcal{F}_n)$ is defined by $\mathcal{F}_n:=\sigma(\alpha_k, U_k, U_k'; 1\le k\le n)$, the smallest $\sigma$-algebra generated by the noise sequence and uniform random variables that are used in the first $n$ iterations.

\subsection{Wasserstein-2 Rates for Constant Step-size RLMC}\label{sec:W2ConstantRLMC}
Before, we state our main result, we investigate a few important characteristics of the~\eqref{RLMCdisc} algorithm that are not explored yet.
We start with its rate of convergence in Wasserstein-2 distance (see~\cite{villani2009wasserstein} for definition) for the~\eqref{RLMCdisc} algorithm. The proof of the proposition below essentially follows from a similar idea of the more general result for the underdamped Langevin dynamics in~\cite{shen2019randomized}. We include the result with its proof for the sake of completeness. 
\begin{proposition}\label{W2RLMC}
  Suppose $f$ satisfies Assumption~\ref{assumptionsonf}. Set $x_0=\arg\min_x f(x)$,
  $\gamma_n:=h=\mathcal{O}({\epsilon^{2/3}}/{\kappa^{1/3}M})$ when $\kappa hM>{1}$,
  and $\gamma_n:=h=\mathcal{O}({\epsilon}/{M})$
  when $\kappa hM\leq 1$ with $Mh<\frac{1}{4}$.
  After running the~\eqref{RLMCdisc} algorithm for 
\[
    K= \Tilde{\mathcal{O}}\left(\frac{\kappa^{4/3}}{\epsilon^{2/3}}+\frac{\kappa}{\epsilon}\right) \  \text{ steps,}
 \]
we have $W_2(\nu_K,\pi)\le \epsilon \sqrt{{d}/{m}}$, where $\nu_K$ is the probability distribution of $x_K$. %and $\kappa=M/m$ is the conditional number.
\end{proposition}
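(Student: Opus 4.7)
The plan is to run a synchronous coupling argument between the RLMC chain $(x_n)$ and a reference chain $(y_n)$ drawn from the exact continuous Langevin diffusion~\eqref{eq:langevindiff}. Let $y_0\sim\pi$ and, over each interval $[nh,(n+1)h]$, let $y_n^*(\cdot)$ solve~\eqref{eq:langevindiff} driven by the \emph{same} Brownian motion and using the \emph{same} $\alpha_{n+1}\sim U[0,1]$ as the RLMC step; then set $y_{n+1}:=y_n^*(h)$. Since $\pi$ is invariant under~\eqref{eq:langevindiff}, $y_n\sim\pi$ for every $n$, so $W_2^2(\nu_K,\pi)\le\mathbb{E}\|x_K-y_K\|^2$, and the problem reduces to proving a one-step contractive recursion for the coupled squared error and iterating it.

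For the one-step analysis I would use the decomposition
$$x_{n+1}-y_{n+1}=(x_n-y_n)-h\bigl(\nabla f(x_{n+\tfrac{1}{2}})-\nabla f(y_n^*(\alpha_{n+1}h))\bigr)+R_n,$$
where $R_n:=\int_0^h\nabla f(y_n^*(s))\,ds-h\nabla f(y_n^*(\alpha_{n+1}h))$ is the randomized-midpoint quadrature residual. The crucial observation, following~\cite{shen2019randomized}, is that $\alpha_{n+1}$ is independent of $\mathcal{F}_n$ and of the entire Brownian path driving $y_n^*$, so
$\mathbb{E}[R_n\mid\mathcal{F}_n,(y_n^*(s))_{s\in[0,h]}]=0$.
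This yields (i) the vanishing of the cross-term $\mathbb{E}\langle x_n-y_n,R_n\rangle$ after conditioning on the continuous-time path, and (ii) a pure-variance bound $\mathbb{E}\|R_n\|^2=\mathcal{O}(M^2dh^3)$ via short-time regularity of $y_n^*$, rather than the $\mathcal{O}(M^2dh^2)$ estimate one would obtain without randomization. I would then use $M$-Lipschitzness to rewrite $\nabla f(x_{n+\tfrac{1}{2}})-\nabla f(y_n^*(\alpha_{n+1}h))$ as $\nabla f(x_n)-\nabla f(y_n)$ plus short-time discretization errors, and apply the standard strong-convexity identity $\|u-h(\nabla f(a)-\nabla f(b))\|^2\le (1-2mh+M^2h^2)\|u\|^2+\cdots$ with $u=x_n-y_n$ to extract contraction; residual cross-terms are absorbed using Cauchy--Schwarz and Young's inequality with the Young constant tuned to $\delta\asymp mh$, which preserves rather than erases the $\mathcal{O}(h^3)$ gain from randomization.

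Collecting these estimates produces a recursion of the form
$$\mathbb{E}\|x_{n+1}-y_{n+1}\|^2\le(1-cmh)\,\mathbb{E}\|x_n-y_n\|^2+C\bigl(M^2dh^3+M^4dh^5+\cdots\bigr),$$
which unrolls into
$$\mathbb{E}\|x_K-y_K\|^2\le e^{-cmhK}\,\mathbb{E}\|x_0-y_0\|^2+\frac{C(M^2dh^2+\cdots)}{cm}.$$
The initial bound $\mathbb{E}\|x_0-y_0\|^2\le d/m$ follows from $x_0=\arg\min f$, $y_0\sim\pi$, and the standard estimate $\mathbb{E}_\pi\|x-\arg\min f\|^2\le d/m$ under $m$-strong convexity. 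Setting the right-hand side to $\epsilon^2 d/m$ and optimizing $h$ splits naturally into two regimes: when $\kappa hM>1$ the $M^2dh^2$ term dominates, balancing to $h=\mathcal{O}(\epsilon^{2/3}/(\kappa^{1/3}M))$ and $K=\tilde{\mathcal{O}}(\kappa^{4/3}/\epsilon^{2/3})$, whereas when $\kappa hM\le 1$ a higher-order term linear in $h$ takes over, giving $h=\mathcal{O}(\epsilon/M)$ and $K=\tilde{\mathcal{O}}(\kappa/\epsilon)$. The main obstacle is the careful bookkeeping required to preserve the $\mathcal{O}(h^3)$ variance gain of $R_n$: a looser Young-inequality split would degrade the per-step error to $\mathcal{O}(h^2)$ and recover only the Euler--Maruyama rate, nullifying the advantage of the randomized midpoint rule.
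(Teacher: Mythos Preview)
Your coupling strategy and the mean-zero observation for $R_n$ are both correct, and $\mathbb{E}\|R_n\|^2=\mathcal{O}(M^2dh^3)$ holds exactly as you say. The gap is in the step where you replace $\nabla f(x_{n+\frac12})-\nabla f\bigl(y_n^*(\alpha_{n+1}h)\bigr)$ by $\nabla f(x_n)-\nabla f(y_n)$ plus ``short-time discretization errors.'' Under the synchronous coupling one has
\[
x_{n+\frac12}-y_n^*(\alpha_{n+1}h)=(x_n-y_n)\;-\;\alpha_{n+1}h\bigl(\nabla f(x_n)-\nabla f(y_n)\bigr)\;+\;\bigl[\text{data-only term of size }\mathcal{O}(Mh\cdot\sqrt{dh})\bigr],
\]
so the ``error'' you must absorb contains a piece of size $\mathcal{O}(Mh\|x_n-y_n\|)$. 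After multiplying by $hM$ (Lipschitz) and applying Young with $\delta\asymp mh$, this produces an extra coefficient of order $\kappa M^3h^3$ on $\mathbb{E}\|x_n-y_n\|^2$. Contraction $(1-cmh)$ therefore survives only when $\kappa M^3h^3\lesssim mh$, i.e.\ $\kappa Mh\lesssim 1$. Your recursion thus covers only the second regime of the proposition and delivers $K=\tilde{\mathcal{O}}(\kappa/\epsilon)$; it cannot reach the large-step regime $\kappa Mh>1$ that gives $\tilde{\mathcal{O}}(\kappa^{4/3}/\epsilon^{2/3})$. Your own balancing paragraph reflects the inconsistency: a per-step additive error of $M^2dh^3$ unrolls to $M^2dh^2/m$ and forces $h=\mathcal{O}(\epsilon/M)$---there is no second regime in that arithmetic.

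The paper avoids this obstruction via two devices absent from your plan. First, it interpolates through $x_n^*:=x_{n-1}^*(h)$, the \emph{continuous} solution started from the \emph{RLMC iterate} $x_{n-1}$ (not from $y_{n-1}$), uses the exact diffusion contraction $\mathbb{E}\|x_n^*-y_n\|^2\le e^{-2mh}\mathbb{E}\|x_{n-1}-y_{n-1}\|^2$, and then splits $x_n-x_n^*$ into a bias piece $\|\mathbb{E}_{\alpha_n}x_n-x_n^*\|^2$ (receiving the $1/(mh)$ Young factor) and a variance piece $\mathbb{E}\|x_n-x_n^*\|^2$. Because both pieces are controlled by $\|\nabla f(x_{n-1})\|^2$ rather than by $\|x_{n-1}-y_{n-1}\|^2$, no $\|A_n\|^2$-coefficient enters the recursion. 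Second---and this is the technical crux---the paper establishes separately that $\sum_{n<K}\mathbb{E}\|\nabla f(x_n)\|^2=\mathcal{O}(dKM)$ via a potential-descent argument using $f(x_0)=\min f$; this sum-of-gradients bound is what produces the two-regime complexity and has no counterpart in your proposal.
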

%\begin{remark}
When $\kappa$ is of constant order, we see that $W_2$ rate is of order $\tilde{\mathcal{O}}(1/\epsilon)$. Notably, with the randomized midpoint technique, we obtain this particular $\epsilon$-dependency by discretizing just the overdamped Langevin diffusion with only the Lipschitz gradient condition on the potential function $f$. Prior works require Euler-discretization of higher-order Langevin diffusions to obtain a $W_2$ rate of order $\tilde{\mathcal{O}}(1/\epsilon)$~\cite{dalalyan2019user, mou2019high} or require higher-order smoothness assumption along with other specialized discretization methods~\cite{sabanis2019higher, li2019stochastic, durmus2019high, dalalyan2019user}. 
%\murat{do these citations match with the statement?} 
%\end{remark}

\subsection{Analysis of the Markov Chain Generated by Constant Step-size RLMC }\label{sec:RLMCChainAnalyis}
Using the randomized midpoint technique, we obtain an improved dependency on $\epsilon$ for the $W_2$ rate under weaker assumptions while discretizing the Langevin diffusion in~\eqref{eq:langevindiff}. Although not explicit from the proof of Proposition~\ref{W2RLMC}, the rate improvement is obtained by a careful balancing of bias and variance through the choice of step-size parameter $h$.  In this section, in Theorem~\ref{ergodicityRLMC}, we first show that the~\eqref{RLMCdisc} Markov chain is ergodic and has a unique stationary distribution, denoted by $\pi_h$. Due to the choice of constant step-size $h$, it is not hard to see that the stationary distribution of the \eqref{RLMCdisc} is different from the stationary distribution $\pi$ of the Lanvegin diffusion in~\eqref{eq:langevindiff}, i.e $\pi_h\neq \pi$. Hence, in Proposition~\ref{biasRLMC},
we characterize the Wasserstein-2 distance between $\pi$ and $\pi_h$. 

Firstly, if $f\in \mathcal{C}^2(\mb{R}^d)$ and $f$ has a Lipschitz gradient with parameter $M$, then we can immediately see that the transition kernel of chain $(x_n)$, $P(x,y)\in \mathcal{C}(\mb{R}^d\times\mb{R}^d)$ is positive everywhere. Therefore, it's easy to obtain that the chain $(x_n)$ is $\mu^{\text{Leb}}$-irreducible and aperiodic. Given all this information, we can give a sufficient condition to make sure that the chain has a unique invariant probability measure, and it is ergodic.
    \begin{theorem}\label{ergodicityRLMC}
%        Assume the step-size $h$ is small. 
      Let the potential function $f$ satisfy part (a) of Assumption~\ref{assumptionsonf}, and let $\gamma_n:=h$ be small enough. Then the~\eqref{RLMCdisc} Markov chain $(x_n)$ has a unique stationary probability measure $\pi_h$, and for every $x\in \mb{R}^d$, we have
 \begin{align*}
        \sup_{A\in\mathcal{B}(\mb{R}^d)}|P^n(x,A)-\pi_h(A)|\to 0\ \ \ \ \ \ \text{as }\ \ \ \ \ n\to \infty.
\end{align*}
    \end{theorem}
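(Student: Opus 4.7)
The plan is to apply the Meyn-Tweedie framework for $\psi$-irreducible aperiodic Markov chains on $\mb{R}^d$. Since the excerpt already establishes $\mu^{\text{Leb}}$-irreducibility and aperiodicity of $(x_n)$ from the continuous positivity of the one-step transition density, two further ingredients suffice for unique invariance and total variation ergodicity: (i) every compact set is a small (hence petite) set, and (ii) a Foster-Lyapunov drift condition $PV \le \lambda V + b$ with $\lambda \in (0,1)$ and $b < \infty$ for some suitable Lyapunov function $V$. Given both, positive Harris recurrence follows and standard Meyn-Tweedie theory delivers a unique invariant probability measure $\pi_h$ together with the claimed total variation convergence.

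I would verify (i) by writing the one-step transition density as a mixture over the auxiliary randomness $(\alpha_{n+1}, U_{n+1}')$ of a Gaussian centered at $x - h\nabla f(x_{n+\frac{1}{2}})$ with covariance $2h I_d$. Continuity of $\nabla f$ combined with the Gaussian density gives joint continuity and strict positivity of $p(x,y)$ on $\mb{R}^d \times \mb{R}^d$, so for any compact $C$ there exist $\delta > 0$ and a compact $C'$ of positive Lebesgue measure with $p(x,y) \ge \delta$ on $C \times C'$; this minorizes $P(x,\cdot) \ge \delta\, \mu^{\text{Leb}}(\,\cdot \cap C')$ and shows that $C$ is a small set.

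For (ii), I would take $V(x) = 1 + \|x - x^*\|^2$, where $x^*$ is a minimizer of $f$, and compute
\[
\mb{E}[V(x_{n+1}) \mid \mc{F}_n] = V(x_n) - 2h\, \mb{E}\!\left[\langle x_n - x^*, \nabla f(x_{n+\frac{1}{2}}) \rangle \,\big|\, \mc{F}_n\right] + h^2\, \mb{E}\!\left[\|\nabla f(x_{n+\frac{1}{2}})\|^2 \,\big|\, \mc{F}_n\right] + 2hd.
\]
The middle term is handled by splitting $x_n - x^* = (x_{n+\frac{1}{2}} - x^*) + (x_n - x_{n+\frac{1}{2}})$, applying a restoring-force estimate on the first piece (using that the minimizer satisfies $\nabla f(x^*) = 0$ together with the convexity/growth supplied by Assumption~\ref{assumptionsonf}) and $M$-Lipschitzness of $\nabla f$ on the second piece, and bringing midpoint norms back to $\|x_n - x^*\|^2$ via Jensen. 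The quadratic term is bounded similarly using $M$-Lipschitzness. Collecting, one obtains $\mb{E}[V(x_{n+1}) \mid \mc{F}_n] \le \lambda V(x_n) + b$ with $\lambda < 1$ provided $h$ is taken sufficiently small.

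The main difficulty is in the drift computation: because $x_{n+\frac{1}{2}}$ depends on $x_n$ through an extra gradient step and carries its own Gaussian noise $U_{n+1}'$, several $\mc{O}(hM)$ cross-terms appear that couple $\|x_n - x^*\|^2$ with the auxiliary noise via the cross-covariance $\sqrt{\alpha_{n+1}} I_d$ between $U_{n+1}$ and $U_{n+1}'$. Dominating these cross-terms by the negative linear-in-$h$ restoring contribution---which is precisely what forces $h$ to be small---is the delicate bookkeeping step, and is also where the quantitative form of the assumption on $f$ enters. Once the drift inequality and minorization are established, a direct invocation of the Meyn-Tweedie theorem (e.g. Theorem~14.0.1) completes the proof.
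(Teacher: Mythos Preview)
Your proposal is correct and follows essentially the same Meyn--Tweedie route as the paper: irreducibility and aperiodicity from the everywhere-positive continuous transition density, a Foster--Lyapunov drift $PV\le\hat\alpha V+\hat\beta$ with $\hat\alpha<1$ for small $h$, and an appeal to standard ergodic theory. The only cosmetic difference is that the paper packages the drift via an abstract ``Condition~V'' with Lyapunov function $V(x)=f(x)-f(x^*)+1$ and then bounds expected return times to small sets before invoking Meyn--Tweedie, whereas you work directly with the quadratic $V(x)=1+\|x-x^*\|^2$; both arguments ultimately need the restoring force from strong convexity (not merely the Lipschitz gradient in part~(a)) to close the drift inequality.
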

    We next address the question: how far is $\pi_h$ from $\pi$? This question can be typically answered
    by a careful inspection on the proof of Proposition~\ref{W2RLMC}. However, for \eqref{RLMCdisc}, this is not the case, and requires using a different technique. Towards that, we derive an upper bound of $W_2(\pi,\pi_h)$ under the same assumptions in the previous theorem and the additional assumption that $f$ is also strongly convex with parameter $m$. 
\begin{proposition}\label{biasRLMC}
Let the potential function satisfy Assumption~\ref{assumptionsonf}, and let $\gamma_n:=h\in (0,\frac{2}{m+M})$ in the~\eqref{RLMCdisc} algorithm. Then, we have 
\begin{align}
%  \le& \frac{\frac{\sqrt{3}}{3}M^2h^2(2dh+4Mdh^2+8M^2dh^3)^{\frac{1}{2}}+ 2Mh(2dh+4Mdh^2+8M^2dh^3)^{\frac{1}{2}}}{mh-\frac{\sqrt{3}}{3}M^2h^2}\\
 W_2(\pi,\pi_h) \leq &~3\sqrt{dh}\frac{(1+2Mh)^2}{\kappa^{-1 }- Mh/\sqrt{3}}.%\ \text{\color{red} We should remove the expression above}.
\end{align}
\end{proposition}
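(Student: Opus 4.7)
The plan is to compare one step of the continuous Langevin diffusion started from $\pi$ against one step of the \eqref{RLMCdisc} chain started from $\pi_h$ via a synchronous coupling, then exploit the $W_2$-contractivity of the diffusion under strong convexity to solve for $W_2(\pi,\pi_h)$. Concretely, I would draw $(X_0,Y_0)$ from an optimal $W_2$-coupling of $(\pi,\pi_h)$, fix a standard Brownian motion $W$ on $[0,h]$, and let $X(t)$ and $Z(t)$ be the diffusions driven by $W$ with initial conditions $X_0$ and $Y_0$. On the discrete side I would take $\alpha\sim U[0,1]$ independent of $W$ and set $U':=W(\alpha h)/\sqrt{\alpha h}$, $U:=W(h)/\sqrt{h}$; these are standard Gaussians with cross-covariance $\sqrt{\alpha}\,I_d$, matching the prescription in \eqref{RLMCdisc}, so they produce a bona fide one-step iterate $Y_1\sim\pi_h$.

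Since $X(h)\sim\pi$ and $Y_1\sim\pi_h$, we have $W_2(\pi,\pi_h)\le (\mathbb{E}\|X(h)-Y_1\|^2)^{1/2}$. Inserting $Z(h)$ and applying the $L^2$ triangle inequality yields
\[
W_2(\pi,\pi_h)\le \big(\mathbb{E}\|X(h)-Z(h)\|^2\big)^{1/2}+\big(\mathbb{E}\|Z(h)-Y_1\|^2\big)^{1/2}.
\]
Under synchronous coupling, $\tfrac{d}{dt}\|X-Z\|^2=-2\langle X-Z,\nabla f(X)-\nabla f(Z)\rangle$ together with $m$-strong convexity gives $\mathbb{E}\|X(h)-Z(h)\|^2\le e^{-2mh}W_2^2(\pi,\pi_h)$. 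Setting $D(h):=(\mathbb{E}\|Z(h)-Y_1\|^2)^{1/2}$ and rearranging yields $W_2(\pi,\pi_h)\le D(h)/(1-e^{-mh})$, and a short calculation translates this prefactor, on the range $h\in(0,2/(m+M))$, into the denominator $\kappa^{-1}-Mh/\sqrt{3}$ appearing in the claim.

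The substance of the proof is estimating $D(h)$. Because the Brownian increments cancel under the coupling, $Z(h)-Y_1 = h\nabla f(y_{1/2})-\int_0^h \nabla f(Z(s))\,ds$, which I would split as $A+B$ with
\[
A := h\big[\nabla f(y_{1/2})-\nabla f(Z(\alpha h))\big],\qquad B := h\nabla f(Z(\alpha h))-\int_0^h\nabla f(Z(s))\,ds.
\]
The defining identity of the randomized midpoint, $\mathbb{E}_\alpha[h\nabla f(Z(\alpha h))]=\int_0^h \nabla f(Z(s))\,ds$, makes $B$ conditionally mean-zero given the Brownian path; Jensen together with $M$-Lipschitz gradients then bounds $\mathbb{E}\|B\|^2\le M^2 h^2\,\mathbb{E}\|Z(\alpha h)-Z(\beta h)\|^2$ for independent uniform $\alpha,\beta$, and a standard short-time diffusion estimate controls the right-hand side by $\mathcal{O}(hd)$. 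For $A$, one uses that $y_{1/2}$ is the Euler predictor of $Z(\alpha h)$, so that $y_{1/2}-Z(\alpha h)=\int_0^{\alpha h}[\nabla f(Z(s))-\nabla f(Y_0)]\,ds$, and two more applications of Lipschitzness plus the diffusion estimate on $\mathbb{E}\|Z(s)-Y_0\|^2$ give $\mathbb{E}\|A\|^2$ a further factor of $h^2$ smaller than $\mathbb{E}\|B\|^2$, so $A$ is absorbed into the dominant term.

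The main obstacle is that the short-time diffusion estimate requires a moment bound on $\|\nabla f\|^2$ under $\pi_h$, and no such bound is available a priori. A preliminary Lyapunov-type estimate for $\pi_h$, derived from the one-step drift of \eqref{RLMCdisc} together with the ergodicity provided by Theorem~\ref{ergodicityRLMC}, is therefore needed to control $\mathbb{E}_{\pi_h}\|\nabla f(x)\|^2$ by a quantity depending only on $d$, $m$, and $M$. I expect the growth factor from this drift bound to account for the $(1+2Mh)^2$ numerator appearing in the final inequality.
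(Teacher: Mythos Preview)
Your coupling scheme is sound and would yield the correct $\mathcal{O}(\sqrt{h})$ order, but the route differs from the paper's in two ways that also prevent you from recovering the stated constants. First, you extract contraction from the continuous diffusion ($\|X(h)-Z(h)\|_{L^2}\le e^{-mh}W_2$), whereas the paper uses a one-step \emph{discrete} contraction: it writes
\[
x_n^*(h)-x_{n+1}=\big[(x_n^*-x_n)-h\big(\nabla f(x_{n+\frac12}+x_n^*-x_n)-\nabla f(x_{n+\frac12})\big)\big]+(\text{remainder})
\]
and bounds the bracket by $(1-mh)\|x_n^*-x_n\|_{L^2}$ via the usual gradient-step contraction for $h<2/(m+M)$; the $Mh/\sqrt{3}$ in the denominator then comes from a piece of the remainder that is itself proportional to $\|x_n^*-x_n\|_{L^2}$, not from $1/(1-e^{-mh})$ as you suggest. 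Second, and more consequential, you run the auxiliary diffusion $Z$ from the $\pi_h$-point $Y_0$, which forces the detour through a Lyapunov bound on $\mathbb{E}_{\pi_h}\|\nabla f\|^2$ that you correctly flag. The paper sidesteps this entirely: its auxiliary diffusion $x_n^*(\cdot)$ is launched from the $\pi$-point $x_n^*$, so every short-time increment is controlled through Lemma~\ref{TempLemma} together with the elementary bound $\mathbb{E}_\pi\|\nabla f\|^2\le Md$ (integration by parts), and no a priori information about $\pi_h$ is required. The $(1+2Mh)^2$ numerator is a direct consequence of the estimate $(2dh+4Mdh^2+8M^2dh^3)^{1/2}$ from that lemma with $\|\nabla f(x_n^*)\|_{L^2}^2\le Md$, not of any $\pi_h$-drift growth factor. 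In short: your argument is valid but longer, and would produce a comparable bound with different explicit constants; the paper's decomposition is arranged precisely so that only $\pi$-moments are needed.
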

\begin{remark}
The above proposition shows that the order of the bias between the stationary distribution of the Langevin diffusion and that of the~\eqref{RLMCdisc} chain is of the order $\mathcal{O}(\sqrt{h})$. 
\end{remark}

\subsection{Wasserstein-2 rates and CLT with Decreasing Step-size}\label{sec:RLMCDecreasingStepsize}
In this part, we consider the \eqref{RLMCdisc} algorithm with a fast decreasing time step sequence $(\gamma_n)$ and establish a convergence rate in $W_2$ distance as well as a CLT for the numerical integration~\eqref{eq:expectation}. 
\begin{proposition}\label{W2RLMC-fast}
Suppose $f$ satisfies Assumption~\ref{assumptionsonf}. Let $x_0:=\arg\min_x f(x)$ and 
$\gamma_{n+1}\le \frac{m}{m^2+M^2(33+n)}$. After running~\eqref{RLMCdisc} algorithm for $K= \mathcal{O}\left({\kappa^{1.5}}/{\epsilon}\right)$ steps, we obtain $W_2(\nu_K,\pi)\le \epsilon \sqrt{{d}/{m}}$,
    where $\nu_K$ is the probability distribution of $x_K$.
    \end{proposition}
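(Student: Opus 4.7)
The plan is to extend the coupling-based argument of Proposition~\ref{W2RLMC} to accommodate the variable step-size sequence $(\gamma_{n+1})$.

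\textbf{Coupling.} First I would set up a synchronous coupling between the RLMC iterates $(x_n)$ and the continuous Langevin diffusion $(y(t))_{t\ge 0}$ of~\eqref{eq:langevindiff}, driven by a common Brownian motion $W$ and initialized from $y(0)\sim\pi$. The increments $U_{n+1}'$ and $U_{n+1}$ would be defined as standardized Brownian increments on $[\Gamma_n,\Gamma_n+\alpha_{n+1}\gamma_{n+1}]$ and $[\Gamma_n,\Gamma_{n+1}]$, respectively; a direct variance computation then confirms that the prescribed $\sqrt{\alpha_{n+1}}I_d$ cross covariance is realized. Stationarity of $\pi$ under the diffusion yields $y_n:=y(\Gamma_n)\sim\pi$ for every $n$, so $W_2^2(\nu_n,\pi)\le \mathbb{E}\|x_n-y_n\|^2$.

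\textbf{One-step recursion.} Writing the diffusion in integral form over $[\Gamma_n,\Gamma_{n+1}]$ and subtracting the RLMC update gives
\begin{align*}
x_{n+1}-y_{n+1}=(x_n-y_n)-\gamma_{n+1}\bigl(\nabla f(x_{n+1/2})-\nabla f(y(\Gamma_n+\alpha_{n+1}\gamma_{n+1}))\bigr)+R_{n+1},
\end{align*}
where $R_{n+1}:=\int_0^{\gamma_{n+1}}\nabla f(y(\Gamma_n+s))\,ds-\gamma_{n+1}\nabla f(y(\Gamma_n+\alpha_{n+1}\gamma_{n+1}))$ is mean-zero conditional on the path $y(\Gamma_n+\cdot)$ thanks to the uniform law of $\alpha_{n+1}$. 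Squaring, taking expectation, and invoking (i) $m$-strong convexity on the inner product $\langle x_{n+1/2}-y(\Gamma_n+\alpha_{n+1}\gamma_{n+1}),\nabla f(x_{n+1/2})-\nabla f(y(\Gamma_n+\alpha_{n+1}\gamma_{n+1}))\rangle$, (ii) $M$-Lipschitzness to bound the Euler mismatch between $x_n-y_n$ and $x_{n+1/2}-y(\Gamma_n+\alpha_{n+1}\gamma_{n+1})$, and (iii) It\^o isometry with stationary moment bounds on $\nabla^2 f$ to estimate $\mathbb{E}\|R_{n+1}\|^2$, should produce a recursion of the form
\begin{align*}
\mathbb{E}\|x_{n+1}-y_{n+1}\|^2\le (1-c_1 m\gamma_{n+1})\,\mathbb{E}\|x_n-y_n\|^2+c_2 M^2 d\,\gamma_{n+1}^{3},
\end{align*}
where the step-size condition $\gamma_{n+1}\le m/(m^2+M^2(33+n))$ is precisely what absorbs the $O(M^2\gamma_{n+1}^2)$ second-order correction into the linear contraction.

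\textbf{Telescoping and conclusion.} Since $x_0=\arg\min f$, strong convexity gives $\mathbb{E}\|x_0-y_0\|^2\le d/m$. Unrolling the recursion,
\begin{align*}
\mathbb{E}\|x_n-y_n\|^2\le \frac{d}{m}\prod_{k=1}^n(1-c_1 m\gamma_k)+c_2 M^2 d\sum_{j=1}^n \gamma_j^{3}\prod_{k=j+1}^n(1-c_1 m\gamma_k),
\end{align*}
and substituting the explicit form of $\gamma_k$ lets me estimate both sums; demanding the right-hand side to be at most $\epsilon^2 d/m$ yields the claimed complexity $K=\mathcal{O}(\kappa^{1.5}/\epsilon)$.

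\textbf{Main obstacle.} The main technical hurdle is the joint estimation of the two sums in the telescoped bound. Because $\gamma_k\sim m/(M^2 k)$, the contraction factor $\prod_k(1-c_1 m\gamma_k)$ decays only polynomially in $n$ with exponent $\Theta(1/\kappa^2)$, so a pure contraction bound on the transient term alone would require superpolynomially many steps in $\kappa$. The additive noise sum $\sum_j\gamma_j^{3}\prod_{k>j}(1-c_1m\gamma_k)$ must therefore be balanced carefully against the transient decay; the specific scaling $m/M^2$ and the constant $33$ in the step-size formula are calibrated exactly for this balance to close the induction uniformly in $n$ and deliver the $\kappa^{1.5}$ constant together with the polynomial $1/\epsilon$ dependence.
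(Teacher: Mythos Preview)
Your coupling setup and the target one-step recursion
\[
\mathbb{E}\|x_{n+1}-y_{n+1}\|^2\le (1-c_1 m\gamma_{n+1})\,\mathbb{E}\|x_n-y_n\|^2+c_2 M^2 d\,\gamma_{n+1}^{3}
\]
are essentially what the paper arrives at, though the paper reaches it through a different decomposition: it introduces the one-step exact diffusion $x_{n+1}^*$ started from $x_n$, splits $x_{n+1}-y_{n+1}=(x_{n+1}-x_{n+1}^*)+(x_{n+1}^*-y_{n+1})$, uses the continuous-time contraction $\|x_{n+1}^*-y_{n+1}\|_{L^2}\le e^{-m\gamma_{n+1}}\|x_n-y_n\|_{L^2}$, and separately bounds $\mathbb{E}\|\mathbb{E}_\alpha x_{n+1}-x_{n+1}^*\|^2$ and $\mathbb{E}\|x_{n+1}-x_{n+1}^*\|^2$. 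To close the recursion it also replaces $\|\nabla f(x_n)\|_{L^2}^2$ by $2Md+2M^2\|x_n-y_n\|_{L^2}^2$ via stationarity of $y_n$; you gesture at this under ``stationary moment bounds,'' but it is an essential step that feeds the $\kappa$-dependence into the constants.

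The genuine gap is in your endgame. You correctly diagnose the obstacle: with $\gamma_k\sim m/(M^2 k)$ the product $\prod_k(1-c_1m\gamma_k)$ decays only like $n^{-\Theta(1/\kappa^2)}$, so a direct telescoped bound on the transient term cannot yield the $\kappa^{1.5}/\epsilon$ complexity---and your appeal to the constant $33$ being ``calibrated exactly for this balance'' is not a resolution. The paper does \emph{not} telescope. It runs a two-phase schedule: first a constant step $h=1/(m+34M)$ for $K_1$ iterations, giving geometric contraction $(1-mh/2)^{K_1}$ that drives the error down to the steady-state level $O(\kappa(\kappa+33)Mdh^2)$; then it switches to $\gamma_{n+1}=1/(m+34M+\lambda(n-K_1))$ and proves by \emph{induction} that the invariant $\mathbb{E}\|x_n-y_n\|^2\le 5\kappa(\kappa+33)Md\,\gamma_n^2$ persists for all $n\ge K_1$. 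The induction step reduces to a quadratic inequality in $\lambda$ that always admits a positive solution. Setting $5\kappa(\kappa+33)Md\,\gamma_n^2\le \epsilon^2 d/m$ then gives $n=O(\kappa^{3/2}/\epsilon)$. The warm-up phase plus the inductive invariant is the idea you are missing.
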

    \begin{remark}
      There are two aspects of this result. The first aspect is rather standard; there is no logarithmic factor in $1/\epsilon$ compared to the result in Proposition~\ref{W2RLMC}. Similar phenomenon has been previously observed for the LMC algorithm~\cite{dalalyan2019user}. The second aspect is that we never obtain the $\mathcal{O}(1/\epsilon^{2/3})$ term as in Proposition~\ref{W2RLMC}, with the constant step-size choice. This is not an artifact of our analysis. This is due to the fact that with this choice of decreasing step-size, we reduce the bias much more at the expense of slightly increased variance. However, as we demonstrate next, this choice of decreasing step-size is crucial for obtaining an unbiased CLT for numerical integration. %with~\eqref{RLMCdisc}. 
\end{remark}

As the main contribution of this section, we characterize the fluctuations of~\eqref{RLMCdisc} when it is used for computing the integral $\int_{\mb{R}^d} \varphi d\pi$ for a $\pi$-integrable function $\varphi$. Choosing the Langevin diffusion in~\eqref{eq:langevindiff} with the stationary distribution $\pi$,
we have by Theorem~\ref{ergodicityRLMC} that it is ergodic, and $
    \lim_{t\to +\infty}\frac{1}{t}\int_0^t \varphi(X(s))ds=\int_{\mb{R}^d} \varphi d\pi\coloneqq\pi(\varphi)$, almost surely. Motivated by this, we first discretize the diffusion using~\eqref{RLMCdisc} and then compute a discrete analogue of the average. The procedure consists of two successive phases: 
    \begin{enumerate}[leftmargin=20pt,noitemsep]
    \item [(a)] \textbf{Discretization:}
      The \eqref{RLMCdisc} algorithm is run with a step size sequence $(\gamma_n)$ satisfying
%      Compute the discretization of the Langevin diffusion, as described in~\eqref{RLMCdisc}.
%        where $X_0\in L_{\mb{R}^d}^{0}(\Omega,\mathcal{A},\mb{P})$ and $(U_n)$ is an $\mb{R}^q$-valued white noise defined on a probability space $(\Omega,\mathcal{A},\mb{P})$, independent of $X_0$. 
      for all $n$, $\gamma_n>0,\ \lim_{n\to +\infty}\gamma_n=0$, and $\lim_{n\to +\infty}\Gamma_n=+\infty$,
      where $\Gamma_n:=\sum_{k=1}^n \gamma_k$.
    \item [(b)] \textbf{Averaging:} Using the \eqref{RLMCdisc} iterates $(x_n)$, construct a weighted empirical measure via the same weight sequence ${\gamma}\coloneqq (\gamma_n)$: % satisfying for all $n$,  $\ea_n\ge{0}$ and      $\lim_{n\to +\infty}H_n=+\infty$, where $H_n\coloneqq\sum_{k=1}^n \eta_k$. 
 For every $n\ge{1}$ and every $\omega\in\Omega$, set  
    \[
    \pi_{n}^{\gamma}(\omega, dx):=\frac{\gamma_1 \delta_{x_0(\omega)}+\dots+\gamma_{k+1}\delta_{x_k(\omega)}+\dots+\gamma_n\delta_{x_{n-1}(\omega)}}{\gamma_1+\dots+\gamma_n},
    \]
    and use $\pi_{n}^{\gamma}(\omega,\varphi)\!:=\!\int_{\mb{R}^d}\varphi\pi_{n}^{\gamma}(\omega,dx)\!=\!\frac{1}{\Gamma_n}\!\sum_{k=1}^n \gamma_k\varphi(x_{k-1}(\omega))$ to estimate the expectation~\eqref{eq:expectation}. %\murat{can we get rid of $\omega$ and $H_n$ notations? }
  \end{enumerate}

  For numerical purposes, for a fixed function $\varphi$, $\pi_{n}^{\gamma}(\omega,\varphi)$ can be recursively computed as follows:
  \[ \pi_{n+1}^{\gamma}(\omega,\varphi)=\pi_{n}^{\gamma}(\omega,\varphi)+\Tilde{\gamma}_{n+1}\left(\varphi(x_n(\omega))-\pi_{n}^{\gamma}(\omega,\varphi)\right)\ \ \ \text{with}\ \Tilde{\gamma}_{n+1}:=\frac{\gamma_{n+1}}{\Gamma_{n+1}}.
  \]
% First note that the generator of the Langevin diffusion in~\eqref{eq:langevindiff} is denoted by $\mathcal{A}$ and is given by $\mathcal{A}\phi:=-\langle \nabla f ,\nabla \phi\rangle+\Delta \phi$, where $\phi:\mathbb{R}^d\to \mathbb{R}$; see~\cite{lelievre2016partial} for more details. Also note that we have $\mb{E}[U_k^{\otimes 3}]=0$ as $U_k\sim N(0,I_d)$. 
  We now provide the main result of this section, a central limit theorem for the algorithm~\eqref{RLMCdisc} when it is used to compute integrals of the form in~\eqref{eq:expectation}.
\begin{theorem}\label{RLMCCLT}
  Let $\pi$ be such that its potential $f$ satisfies Assumption~\ref{assumptionsonf}. Consider a test function $\varphi:\mathbb{R}^d \to \mathbb{R}$ of the form $\varphi = \mathcal{A} \phi$ for some function $\phi:\mathbb{R}^d \to \mathbb{R}$, where $\mathcal{A}$ denotes the generator of the diffusion~\eqref{eq:langevindiff}, i.e.,
  $\mathcal{A}\phi:=-\langle \nabla f ,\nabla \phi\rangle+\Delta \phi$.
  Define $\hat{\gamma}_n \coloneqq \frac{1}{\sqrt{\Gamma_n}}\sum_{k=1}^n \gamma_k^2$ and let $\hat{\gamma}_\infty = \lim_{n\to\infty}\hat\gamma_n$. Then for all $\phi \in \mathcal{C}^4(\mathbb{R}^d)$ with $D^2\phi$, $D^3\phi$ being bounded, and $D^4\phi$ being bounded and Lipschitz, and $\sup_{x\in\mathbb{R}^d} \|\nabla \phi(x)\|^2/(1+\|x\|^2) < +\infty$, we have the following central limit theorem for the numerical integration computed via~\eqref{RLMCdisc}:
  \begin{itemize}[itemsep=1pt]
    % \item [1)] When $\frac{1}{\sqrt{\Gamma_n}}\sum_{k=1}^n \gamma_k^{\frac{3}{2}}=0$: $\sqrt{\Gamma_n}\nu_n^{\gamma}(\mathcal{A}\phi)\to \mathcal{N}(0, 2\int_{\mb{R}^d}|\nabla \phi(x)|^2 \pi(dx))$.
    % \item [2)] When $\frac{1}{\sqrt{\Gamma_n}}\sum_{k=1}^n \gamma_k^{\frac{3}{2}}=\Tilde{\gamma}\in (0,+\infty]$. Then
    %   \begin{equation*}
    %     \begin{aligned}
    %       \sqrt{\Gamma_n}\nu_n^{\gamma}(\mathcal{A}\phi)&\to \mathcal{N}(\Tilde{\gamma}m,2\int_{\mb{R}^d}|\nabla \phi(x)|^2 \pi(dx) ),\ \ \ \ \ \text{when }\Tilde{\gamma}<+\infty \\
    %       \frac{\Gamma_n}{\Gamma_n^{(\frac{3}{2}})}\nu_n^{\gamma}(\mathcal{A}\phi) &\to  m\ \ \ \ \ ,\ \ \ \ \ \ \ \ \ \ \ \ \ \ \ \text{when }\Tilde{\gamma}=+\infty
    %     \end{aligned}
    %   \end{equation*}
    %   where $m=-\frac{\sqrt{2}}{3}\int_{\mb{R}^d}\int_{\mb{R}^d}D^3\phi(x)u^{\otimes} \mu(du)\pi(dx)$. Note that $m=0$ when $\mb{E}[U_n^{\otimes 3}]=0$.
  \item[(i)] If $\hat{\gamma}_\infty =0$,
    then $\sqrt{\Gamma_n}\pi_n^{\gamma}(\varphi)  \stackrel{d}{\to} \mathcal{N}(0, 2\int_{\mb{R}^d}\|\nabla \phi(x)\|^2 \pi(dx))$,

  \item [(ii)] If $\hat{\gamma}_\infty \in (0,+\infty)$, then
    $\sqrt{\Gamma_n}\pi_n^{\gamma}(\varphi) \stackrel{d}{\to} \mathcal{N}(\varrho~\hat{\gamma}_\infty,2\int_{\mb{R}^d}\|\nabla \phi(x)\|^2 \pi(dx) )$,

  \item[(iii)] If $\hat{\gamma}_\infty =+\infty$, then $\frac{\sqrt{\Gamma_n}}{\hat\gamma_n}\pi_n^{\gamma}(\varphi)  \stackrel{p}{\to}  \varrho$,
    \end{itemize}
    where the mean $\varrho$ is given as 
    \begin{equation*}
      \begin{aligned}
        \varrho=&\smallint\smallint\langle D^3\phi(x),\nabla f(x)  \otimes u \otimes u\rangle\mu(du)\pi(dx)
        -\tfrac{1}{2}\smallint\langle D^2f(x), \nabla \phi(x)\otimes\nabla f(x)\rangle \pi(dx) \\
        & +\tfrac{1}{2}\smallint\smallint\langle D^3 f(x), \nabla \phi(x) \otimes u \otimes u \rangle \mu(du)\pi(dx)
        -\tfrac{1}{2}\smallint\langle D^2\phi(x), \nabla f(x) \otimes \nabla f(x) \rangle\pi(dx)\\
        & +\int_{\mb{R}^d} trace(D^2\phi(x)^2) \pi(dx)-\tfrac{1}{6}\smallint\smallint\langle D^4\phi(x), u^{\otimes4} \rangle\mu(du)\pi(dx),
      \end{aligned}
    \end{equation*}
    and $\mu$ is the distribution for a $d$-dimensional standard Gaussian measure.
    %% \item [4)'] For LMC, when $\mb{E}[U_n^{\otimes 3}]=0$ and $\frac{1}{\sqrt{\Gamma_n}}\sum_{k=1}^n \gamma_k^2=\Bar{\gamma}\in (0,+\infty]$:
    %   \begin{equation*}
    %     \begin{aligned}
    %       \sqrt{\Gamma_n}\nu_n^{\gamma}(\mathcal{A}\phi)&\to \mathcal{N}(\Bar{\gamma}\Bar{m},2\int_{\mb{R}^d}|\nabla \phi(x)|^2 \pi(dx) ),\ \ \ \ \ \text{when }\Bar{\gamma}<+\infty \\
    %       \frac{\Gamma_n}{\Gamma_n^{(2)}}\nu_n^{\gamma}(\mathcal{A}\phi) &\to  \Bar{m}\ \ \ \ \ ,\ \ \ \ \ \ \ \ \ \ \ \ \ \ \ \text{when }\Bar{\gamma}=+\infty
    %     \end{aligned}
    %   \end{equation*}
    %   where
    %   \begin{equation*}
    %     \begin{aligned}
    %       \Bar{m}&=\int_{\mb{R}^d}\int_{\mb{R}^d}\langle D^3\phi(x);\nabla f(x), u^{\otimes2}\rangle\mu(du)\pi(dx)
    %       -\frac{1}{2}\int_{\mb{R}^d}D^2\phi(x)\nabla f(x)^{\otimes2}\pi(dx)\\
    %       &\ -\frac{1}{6}\int_{\mb{R}^d}\int_{\mb{R}^d}D^4\phi(x)u^{\otimes4}\mu(du)\pi(dx)
    %     \end{aligned}
    %   \end{equation*}
\end{theorem}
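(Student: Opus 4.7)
\textbf{Proof plan for Theorem~\ref{RLMCCLT}.} The plan follows the Lamberton--Pag\`es approach to CLTs for weighted empirical measures of Euler-type schemes~\cite{lamberton2002recursive}, adapted to the randomized midpoint structure. Starting from a fourth-order Taylor expansion of $\phi(x_k)-\phi(x_{k-1})$ in the increment $\Delta_k:=x_k-x_{k-1}$, I would substitute the~\eqref{RLMCdisc} update, further Taylor-expand $\nabla f(x_{k-\frac12})$ around $x_{k-1}$ to second order, and take the $\mathcal{F}_{k-1}$-conditional expectation. Using $\mathbb{E}[\alpha_k]=1/2$, $\mathbb{E}[\alpha_k^2]=1/3$, the Gaussian moments of $U_k,U_k'$, and the cross-covariance $\mathbb{E}[U_k(U_k')^\top]=\sqrt{\alpha_k}\,I_d$, and collecting terms through order $\gamma_k^2$, one obtains an expansion of the form
\begin{equation*}
\mathbb{E}[\phi(x_k)\mid\mathcal{F}_{k-1}] - \phi(x_{k-1}) \;=\; \gamma_k\,\mathcal{A}\phi(x_{k-1}) \,-\, \gamma_k^2\,\Lambda(x_{k-1}) \,+\, r_k,
\end{equation*}
where $\Lambda$ is a pointwise integrand to be identified and the remainder obeys $|r_k|\le C\gamma_k^{5/2}(1+\|x_{k-1}\|^q)$ under Assumption~\ref{assumptionsonf} and the boundedness/Lipschitz hypotheses on $D^2\phi,\dots,D^4\phi$.

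Matching $\pi(\Lambda)$ to $\varrho$ is the technical core of the proof. Each of the six terms in $\varrho$ arises from a specific Taylor contribution: the $\langle D^3\phi,\nabla f\otimes u\otimes u\rangle$ term from $\tfrac16\langle D^3\phi,\Delta_k^{\otimes3}\rangle$ combining one drift factor $-\gamma_k\nabla f$ with two factors of $\sqrt{2\gamma_k}U_k$; the $\langle D^2f,\nabla\phi\otimes\nabla f\rangle$ term from differentiating $\nabla f$ at the midpoint via $\mathbb{E}[\alpha_k]=1/2$; the $\langle D^3f,\nabla\phi\otimes u\otimes u\rangle$ term from the second-order piece of $\nabla f(x_{k-\frac12})$ with the Gaussian variance of $\sqrt{2\alpha_k\gamma_k}U_k'$ weighted by $\mathbb{E}[\alpha_k^2]=1/3$; the $\langle D^2\phi,\nabla f\otimes\nabla f\rangle$ term from the Hessian Taylor term applied to the deterministic drift part of $\Delta_k$; the $\text{trace}(D^2\phi^2)$ term from the cross-covariance between $U_k$ and $U_k'$ once integrated against $\pi$; and finally the $\langle D^4\phi,u^{\otimes4}\rangle$ term from $\tfrac1{24}\langle D^4\phi,\Delta_k^{\otimes4}\rangle$ evaluated at the leading stochastic increment $\sqrt{2\gamma_k}U_k$ with Gaussian fourth moments. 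The bookkeeping is mechanical but intricate; I would cross-check by verifying $\pi(\mathcal{A}\phi)=0$ kills all $O(\gamma_k)$ contributions after integration.

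Given $\Lambda$, set $M_n:=\sum_{k=1}^n\bigl(\phi(x_k)-\mathbb{E}[\phi(x_k)\mid\mathcal{F}_{k-1}]\bigr)$, a martingale with respect to $(\mathcal{F}_n)$. Summing the expansion and dividing by $\sqrt{\Gamma_n}$ yields
\begin{equation*}
\sqrt{\Gamma_n}\,\pi_n^\gamma(\varphi) \;=\; -\tfrac{M_n}{\sqrt{\Gamma_n}} \,+\, \tfrac{\phi(x_n)-\phi(x_0)}{\sqrt{\Gamma_n}} \,-\, \hat\gamma_n\cdot\tfrac{1}{\Gamma_n^{(2)}}\sum_{k=1}^n\gamma_k^2\,\Lambda(x_{k-1}) \,+\, \tfrac{1}{\sqrt{\Gamma_n}}\sum_{k=1}^n r_k.
\end{equation*}
Using the growth hypothesis $\|\nabla\phi(x)\|^2\lesssim 1+\|x\|^2$ together with a uniform moment bound $\sup_n\mathbb{E}\|x_n\|^q<\infty$ (from a Lyapunov drift argument on~\eqref{RLMCdisc} under Assumption~\ref{assumptionsonf}), the boundary and remainder terms are $o_P(1)$, since $\sum_k\gamma_k^{5/2}=o(\sqrt{\Gamma_n})$ under the standing step-size hypothesis. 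The weighted ergodic theorem (extending Theorem~\ref{ergodicityRLMC} in the Lamberton--Pag\`es sense) yields $\tfrac{1}{\Gamma_n^{(2)}}\sum_k\gamma_k^2\,\Lambda(x_{k-1})\to\pi(\Lambda)=-\varrho$ almost surely. The Hall--Heyde martingale CLT applied to $M_n/\sqrt{\Gamma_n}$, whose predictable bracket converges to $2\int\|\nabla\phi\|^2\,d\pi$ (since the leading conditional variance of $\phi(x_k)-\mathbb{E}[\phi(x_k)\mid\mathcal{F}_{k-1}]$ is $2\gamma_k\|\nabla\phi(x_{k-1})\|^2$), then gives the Gaussian limit. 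Combining these three pieces produces (i) when $\hat\gamma_\infty=0$ (no bias), (ii) when $\hat\gamma_\infty\in(0,\infty)$ (bias $\varrho\hat\gamma_\infty$), and (iii) when $\hat\gamma_\infty=\infty$ where rescaling by $\sqrt{\Gamma_n}/\hat\gamma_n$ kills the martingale noise and yields convergence in probability to $\varrho$. The main obstacle I anticipate is establishing the weighted ergodic theorem for the~\eqref{RLMCdisc} chain under decreasing step-size: the classical Lamberton--Pag\`es results are stated for one-stage Euler schemes, so extending both the Lyapunov drift estimates and the convergence of the $\gamma_k^2$-weighted occupation measures to $\pi$ requires careful adaptation to the two-stage midpoint update with its auxiliary $\alpha_k$-randomness.
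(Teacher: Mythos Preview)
Your proposal is correct and follows essentially the same Lamberton--Pag\`es strategy as the paper's proof: a fourth-order Taylor expansion of $\phi(x_k)-\phi(x_{k-1})$ with $\nabla f(x_{k-\frac12})$ further expanded about $x_{k-1}$, a martingale CLT for the leading $\sqrt{2\gamma_k}\,\nabla\phi(x_{k-1})\cdot U_k$ contribution, and weak convergence of the $\gamma_k^2$-weighted empirical measures (which the paper proves as a separate lemma by adapting the Lyapunov/tightness arguments of~\cite{lamberton2002recursive} to the two-stage midpoint update---exactly the obstacle you anticipate). One minor bookkeeping slip: the $\tfrac12\langle D^3f,\nabla\phi\otimes u\otimes u\rangle$ coefficient arises from $\mathbb{E}[\alpha_k]=\tfrac12$, not $\mathbb{E}[\alpha_k^2]=\tfrac13$, since only a single factor of $\alpha_k$ appears in $-\gamma_k\cdot\tfrac12 D^3f\cdot(2\alpha_k\gamma_k){U_k'}^{\otimes 2}$.
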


\begin{remark}
  First note that a CLT for the Euler discretization of Langevin diffusion follows from~\cite[Thm.~10]{lamberton2002recursive}. The rates of the CLT established in Theorem~\ref{RLMCCLT} are similar to that case, with only the bias term $\rho$ being different. Specifically, following the same computation in~\cite{lamberton2002recursive}, we see that the optimal rate with polynomially decaying step-size choice $\gamma_k = k^{-\alpha}$, for some $\alpha>0$, is $\mathcal{O}(n^{1/3})$. But in this case, the established CLT is biased. However, for any $0< \alpha <1/3$, we obtain an unbiased CLT as well. Hence, although the~\eqref{RLMCdisc} chain provides rate improvements for sampling (with respect to $W_2$ distance), as demonstrated in~\cite{shen2019randomized} and in Proposition~\ref{W2RLMC}, it does not seem to provide any improvements for CLT. In retrospect, this is expected as the rate improvements for sampling is achieved by the choice of constant step-size for which it is not possible to establish even a nearly unbiased CLT.
\end{remark}

The class of test functions that the above CLT can cover is intimately related to the solution of the \emph{Stein equation} (or Poisson equation) $\varphi = \mathcal{A} \phi$. Given $\varphi$, there is an explicit characterization of $\phi$ that solves the Stein's equation, and various properties of $\varphi$ are translated to $\phi$~\cite{gorham2016measuring, erdogdu2018global}. 

%%% Local Variables:
%%% mode:latex
%%% TeX-master: "./main_neurips20.tex" 
%%% End:

\section{Results for the Underdamped Langevin Diffusion}\label{sec:HMCresults}
The underdamped Langevin diffusion is given by
 \begin{align}\label{underdampedlangevidiff}
        d\begin{bmatrix}
        x(t) \\
        v(t)
        \end{bmatrix}
        =\begin{bmatrix}
        v(t) \\
        -(\beta v(t)+u\nabla f(x(t))) 
        \end{bmatrix}
        dt+\sqrt{2\beta u}\begin{bmatrix}
        {0}_d \\
        {I}_d
        \end{bmatrix} dW(t),
    \end{align}
    where $\beta>0$ is the friction coefficient and $u>0$ is the inverse mass. For simplicity, we will consider $\beta=2$ in the later text. Under mild conditions, it is well-known that the continuous-time Markov process $(x(t),v(t))$ is positive recurrent, and its invariant distribution is given by $\nu(x,v)\propto \exp\big\{-f(x)-\tfrac{1}{2u}\lv{v}\rv^2\big\},\ x\in\mathbb{R}^d,\ v\in\mathbb{R}^d$. This diffusion, with an additional Hamiltonian component, has gathered a lot of attention recently due to its improved convergence properties~\cite{dalalyan2018sampling, cheng2017underdamped, shen2019randomized, livingstone2019geometric,durmus2017convergence} and empirical performance~\cite{neal2011mcmc,chen2014stochastic}.
%    If we define $b(x,v):=[v, -2v-u\nabla f(x)]^T$, then for $\beta=2$,
%    the diffusion can be compactly written as a special case of an It\^{o} diffusion, \murat{Why state the below?}
%\begin{align*}
%d\begin{bmatrix}
%        x(t) \\
%        v(t)
%        \end{bmatrix}
%        =b(x(t),v(t))
%        dt+2\sqrt{u}\begin{bmatrix}
%        {0}_d \\
%        {I}_d
%        \end{bmatrix} dW(t).
%\end{align*}

The \emph{randomized midpoint discretization} of the underdamped Langevin diffusion~\eqref{underdampedlangevidiff} is given as:
       \begin{align}
       \nonumber
         \hspace{-1.3in} x_{n+\frac{1}{2}}&=x_n+{\scriptstyle \tfrac{1}{2}(1-e^{-2\alpha_{n+1}\gamma_{n+1}})}v_n
         -\tfrac{u}{2} \left(\alpha_{n+1}\gamma_{n+1}\!-\!
           {\scriptstyle\tfrac{1}{2}(1-e^{-2\alpha_{n+1}\gamma_{n+1}})}\right)
         \nabla f(x_n)+\sqrt{u}\sigma_{n+1}^{(1)}U_{n+1}^{(1)},\\ \nonumber
         x_{n+1}&=x_n+\tfrac{1}{2}{\scriptstyle (1-e^{-2\gamma_{n+1}})}v_n-\tfrac{u}{2}\gamma_{n+1}{\scriptstyle(1-e^{-2(1-\alpha_{n+1})\gamma_{n+1}})} 
         \nabla f(x_{n+\frac{1}{2}})+\sqrt{u}\sigma_{n+1}^{(2)}U_{n+1}^{(2)}, \\
         v_{n+1}&=v_n {\scriptstyle e^{-2\gamma_{n+1}}} \label{rhmc}
         -u\gamma_{n+1}{\scriptstyle e^{-2(1-\alpha_{n+1})\gamma_{n+1}}}
         \nabla f(x_{n+\frac{1}{2}})+2\sqrt{u}\sigma_{n+1}^{(3)}U_{n+1}^{(3)},        \tag{\textsc{RULMC}}
        \end{align}
    where $(\gamma_n)$ is the sequence of time steps, $\sigma_n^{(1)}$, $\sigma_n^{(2)}$ and $\sigma_n^{(3)}$ are positive with $({\sigma_n^{(1)}})^2=\alpha_n\gamma_n+\frac{1-e^{-4\alpha_n\gamma_n}}{4}-(1-e^{-2\alpha_n\gamma_n})$, $({\sigma_n^{(2)}})^2=\gamma_n+\frac{1-e^{-4\gamma_n}}{4}-(1-e^{-2\gamma_n})$ and $({\sigma_n^{(3)}})^2=\frac{1-e^{-4\gamma_n}}{4}$, and $(\alpha_n)$ is a sequence of identically distributed random variables following the distribution $\alpha_n\sim U[0,1]$.  $(U_n^{(1)}, U_n^{(2)}, U_n^{(3)})$ are independent centered Gaussian random vectors in $\mb{R}^{3d}$, also independent of $(\alpha_n)$ and initial point $(x_0,v_0)$, having the following pairwise covariances:
\begin{align*}
  \text{cov}(\sigma_n^{(1)} U_n^{(1)}, \sigma_n^{(2)} U_n^{(2)})&=
                                                                    \left(\alpha_n \gamma_n - \left( e^{-\alpha_n\gamma_n}+e^{-2\gamma_n}\sinh(\alpha_n\gamma_n)\right)\sinh(\alpha_n \gamma_n)\right) I_{d\times d},\\
                                                                    %=\left[\alpha_n \gamma_n-\frac{1-e^{-2\alpha_n\gamma_n}+e^{-2(1-\alpha_n)\gamma_n}-e^{-2\gamma_n}}{2}+\frac{e^{-2(1-\alpha_n)\gamma_n}-e^{-2(1+\alpha_n)\gamma_n}}{4}\right]I_d, \\
  \text{cov}(\sigma_n^{(2)} U_n^{(2)}\!, \sigma_n^{(3)} U_n^{(3)})&=\left(e^{-2\gamma_n} \sinh(\gamma_n)^2\right) I_{d\times d},\  \ \\
                                                                    %=\left( \frac{1-e^{-2\gamma_n}}{2}-\frac{1-e^{-4\gamma_n}}{4}\right) I_d, \\
  \text{cov}(\sigma_n^{(1)} U_n^{(1)}\!, \sigma_n^{(3)} U_n^{(3)})&=\left(e^{-2\gamma_n} \sinh(\alpha_n\gamma_n)^2\right) I_{d\times d}.
                                                                    %\left[ \frac{e^{-2(1-\alpha_n)\gamma_n}-e^{-2\gamma_n}}{2}-\frac{e^{-2(1-\alpha_n)\gamma_n}-e^{-2(1+\alpha_n)\gamma_n}}{4} \right] I_d.
\end{align*}
  %  \murat{Covariances can be moved to appendix}
The~\eqref{rhmc} algorithm has emerged as an optimal sampling algorithm in the sense that it achieves the information theoretical lower bound in both tolerance $\epsilon$ and dimension $d$ for sampling from a strongly log-concave densities~\cite{cao2020complexity, shen2019randomized}. Therefore, it is interesting to examine if~\eqref{rhmc} based numerical integrator have any benefits in other MCMC-based tasks such as \eqref{eq:expectation}. Towards that, we characterize the order of bias with a constant step-size choice for~\eqref{rhmc} iterates as proposed in~\cite{shen2019randomized}. Compared to the bias result in Proposition~\ref{biasRLMC} for the~\eqref{RLMCdisc} discretization, we note that order of bias is increased (i.e. smaller bias). Next, in Theorem~\ref{generalKLMCCLT} we provide a CLT for numerical integration with \eqref{rhmc}. Our results show that when it comes to computing expectations of the form in~\eqref{eq:expectation} using~\eqref{rhmc} and characterizing its fluctuations, the~\eqref{rhmc} discretization obtains rate improvements only for a class of constant test functions (as described in Remark~\ref{1dCLTRHMC}).

%An immediate consequence of this is that one need to use a different scaling for our CLT result in Theorem~\ref{1dCLTRHMC}, compared to that of Theorem~\ref{RLMCCLT}. This leads to improved rates for CLT for a specific class of test functions. 

\subsection{Analysis of the Markov Chain generated by Constant Step-size RULMC }\label{ANLSRHMC}

Recall that $\pi(x)$ is the marginal density function of $\nu(x,v)$ with respect to $x$. Similarly $\nu_h(x,v)$ be the stationary density function of the Markov chain generated by~\eqref{rhmc} chain and $\pi_h(x)$ be the marginal density function of $\nu_h(x,v)$, with respect to $x$. Furthermore, the filtration $(\mathcal{F}_n)$ is defined as $\mathcal{F}_n:=\sigma(\alpha_k, U_k^{(i)}; 1\le k\le n, i=1,2,3)$. When $f\in \mathcal{C}^2(\mb{R}^d)$ and is gradient Lipschitz with parameter $M$, then we can immediately see that the transition kernel of chain $(x_n, v_n)$: $P((x,v),(x',v'))\in \mathcal{C}(\mb{R}^{2d}\times\mb{R}^{2d})$ is positive everywhere. Therefore, it's easy to obtain that the chain $(x_n, v_n)$ is $\mu^{\text{Leb}}$-irreducible and aperiodic. Given all this information, we can give a sufficient condition to make sure that the chain has a unique invariant probability measure and is ergodic.

    \begin{theorem}\label{RHMCErgodicity}
  Let the potential function $f$ satisfy part (a) of Assumption~\ref{assumptionsonf}, and let $\gamma_n:=h$ be small enough.  Then if $u\in(0,\frac{4}{2M-m})$, the~\eqref{rhmc} Markov chain $(x_n,v_n)$ has a unique stationary probability measure $\nu_h$ and for every $(x,v)\in \mb{R}^{2d}$, we have
        \[
        \sup_{A\in\mathcal{B}(\mb{R}^{2d})}|P^n((x,v),A)-\nu_h(A)|\to 0\ \ \ \ \ \ \text{as }\ \ \ \ n\to \infty.
        \]
    \end{theorem}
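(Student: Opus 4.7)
The plan is to apply the Meyn--Tweedie ergodicity framework. The paragraph preceding the theorem already records that continuity and strict positivity of the Gaussian transition density make $(x_n,v_n)$ $\mu^{\text{Leb}}$-irreducible and aperiodic; moreover, those same properties imply that every compact subset of $\mathbb{R}^{2d}$ is small. So it suffices to establish a Foster--Lyapunov drift condition
\[
\mathbb{E}\bigl[V(x_{n+1},v_{n+1}) \,\big|\, \mathcal{F}_n\bigr] \le \lambda\,V(x_n,v_n) + b\,\mathbf{1}_C(x_n,v_n),
\]
for some $V:\mathbb{R}^{2d}\to[1,\infty)$ with compact sublevel sets, $\lambda\in(0,1)$, compact $C$, and finite $b$. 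The standard Meyn--Tweedie theorem then yields both the unique invariant probability $\nu_h$ and the total variation convergence stated in the theorem. The strategy mirrors that of \eqref{RLMCdisc} in Theorem~\ref{ergodicityRLMC}, but the enlarged phase space forces a Lyapunov function that couples $x$ and $v$.

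With $x^\star=\arg\min f$, a natural candidate is the quadratic form
\[
V(x,v) = 1 + a_1\|x-x^\star\|^2 + a_2\,\langle x-x^\star,v\rangle + a_3\|v\|^2,
\]
with $a_1,a_3>0$ and $a_2^2<4a_1a_3$, so that $V\ge 1$ and proper. Expanding the \eqref{rhmc} updates and using $\sigma_n^{(1)},\sigma_n^{(2)}=O(h^{3/2})$ and $\sigma_n^{(3)}=O(h^{1/2})$ to control the noise contributions, the drift has the schematic form
\[
\mathbb{E}[V(x_{n+1},v_{n+1})\mid\mathcal{F}_n]-V(x_n,v_n) = -h\,Q(x_n-x^\star, v_n) + O(h^2)\bigl(\|x_n-x^\star\|^2+\|v_n\|^2\bigr) + O(h),
\]
where $Q$ is a quadratic form whose coefficients depend on $(a_1,a_2,a_3,u)$, on the friction dissipation $e^{-2h}\approx 1-2h$, and on the $\alpha_{n+1}$-averaged force factor $\int_0^1 e^{-2(1-\alpha)h}\,d\alpha$. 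Linearizing $\nabla f(x_{n+1/2})$ around $x_n$ via $mI_d\preceq D^2 f\preceq MI_d$, all cross-terms involving the midpoint evaluation can be bounded by $\|x_n-x^\star\|^2$ and $\|v_n\|^2$ up to $O(h)$ errors.

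The core of the proof, and its main obstacle, is choosing $(a_1,a_2,a_3)$ so that $Q\succ 0$. The $v$-block of $Q$ combines the friction gain $2a_3$ with a force-induced penalty controlled by the spectrum of $u\,D^2 f$; because $D^2 f$ can be as bad as $M$ in one direction and as small as $m$ in another, the worst-case net cost scales like $u(2M-m)/2$, and demanding $2a_3$ to dominate this yields exactly $u<4/(2M-m)$. The $x$-block is stabilized by the cross term via an $a_2\,m\|x-x^\star\|^2$ contribution, and the off-diagonal block is tuned so that $a_2$ is small enough for $V$ to remain positive definite while large enough to stabilize $x$. Once $Q\succ 0$ is achieved for a valid triple, shrinking $h$ absorbs the $O(h^2)$ error, and the additive $O(h)$ noise contribution is dominated by $h\,Q$ outside a sufficiently large compact $C$. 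Foster--Lyapunov combined with the small-set property then delivers the conclusion via the Meyn--Tweedie theorem.
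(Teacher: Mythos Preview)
Your plan is correct and follows the same Meyn--Tweedie route as the paper, with a quadratic Lyapunov coupling position and velocity. The paper, however, organizes the argument slightly differently and the threshold $u<4/(2M-m)$ does not arise quite the way you sketch. The paper fixes the specific function $V(x,v)=1+\|x-x^\star\|^2+\|x-x^\star+v\|^2$ (your form with $a_1=2,\,a_2=2,\,a_3=1$) and first verifies the \emph{continuous-time} drift $\langle\nabla V,b\rangle\le-\alpha V+\beta$: the key step is the chain $-2u(x-x^\star+v)\cdot\nabla f(x)\le -um\|x-x^\star+v\|^2-2u[f(x)-f(x^\star-v)]\le -um\|x-x^\star+v\|^2-um\|x-x^\star\|^2+uM\|v\|^2$, which uses strong convexity twice and smoothness once; redistributing the surplus $(uM-2)\|v\|^2$ via $\|v\|^2\le 2\|x-x^\star+v\|^2+2\|x-x^\star\|^2$ then yields exactly $4-u(2M-m)>0$. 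Only after this does a second-order Taylor expansion of $V$ along one \eqref{rhmc} step transfer the drift to the discrete chain, giving a global inequality $\mathbb{E}[V(Y_{n+1})\mid\mathcal{F}_n]\le\hat\alpha V(Y_n)+\hat\beta$ for small $h$; ergodicity then follows via a return-time bound. Your direct discrete computation is equivalent in the end, but your ``friction gain $2a_3$ versus force penalty $u(2M-m)/2$'' heuristic does not by itself reproduce the constant---you would still need to commit to the paper's specific $V$ and the convexity/smoothness splitting above to pin down $4/(2M-m)$ exactly.
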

We next derive an upper bound on the bias $W_2(\pi,\pi_h)$ of \eqref{rhmc} algorithm, under the additional strong convexity assumption on the potential function $f$. 

\begin{proposition}\label{biasRHMC}
Suppose that $f$ satisfies Assumption~\ref{assumptionsonf}.  If we run the~\eqref{rhmc} algorithm with $u=1/M$ and $\gamma_n:=h$, for universal constants $C_1,C_2>0$, we have
$$
W^2_2(\pi,\pi_h)\le \frac{C_1 h^3(\kappa h^3+1)d}{1-\frac{h}{4\kappa}-C_2 h^3\kappa(1+\kappa h^3)}.
$$
\end{proposition}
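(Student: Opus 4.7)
My plan is to extend the stationary synchronous-coupling argument used for Proposition~\ref{biasRLMC} to the joint position--velocity process of the underdamped diffusion. Let $(\tilde X_t,\tilde V_t)_{t\ge 0}$ be a stationary solution of \eqref{underdampedlangevidiff} with $(\tilde X_0,\tilde V_0)\sim\nu$, and let $(X_n,V_n)_{n\ge0}$ denote the \eqref{rhmc} chain initialized from $(X_0,V_0)\sim\nu_h$. I would couple these two processes by driving both with the same Brownian path $W$, so that the three Gaussian vectors $U^{(1)}_{n+1},U^{(2)}_{n+1},U^{(3)}_{n+1}$ used in \eqref{rhmc} are realized as the prescribed It\^o integrals of $W$ on $[nh,(n+1)h]$ (this automatically reproduces the stated cross-covariances), and by sharing the same randomizers $\alpha_{n+1}\sim U[0,1]$. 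Because $(\tilde X_{nh},\tilde V_{nh})\sim\nu$ and $(X_n,V_n)\sim\nu_h$ for every $n$, this coupling furnishes
\[
    W_2^2(\pi,\pi_h)\le E\|\tilde X_{nh}-X_n\|^2
\]
for every $n$, and the task reduces to controlling the right-hand side via a one-step analysis.

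The recursion I would build combines two ingredients. The first is a per-step discretization estimate that compares one RULMC update applied to the continuous-chain state $(\tilde X_{nh},\tilde V_{nh})$ with one exact continuous update over $[nh,(n+1)h]$. The key cancellation is the randomization identity $E_\alpha[h\nabla f(\tilde X_{nh+\alpha h})]=\int_0^h\nabla f(\tilde X_{nh+s})\,ds$, together with its analogue involving the integrating weight $e^{-2(1-\alpha)\gamma}$ appearing in the velocity update. Consequently, the residual stems only from (i) the Euler-type approximation of $\tilde X_{nh+\alpha h}$ by the midpoint $x_{n+1/2}$, and (ii) the Taylor remainder of the exponential factors $e^{-2h}$ and $e^{-2(1-\alpha)h}$. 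A second-order Taylor expansion of $\nabla f$, combined with Assumption~\ref{assumptionsonf} and the finiteness of second moments under $\nu$ and $\nu_h$, bounds the $L^2$ norm of this residual by $C_1 h^3(1+\kappa h^3)d$, which yields the numerator of the stated inequality.

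The second ingredient is a synchronous-coupling contraction for \eqref{rhmc} in a mixed phase-space metric, adapted from the analysis in \cite{shen2019randomized}. With $u=1/M$, a quadratic Lyapunov form of shape $\|a(x_1-x_2)+b(v_1-v_2)\|^2+c\|v_1-v_2\|^2$, with positive weights $a,b,c$ tuned to the \eqref{rhmc} coefficients, contracts under one RULMC step by a factor at most $1-h/(4\kappa)+C_2 h^3\kappa(1+\kappa h^3)$; the leading $h/(4\kappa)$ reduction comes from the interplay of the friction factor $e^{-2\gamma}$ with strong convexity, while the cubic correction absorbs higher-order residues from the same Taylor expansions. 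Combining this contraction with the per-step error estimate produces a one-step recursion in the composite Lyapunov quantity $\Delta_n^2$, which by stationarity of both coupled chains collapses to the algebraic inequality whose solution is precisely the bound of Proposition~\ref{biasRHMC}; bounding the position marginal by $\Delta^2$ then finishes the argument.

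The main technical obstacles I anticipate are twofold. First, the three Gaussian vectors $U^{(i)}_{n+1}$ have a nontrivial cross-covariance structure and must be constructed as genuine It\^o integrals of a common Brownian path; simply matching marginal distributions is not enough and would destroy the $h^3$ scaling in the per-step error. Second, obtaining the sharp contraction rate $h/(4\kappa)$ requires a careful choice of Lyapunov weights $(a,b,c)$, because a naive Euclidean comparison on $(\tilde X-X,\tilde V-V)$ does not contract along the underdamped flow and would degrade the final bound. Once these two elements are in place, the remaining estimates are routine Taylor expansions and Cauchy--Schwarz bounds controlled by Assumption~\ref{assumptionsonf}.
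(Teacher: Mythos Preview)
Your overall plan---couple the stationary continuous and discrete chains synchronously, work in a twisted phase-space norm, derive a one-step recursion, and close via stationarity---is exactly the paper's route, and your anticipated obstacles (realizing the $U^{(i)}$ as It\^o integrals of a common Brownian path, choosing the right quadratic form) are the correct ones. Two differences are worth noting. The paper uses the specific norm $\|x-y\|^2+\|(x+v)-(y+w)\|^2$, and rather than proving a discrete RULMC-to-RULMC contraction it inserts the auxiliary point $(x_n^*,v_n^*)$ obtained by running the \emph{continuous} diffusion one step from the discrete state $(x_{n-1},v_{n-1})$; the contraction is then the known continuous-time estimate $e^{-h/\kappa}$, and the per-step error is precisely the Shen--Lee local error, so nothing new has to be proved.

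There is, however, a genuine gap in your recursion. Packaging the per-step residual into a single $L^2$ bound and combining with contraction via Young's inequality loses a factor $\kappa/h$: the one-step MSE is $O(h^4)$, not $O(h^3)$, and the Young parameter $\epsilon\sim h/\kappa$ required to preserve contraction multiplies it by $\kappa/h$, after which division by the gap $h/\kappa$ yields $O(\kappa^2 h^2)$ rather than $O(h^3)$. The paper's fix is to first condition on everything except $\alpha_n$ and use
\[
\mathbb{E}_{\alpha}\bigl\|a+(x_n-x_n^*)\bigr\|^2 \le \bigl(1+\tfrac{h}{2\kappa}\bigr)\|a\|^2+\tfrac{2\kappa}{h}\,\bigl\|\mathbb{E}_{\alpha}x_n-x_n^*\bigr\|^2+\mathbb{E}_{\alpha}\|x_n-x_n^*\|^2,
\]
so that the inflated weight $2\kappa/h$ lands only on the \emph{bias} $\|\mathbb{E}_{\alpha}x_n-x_n^*\|^2=O(h^8)$ (this is where the randomization identity actually pays off), while the $O(h^4)$ variance enters with weight $1$. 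This split requires the auxiliary point $a=x_n^*-y_n$ to be $\alpha_n$-independent, which is why the paper applies the continuous flow to the discrete state rather than applying RULMC to the continuous state as you propose; with your decomposition both pieces depend on $\alpha$ and the cross term does not factor cleanly.
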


\begin{remark}
Note that we have $W_2(\pi,\pi_h)\to 0$ as $h\to 0$. Furthermore, as $h\to 0$, $W_2(\pi,\pi_h)< \mathcal{O}(h^{\frac{3}{2}})$. Hence, the bias order is increased for the underdamped Langevin diffusion compared to the overdamped case (cf. Proposition~\ref{biasRLMC}), providing a smaller bias for the same step-size.
\end{remark}

\subsection{Wasserstein-2 rates and CLT with Decreasing Step-size}\label{sec:RHMCdecreasing}
We now provide the rate of convergence in Wasserstein-2 metric with decreasing step-size for~\eqref{rhmc}. The specific choice for the decreasing step-size that we consider below, also is satisfied for our CLT result in Remark~\ref{1dCLTRHMC}.

\begin{proposition}\label{W2RHMC-fast}
Suppose $f$ satisfies Assumption~\ref{assumptionsonf}. Fix $u=1/M$. Let $x_0:=\arg\min_x f(x)$ and choose
$\gamma_n=\frac{16\kappa}{32\kappa^{\frac{5}{3}}+(n-K_1)^+}$, for a  $K_1\in(0,\infty) $ (where $(a)^+:= \max(0,a)$). After running ~\eqref{rhmc} for $K= \Tilde{\mathcal{O}}\left({\kappa^{3/2}}/{\epsilon^{2/3}}\right)$ steps, we obtain $W_2(\nu_K, \pi)\leq \epsilon \sqrt{{d}/{m}}$, where $\nu_K$ is the probability distribution of $x_K$.
\end{proposition}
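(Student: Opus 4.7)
The plan is to combine a one-step recursion in (a twisted) $W_2$ distance on $\mathbb{R}^{2d}$ with the two-phase polynomial schedule of $(\gamma_n)$, then to unroll the recursion along this schedule. Let $\mu_n$ denote the law of $(x_n,v_n)$ produced by \eqref{rhmc} and let $\nu$ be the stationary law of \eqref{underdampedlangevidiff}. The first step is to couple one step of \eqref{rhmc} synchronously to the continuous underdamped diffusion: given an optimal $W_2$ coupling between $\mu_n$ and $\nu$, evolve the continuous diffusion over $[0,\gamma_{n+1}]$ and the discrete chain by one iterate using the same Brownian path and the same midpoint randomizer $\alpha_{n+1}$. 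Exploiting that $\alpha_{n+1}$ makes the randomized midpoint a conditionally unbiased estimator of $\int_0^{\gamma_{n+1}}\nabla f(x^*(s))\,ds$ given the path, and combining with Assumption~\ref{assumptionsonf}, the choice $u=1/M$, and a conditional-variance decomposition, I expect to obtain a one-step inequality of the form
\begin{equation*}
W_2^2(\mu_{n+1},\nu)\le \bigl(1-a\,\gamma_{n+1}\bigr)\,W_2^2(\mu_n,\nu)+C\,\gamma_{n+1}^{\,p+1}\,d,
\end{equation*}
with $a$ of order $1/\kappa$ in the twisted metric, and discretization exponent $p=3$, consistent with the $\mathcal{O}(h^{3/2})$ bias obtained in Proposition~\ref{biasRHMC}. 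This mirrors the single-step estimates in~\cite{shen2019randomized} but is now reused as a per-iteration inequality with $\gamma_{n+1}$ varying.

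The second step is to iterate this inequality across the two phases of the prescribed schedule. For $n\le K_1$, the step size equals the constant $1/(2\kappa^{2/3})$, so the recursion collapses to a geometric contraction; by taking $K_1=\tilde{\mathcal{O}}(\kappa^{5/3})$, the initial error $W_2^2(\mu_0,\nu)=\mathcal{O}(d/m)$ (which is controlled by $x_0=\arg\min f$ and the explicit form of $\nu$) is driven down until the discretization bias dominates. For $n>K_1$, the step-size decays polynomially and unrolling gives
\begin{equation*}
W_2^2(\mu_N,\nu)\le \!\!\prod_{k=K_1+1}^N\!(1-a\gamma_k)\,W_2^2(\mu_{K_1},\nu)+Cd\!\!\sum_{k=K_1+1}^N\!\gamma_k^{\,p+1}\!\!\prod_{j=k+1}^N\!(1-a\gamma_j).
\end{equation*}
The telescoping product is estimated by comparison with an integral, yielding $\prod_{j=k+1}^N(1-a\gamma_j)\le \bigl((32\kappa^{5/3}+k-K_1)/(32\kappa^{5/3}+N-K_1)\bigr)^{16\kappa a}$; the bias sum is then evaluated by the same integral comparison, in the style of Lemma 2 of~\cite{lamberton2002recursive}. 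Balancing the two contributions gives a bound of order $d\,\mathrm{poly}(\kappa)/(N-K_1)^p$; solving for $N-K_1$ such that this falls below $\epsilon^2 d/m$, and recalling $p=3$ and $m=M/\kappa$, produces the claimed complexity $K=\tilde{\mathcal{O}}(\kappa^{3/2}/\epsilon^{2/3})$. Projecting onto the $x$-marginal preserves $W_2$ and yields the statement.

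The hardest step is the one-step inequality itself. Two subtleties must be handled. First, the underdamped Langevin process contracts in $W_2$ only through a twisted norm that couples $x$ and $v$; showing that the \eqref{rhmc} update preserves this contraction up to a $\gamma^{p+1}$ discretization error requires careful tracking of both position and velocity components through the nontrivial integrating-factor updates in \eqref{rhmc}. Second, to achieve exponent $p=3$ (and not the $p=2$ that a straightforward $L^2$ bound would yield), the conditional unbiasedness of the randomized midpoint estimator must be exploited: only the conditional variance of the mean-zero remainder enters the $W_2^2$ error, gaining an extra factor of $\gamma$. Extending the per-iteration bound uniformly over the varying $\gamma_{n+1}$ (as opposed to the constant step-size analysis of~\cite{shen2019randomized}) requires that the coupling construction and its Gronwall-type estimates be revisited so that the constants $a,C$ are independent of $n$.
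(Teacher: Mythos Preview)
Your proposal is correct and follows essentially the same route as the paper: a one-step recursion $A_n^2\le(1-\tfrac{\gamma_n}{4\kappa})A_{n-1}^2+Cd\gamma_n^4$ in the twisted norm $\|x\|^2+\|x+v\|^2$ (imported from the analysis behind Proposition~\ref{biasRHMC} and the estimates of \cite{shen2019randomized}), followed by a two-phase unrolling under the given schedule. The only minor difference is in the unrolling mechanics: you plan to estimate the product $\prod(1-a\gamma_j)$ and the remaining sum by integral comparison, whereas the paper proceeds by a direct induction verifying the explicit ansatz $A_n^2\le C'd\kappa^4/(32\kappa^{5/3}+n-K_1)^3$, which is tailored to the specific numerical constants in $\gamma_n$; both are standard and yield the same $\tilde{\mathcal{O}}(\kappa^{3/2}/\epsilon^{2/3})$ complexity.
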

%\murat{Is $^+$ notation defined? why K1 integer?}

\begin{remark}
Similar to the result in Proposition~\ref{W2RLMC-fast}, there are two aspects of this result. 
The first aspect is again removing the logarithmic factor in $1/\epsilon$ compared to the result in Theorem 3 in~\cite{shen2019randomized}, which is quite standard in the literature.  The second aspect is that we never obtain the $\mathcal{O}(1/\epsilon^{1/3})$ part, as in Theorem 3 in~\cite{shen2019randomized} with the constant step-size choice. 
\end{remark}

Similar to the previous case, we now describe the numerical integration procedure using the~\eqref{rhmc} discretization. We denote the $n$-th iterate as $(x_n,v_n)$. The time-step we use is $(\gamma_n)$ such that $\forall n\in \mathbb{N}^*, \gamma_n\ge 0, \lim_n \gamma_n=0~\text{and}~\lim_{n} \Gamma_n^{(1)}=+\infty,~\text{where}~\Gamma_n^{(\ell)}:=\sum_{i=1}^n\gamma_i^\ell$.
Our averaging is a weighted empirical measure with $Y_n=(x_n, v_n)$ using 
the step size sequence $\gamma:=(\gamma_n)$ as the weights. Let $\delta_x$ denote the Dirac mass at $x$. Then for every $n\ge 1$, set 
\[
\nu_n^{\gamma}(\omega, dx):=\frac{\gamma_1\delta_{Y_0(\omega)}+\cdots+\gamma_{k+1}\delta_{Y_k(\omega)}+\cdots+\gamma_n\delta_{Y_{n-1}(\omega)}}{\gamma_1+\cdots+\gamma_n}
\]
and we can use $\nu_n^{\gamma}(\omega, \varphi)$ to approximate $\nu(\varphi)=\mb{E}_{\nu}[\varphi'(Y)]$, where $\varphi':\mathbb{R}^{2d} \to \mathbb{R}$.  

If we assume $g:\mb{R}^{2d} \to \mathbb{R}$ such that $\mathcal{L}g = \varphi'$, we can establish the following theorem, in which we state only the unbiased CLT result for simplicity. 

\begin{theorem}\label{generalKLMCCLT}
 Let $\pi$ be such that its potential function $f$ satisfies Assumption~\ref{assumptionsonf}. Assume $u\in(0,\frac{4}{2M-m})$. Consider a test function $\varphi' = \mathcal{L} g$, for some function $g:\mathbb{R}^{2d} \to \mathbb{R}$, where  $\mathcal{L}=2u\Delta_v-2\langle v, \nabla _v\rangle-u\langle\nabla f(x), \nabla_v\rangle+\langle v, \nabla_x\rangle$ denotes the generator of the diffusion~\eqref{underdampedlangevidiff}. Suppose the step-size $(\gamma_k )$ is non-increasing, $\lim_{n\to+\infty} (1/\sqrt{\Gamma_n})\sum_{k=1}^n\gamma_k^{{3}/{2}}=+\infty$. Then, if $\lim_{n\to +\infty}(1/\sqrt{\Gamma_n})\sum_{k=1}^n\gamma_k^2=0$, for every $ g \in \mathcal{C}^4(\mathbb{R}^{2d})$ function with $D^2 g$ bounded, $D^3 g$ bounded and Lipschitz, and if the condition $\sup_{(x,v)\in\mathbb{R}^{2d}} \|\nabla g(x,v)\|/ (1+\|x\|^2+\|v\|^2)<+\infty$ holds, we have the following central limit theorem for the numerical integration computed using the~\eqref{rhmc} iterates:
\begin{align*}
    \sqrt{\Gamma_n}\nu_n^\gamma (\mathcal{L}g)\stackrel{d}{\to} \mathcal{N}\big(0, 4u\smallint\|\nabla_v g(x,v)\|^2\nu(dx,dv)\big).
\end{align*}    
\end{theorem}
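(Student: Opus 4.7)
My approach adapts the Lamberton--Pagès template for CLTs of weighted-ergodic averages of discretized diffusions to the two-stage randomized-midpoint scheme. The plan is to Taylor-expand $g(Y_k)-g(Y_{k-1})$ along the RULMC recursion and write
\begin{align*}
g(Y_k)-g(Y_{k-1})\;=\;\gamma_k\,\mathcal{L}g(Y_{k-1})\;+\;M_k\;+\;r_k,
\end{align*}
where $M_k:=\bigl(g(Y_k)-g(Y_{k-1})\bigr)-\mathbb{E}[g(Y_k)-g(Y_{k-1})\mid \mathcal{F}_{k-1}]$ is an $\mathcal{F}_k$-martingale difference and $r_k:=\mathbb{E}[g(Y_k)-g(Y_{k-1})\mid \mathcal{F}_{k-1}]-\gamma_k\mathcal{L}g(Y_{k-1})$ is the one-step local bias of the scheme. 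Telescoping from $k=1$ to $n$ and dividing by $\sqrt{\Gamma_n}$ gives
\begin{align*}
\sqrt{\Gamma_n}\,\nu_n^{\gamma}(\mathcal{L}g)\;=\;\frac{g(Y_0)-g(Y_n)}{\sqrt{\Gamma_n}}\;+\;\frac{1}{\sqrt{\Gamma_n}}\sum_{k=1}^n M_k\;-\;\frac{1}{\sqrt{\Gamma_n}}\sum_{k=1}^n r_k,
\end{align*}
so it suffices to show that the first and third terms vanish in probability while the martingale term converges in distribution to $\mathcal{N}\bigl(0,\,4u\smallint\|\nabla_v g(x,v)\|^2\,\nu(dx,dv)\bigr)$.

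\textbf{Moment bounds and the martingale term.} The vanishing of $(g(Y_0)-g(Y_n))/\sqrt{\Gamma_n}$ follows from $\Gamma_n\to\infty$, a uniform moment bound $\sup_n\mathbb{E}[\|x_n\|^2+\|v_n\|^2]<\infty$, and the sublinear growth of $\nabla g$. I would obtain the moment bound by constructing a quadratic Lyapunov function for RULMC in the spirit of the proof of Theorem~\ref{RHMCErgodicity}, exploiting strong convexity of $f$ and the explicit cross-covariances between $U_n^{(1)},U_n^{(2)},U_n^{(3)}$. The martingale sum is handled by the stable martingale CLT of Jacod--Shiryaev. The dominant noise contribution to $M_k$ comes from the velocity-noise $2\sqrt{u}\,\sigma_k^{(3)} U_k^{(3)}$, and since $(2\sigma_k^{(3)})^2=1-e^{-4\gamma_k}=4\gamma_k+O(\gamma_k^2)$, a direct second-moment computation yields $\mathbb{E}[M_k^2\mid \mathcal{F}_{k-1}]=4u\gamma_k\|\nabla_v g(Y_{k-1})\|^2+o(\gamma_k)$. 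Ergodicity in weighted-average form (applying Theorem~\ref{RHMCErgodicity} along the slowly-varying sequence $\gamma_k$) then produces $\Gamma_n^{-1}\sum_{k=1}^n \mathbb{E}[M_k^2\mid \mathcal{F}_{k-1}]\stackrel{p}{\to}4u\smallint\|\nabla_v g\|^2\,d\nu$, and the Lindeberg condition for the triangular array $\{M_k/\sqrt{\Gamma_n}\}$ is verified using boundedness of $D^2 g$ together with the moment bounds.

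\textbf{Bias analysis.} The local bias $r_k$ is controlled by a fourth-order Taylor expansion of $g$ around $Y_{k-1}$, using the explicit RULMC update to write $Y_k-Y_{k-1}$ as a polynomial in $\gamma_k$, $\alpha_k$, and the Gaussian vectors. Upon taking conditional expectations, the $O(\gamma_k)$ contribution matches $\gamma_k\mathcal{L}g(Y_{k-1})$ exactly: integrating $\nabla f(x_{k-1}^*(\alpha_k \gamma_k))$ in $\alpha\sim U[0,1]$ recovers the position and velocity drifts of the underdamped diffusion, while the second-order Itô terms supply the $2u\Delta_v g$ piece with the correct coefficient. The remaining $O(\gamma_k^2)$ pieces assemble into an explicit bias polynomial, and the higher-order residual is $O(\gamma_k^{5/2}(1+\|Y_{k-1}\|^p))$ thanks to boundedness and Lipschitz-continuity of $D^2 g, D^3 g$. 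Under the step-size condition $\Gamma_n^{-1/2}\sum_{k=1}^n\gamma_k^2\to 0$, combined with the uniform moment bound, one concludes $\Gamma_n^{-1/2}\sum_{k=1}^n r_k\stackrel{p}{\to}0$, producing the unbiased limit.

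\textbf{Main obstacle.} The principal technical challenge is the bias bookkeeping of the previous paragraph. In contrast with Euler--Maruyama, where $r_k=O(\gamma_k^2)$ is essentially immediate, the RULMC recursion evaluates $\nabla f$ at an intermediate random point $x_{k-1/2}$ that itself depends on both $\alpha_k$ and $U_k^{(1)}$. Many $O(\gamma_k^2)$ terms therefore appear in the expansion, and one must either show that they cancel after integration in $\alpha$---this is the randomized-midpoint phenomenon responsible for the improved $\mathcal{O}(h^{3/2})$ bias in Proposition~\ref{biasRHMC}---or absorb them into $r_k$ with the correct order of $\gamma_k$. In parallel, verifying that the cross-terms between $U^{(1)},U^{(2)},U^{(3)}$ contribute only $o(\gamma_k)$ to the conditional variance of $M_k$, rather than spoiling the limiting $4u\|\nabla_v g\|^2$, requires a careful but elementary computation using the explicit covariance formulas listed with the RULMC iteration.
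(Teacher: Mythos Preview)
Your approach is correct and matches the paper's template. The paper does not give a standalone proof of Theorem~\ref{generalKLMCCLT}; instead it proves in full detail the analogous statement for the non-randomized KLMC scheme (Theorem~\ref{1dCLTKLMC}) and indicates that the RULMC case is handled the same way. That proof is exactly the Lamberton--Pag\`es decomposition you outline: Taylor-expand $g(Y_k)-g(Y_{k-1})$ along the recursion, extract the $\gamma_k\mathcal{L}g(Y_{k-1})$ piece, isolate a martingale increment driven by the velocity noise (with conditional variance $4u\gamma_k\|\nabla_v g(Y_{k-1})\|^2+o(\gamma_k)$), collect the remaining $O(\gamma_k^2)$ bias terms, and kill the latter via the hypothesis $\Gamma_n^{-1/2}\sum_k\gamma_k^2\to 0$ together with the Lyapunov moment bounds.

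One correction worth flagging: the convergence $\Gamma_n^{-1}\sum_k\gamma_k\|\nabla_v g(Y_{k-1})\|^2\to\int\|\nabla_v g\|^2\,d\nu$ in your conditional-variance step does not follow from Theorem~\ref{RHMCErgodicity}, which is a constant-step-size result and says nothing about the decreasing-step-size chain. The paper obtains this weighted-average ergodicity separately (Theorems~\ref{intermediatethm}--\ref{theorem4}), by first proving $\sup_n\nu_n^\gamma(V^a)<\infty$ a.s.\ from the Lyapunov estimate and then identifying every weak limit of $\nu_n^\gamma$ with $\nu$ via the Echeverr\'ia--Weiss theorem. This is the ingredient that makes both the conditional-variance limit and the almost-sure convergence of the $O(\gamma_k^2)$ bias averages work, so you should invoke it rather than Theorem~\ref{RHMCErgodicity}.
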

The rate of convergence of the CLT in Theorem~\ref{generalKLMCCLT} follows exactly the same behavior in Theorem~\ref{RLMCCLT}. Hence, for the class of general test functions, Theorem~\ref{generalKLMCCLT} does not exhibit a rate improvement.
% demonstrated in Theorem~\ref{1dCLTRHMC} for the special class of test function.  
%This is expected as the class of test functions considered in this setting is quite general. % than that of Theorem~\ref{RLMCCLT} and~\ref{1dCLTRHMC}. 
%
%
%In order to do so, the generator of underdamped Langevin dynamics in~\eqref{underdampedlangevidiff} is given by ; see~\cite{lelievre2016partial} for more details. We first consider a class of test functions $\varphi:\mathbb{R}^d \to \mathbb{R}$  
Towards that, we make the following remarks under a carefully constructed condition for the class of test functions.
% satisfies a certain condition that is satisfied by trivial .

\begin{remark} \label{1dCLTRHMC}
 Let $\pi$ be such that its potential function $f$ satisfies Assumption~\ref{assumptionsonf}. Assume $u\in(0,\frac{4}{2M-m})$. Consider a test function $\varphi = \mathcal{L}g$ which could be written as $\mathcal{L}g(v,\phi(x))=\langle v ,\nabla \phi(x) \rangle$, for some function $\phi:\mathbb{R}^d \to \mathbb{R}$, where $\mathcal{L}=2u\Delta_v-2\langle v, \nabla _v\rangle-u\langle\nabla f(x), \nabla_v\rangle+\langle v, \nabla_x\rangle$ denotes the generator of the diffusion~\eqref{underdampedlangevidiff}. Suppose the time step-size $(\gamma_k)$ is non-increasing, and satisfies $\lim_{n\to \infty}(\gamma_{n-1}-\gamma_n)/\gamma_n^4=0$ and  $\lim_{n\to\infty} \Gamma_n^{(4)}=+\infty$. 
 Define $\hat{\gamma}_n:= \Gamma_n^{(4)}/\sqrt{\smash[b]{\Gamma_n^{(3)}}}$ and 
 let $\hat{\gamma}_\infty =\lim_{n\to\infty}  \hat{\gamma}_n$. Then, for all $\phi\in \mathcal{C}^4(\mathbb{R}^d)$ with $D^2\phi $, $D^3\phi$ and $D^4\phi$ bounded and Lipschitz and $\sup_{(x,v)\in \mb{R}^{2d}} \|\nabla \phi(x)\|^2/(1+\|x\|^2+\|v\|^2)<+\infty$, we obtain the following central limit theorem for numerical integration computed using the~\eqref{rhmc} algorithm:
 \begin{enumerate}[itemsep=.1pt]
 \item [(i)] If $\hat{\gamma}_\infty =0$, we have $\frac{\Gamma_n}{\sqrt{\Gamma_n^{(3)}}}\nu_n^\gamma(\mathcal{L}\phi)\stackrel{d}{\to} \mathcal{N}(0, \frac{10}{3}u\int_{\mb{R}^d}\|\nabla \phi (x)\|\pi(dx))$,
\item[(ii)]  If $\hat{\gamma}_\infty \in (0,+\infty)$, we have $\frac{\Gamma_n}{\Gamma_n^{(4)}}\nu_n^\gamma(\mathcal{L}\phi)\stackrel{d}{\to} \mathcal{N}(\rho, \frac{10}{3}u\hat{\gamma}_{\infty}^{-2}\int_{\mb{R}^d}\|\nabla \phi (x)\|\pi(dx))$,
\item [(iii)] If $\hat{\gamma}_\infty = +\infty$, we have $\frac{\Gamma_n}{\Gamma_n^{(4)}}\nu_n^\gamma(\mathcal{L}\phi)\stackrel{p}{\to} \rho,$
 \end{enumerate}   
where,
\begin{align*}
       \rho&=\tfrac{5u}{12}\smallint \smallint \langle D^3\phi(x), \nabla f(x) \otimes v\otimes v \rangle\nu(dx,dv)+\tfrac{u}{24}\smallint \smallint \langle D^3f(x), \nabla \phi(x) \otimes v\otimes v \rangle \nu(dx,dv)\\
    &\ +\tfrac{7u}{12}\smallint \smallint(D^2\phi D^2f)(x)v^{\otimes 2}\nu(dx,dv)-\tfrac{u^2}{4}\smallint\langle D^2\phi(x),\nabla f(x)^{\otimes 2}\rangle\pi(dx)\\
    &\ -\tfrac{u^2}{24}\smallint \langle D^2 f(x), \nabla \phi(x) \otimes\nabla f(x)\rangle \pi(dx).\\
\end{align*}
\end{remark}

\begin{remark}
For polynomial time steps $\gamma_k:=k^{-\alpha}$, since we require that $\Gamma_n^{(4)}\to +\infty$ as $n\to +\infty$, we need $0<\alpha\le \frac{1}{4}$. Using L'Hospitals rule, it is straightforward to check that the condition $\lim_{n\to +\infty} \frac{\gamma_{n-1}-\gamma_n}{\gamma_n^4}=0$ is satisfied when $\alpha\in (0,\frac{1}{4}]$. We then have the following order estimates:
\begin{align*}
\Gamma_n\sim \frac{n^{1-\alpha}}{1-\alpha},\ \ \ \ \sqrt{\Gamma_n^{(3)}}\sim \frac{n^{\frac{1}{2}-\frac{3}{2}\alpha}}{\sqrt{1-3\alpha}},\ \ \ \ 
\Gamma_n^{(4)} \sim \left\{
\begin{aligned}
&\frac{n^{1-4\alpha}}{1-4\alpha},\ \ \ \text{if }\alpha\in (0,\tfrac{1}{4}), \\
& \sqrt{\ln n},\ \ \ \ \ \text{if }\alpha=\tfrac{1}{4}.
\end{aligned}
\right.
\end{align*}
Hence, as $n\to +\infty$,
\begin{align*}
    \frac{\Gamma_n^{(4)}}{\sqrt{\Gamma_n^{(3)}}}\to \hat{\gamma}_{\infty}=\left\{
    \begin{aligned}
    &\ \ 0\ \ \ \ \ \ \ \ \text{if }\alpha\in (\tfrac{1}{5},\tfrac{1}{4}],\\
    &\sqrt{10}\ \ \ \ \ \ \text{if }\alpha=\tfrac{1}{5},\\
    &+\infty\ \ \ \ \ \text{if }\alpha\in (0,\tfrac{1}{5}).
    \end{aligned}
    \right.
\end{align*}
If $\alpha\in (\frac{1}{5}, \frac{1}{4}]$, the unbiased CLT holds at rate $\Gamma_n/\sqrt{\smash[b]{\Gamma_n^{(3)}}} =\mathcal{O}(n^{\frac{1}{2}(1+\alpha)})\le \mathcal{O}(n^{\frac{5}{8}})$. The optimal rate is achieved when $\alpha=\frac{1}{4}$. If $\alpha=\frac{1}{5}$, the biased CLT holds at rate $\Gamma_n/\sqrt{\smash[b]{\Gamma_n^{(3)}}} =\mathcal{O}(n^{3\alpha})=\mathcal{O}(n^{\frac{3}{5}})$. If $\alpha\in (0,\frac{1}{5})$, the rate of the convergence in probability is $\Gamma_n/\sqrt{\smash[b]{\Gamma_n^{(3)}}} =\mathcal{O}(n^{3\alpha})<\mathcal{O}(n^{\frac{3}{5}})$. Therefore the optimal convergence rate $\mathcal{O}(n^{\frac{5}{8}})$ is obtained when an unbiased CLT holds. While the rate of this CLT is faster than the one obtained in Theorem~\ref{RLMCCLT}, the test functions that satisfy this condition is severely restricted.
\end{remark}

\section{Discussion}
In this work, we present several probabilistic properties of the randomized midpoint discretization technique, focussing our attention on overdamped and underdamped Langevin diffusion. Our results could be biased as follows: To obtain optimal rates for sampling (in $W_2$ distance), one needs to have a constant choice of step-size. With such a constant step-size choice, the Markov chain generated by the discretization process is biased. This suggest that a decreasing step-size choice is required for using the randomized midpoint method for sampling and the related task of numerical integration. For several decreasing choices of step-sizes, we establish CLTs and highlight the relative merits and disadvantages of using randomized midpoint technique for numerical integration. In particular, our results have interesting consequence for computing confidence interval for numerical integration.    
\section{Additional Notations}
We also use the following notations for the proofs. Due to the ease of presentation, whenever it is clear in the proof, we refer to the inner product between two compatible vectors $\langle a,b\rangle$ simply by $a\cdot b$. For any random variable $X$, $\lv X \rv_{L^2}:=\mb{E}[\lv X \rv]$ where the expectation is taken over all randomness of $X$.

\section{Proofs for Section~\ref{sec:Langevin}}
We now define the following condition, which is a consequence of Assumption~\ref{assumptionsonf}
\begin{assumption}
\label{ConditionV}
  There exists a twice differentiable function $V:\ \mb{R}^d\to [1,\infty)$ such that: \\
  ($i$) $\lim_{\|x\|\to \infty} V(x)=+\infty$,
  ($ii$) there exists $\alpha>0$ and $\beta>0$: $\langle \nabla V(x), \nabla f(x) \rangle\ge \alpha V(x)-\beta$ for every $x$, ($iii$) there exists $c_V>0$: $\|\nabla V(x)\|^2+\|\nabla f(x)\|^2\le c_V V(x)$ for every $x$,
  and ($iv$) $\lv D^2 V\rv_{\infty}: = \sup_{x \in \mathbb{R}^d} \| D^2V\|_{\text{op}}<\infty$ (where $\|\cdot \|_{\text{op}}$ denotes the operator norm).    
\end{assumption}
\begin{lemma}
Assumption~\ref{assumptionsonf} implies Assumption~\ref{ConditionV}.
\end{lemma}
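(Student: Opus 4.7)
The plan is to exhibit an explicit Lyapunov function and verify the four conditions directly using strong convexity and gradient-Lipschitzness. The obvious candidate is the quadratic
\[
V(x) \;:=\; 1 + \|x - x^\star\|^2, \qquad x^\star := \arg\min_x f(x),
\]
which is clearly $\mathcal{C}^2$, bounded below by $1$, and satisfies $V(x)\to\infty$ as $\|x\|\to\infty$, taking care of condition (i) and the codomain requirement. Computing $\nabla V(x) = 2(x-x^\star)$ and $D^2 V = 2 I_d$ also immediately gives condition (iv) with $\|D^2 V\|_\infty = 2$.

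Next I would verify the drift condition (ii). Since $f$ is $m$-strongly convex and $\nabla f(x^\star) = 0$, we have $\langle x - x^\star, \nabla f(x)\rangle = \langle x-x^\star,\nabla f(x) - \nabla f(x^\star)\rangle \geq m\|x-x^\star\|^2$, so
\[
\langle \nabla V(x), \nabla f(x)\rangle \;=\; 2\langle x-x^\star,\nabla f(x)\rangle \;\geq\; 2m\|x-x^\star\|^2 \;=\; 2m\bigl(V(x) - 1\bigr),
\]
which is (ii) with $\alpha = \beta = 2m$.

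For condition (iii) I would use the two halves separately. First, $\|\nabla V(x)\|^2 = 4\|x-x^\star\|^2 \leq 4 V(x)$. Second, by $M$-Lipschitzness of $\nabla f$ together with $\nabla f(x^\star)=0$,
\[
\|\nabla f(x)\|^2 \;=\; \|\nabla f(x) - \nabla f(x^\star)\|^2 \;\leq\; M^2 \|x-x^\star\|^2 \;\leq\; M^2 V(x).
\]
Adding these gives $\|\nabla V(x)\|^2 + \|\nabla f(x)\|^2 \leq (4 + M^2) V(x)$, establishing (iii) with $c_V = 4 + M^2$. There is no real obstacle here; the only point to be careful about is that $x^\star$ is well-defined and unique, which follows from strong convexity in Assumption~\ref{assumptionsonf}, and that $V$ be bounded below by $1$ rather than $0$, which is why I add the $+1$.
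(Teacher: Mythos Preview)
Your proof is correct. However, it takes a different route from the paper: you use the quadratic Lyapunov function $V(x)=1+\|x-x^\star\|^2$, whereas the paper takes $V(x)=f(x)-f(x^\star)+1$. With the paper's choice, condition (ii) reduces to the Polyak--\L{}ojasiewicz--type inequality $\|\nabla f(x)\|^2\ge 2m(f(x)-f(x^\star))$, obtained from strong convexity, and condition (iii) requires a Taylor-expansion argument that produces the dimension-dependent constant $c_V=2d^2M^2/m$. Your quadratic choice makes every step a one-line computation and yields the dimension-free constant $c_V=4+M^2$; it also makes the growth condition $V(x)=O(\|x\|^2)$ (used later in the CLT argument via Remark~\ref{ESTV1D}) immediate rather than a separate observation. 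The paper's choice has the mild conceptual advantage that $V$ is tied directly to the potential, but for the purposes of this lemma your argument is cleaner.
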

\begin{proof}
Since $f\in \mathcal{C}^2(\mb{R}^d)$ is strongly convex, $\lim_{|x|\to +\infty}f(x)=+\infty$ and $f$ has a unique global minimizer $x^*\in \mb{R}^d$. It's easy to observe that $\nabla f(x^*)=0$. We consider our $V(x)=f(x)-f(x^*)+1$. Then it's easy to see ($i$) is satisfied. ($iv$) is also satisfied because $f$ is gradient Lipschitz. ($iii$) is equivalent to that there exists a $C>0$ such that
\[
\frac{|\nabla f(x)|^2}{f(x)-f(x^*)+1}\le C\ \ \ \ \ \text{for}\ \forall x\in \mb{R}^d
\]
We Taylor expand the numerator and denominator:
\begin{equation*}
    \begin{aligned}
    |\nabla f(x)|^2&=\sum_{i=1}^d\left( f_i(x^*)+\nabla f_i(\xi)^T(x-x^*) \right)^2 \\
    &\le \sum_{i,j=1}^d |f_{ij}(\xi)|^2|x-x^*|^2=\lv{D^2f(\xi)}\rv_F^2|x-x^*|^2 \\
    &\le d^2M^2|x-x^*|^2\\
    f(x)-f(x^*)+1&=\nabla f(x^*)^T(x-x^*)+\frac{1}{2}D^2f(\xi)(x-x^*)^{\otimes 2}+1 \\
    &=\frac{1}{2}D^2f(\xi)(x-x^*)^{\otimes 2}+1 \\
    &\ge \frac{m}{2}|x-x^*|^2
    \end{aligned}
\end{equation*}
Then 
\[
\frac{|\nabla f(x)|^2}{f(x)-f(x^*)+1} \le \frac{2d^2M^2}{m}\ \ \ \ \ \text{for}\ \forall x\in \mb{R}^d
\]
($ii$) is equivalent to that there exists $\alpha, \beta>0$ such that
\[
|\nabla f(x)|^2\ge \alpha(f(x)-f(x^*)+1)-\beta\ \ \ \ \ \text{for}\ \forall x\in \mb{R}^d
\]
According to the strongly convexity of $f$, we have 
\begin{equation*}
    \begin{aligned}
    f(x^*)-f(x)&\ge \nabla f(x)^T(x^*-x)+\frac{m}{2}|x^*-x|^2 \\
    &=\frac{m}{2}|x^*-x+\frac{1}{m}\nabla f(x)|^2-\frac{1}{2m}|\nabla f(x)|^2 \\
   \end{aligned}
\end{equation*}
which then implies
\begin{equation*}
    \begin{aligned}
    |\nabla f(x)|^2 &\ge 2m\left( f(x)  -f(x^*)+1 \right)-2m  \ \ \ \ \ \text{for}\ \forall x\in \mb{R}^d
    \end{aligned}
\end{equation*}
($ii$) is satisfied by choosing $\alpha=\beta=2m>0$. 
\end{proof}
\begin{remark}\label{ESTV1D} 
For the $V(x)$ we choose in the proof, under assumption~\ref{assumptionsonf}, we can verify that: $V(x)=O(|x|^2)$ when $|x|\to +\infty$. We will use this fact later in the proof when we establish the CLT statement. 
\end{remark}
\iffalse
    \item [1)] (Langevin Dynamics)
    \[
    dX_t=-\nabla f(X_t) dt+\sqrt{2}dW_t
    \]
    has stationary distribution $\pi \propto e^{-f(x)}$ and integral formulation:
    \[
    x_n^*(t)=x_n+\int_0^t -\nabla f(x_n^*(s))ds+\sqrt{x_n}W_t
    \]
\fi    

\subsection{Proofs for section~\ref{sec:W2ConstantRLMC}}
  \begin{lemma}\label{TempLemma}
 Let $x(t)$ be the solution to Langevin dynamics SDE with initial condition $x_0$ and $y(t)$ be the solution to Langevin dynamics SDE with initial condition $y_0$. Then we have the following estimates for Langevin dynamics when $f$ satisfies Assumption~\ref{assumptionsonf} and $Mh<\frac{1}{2}$:
  \begin{equation*}
      \begin{aligned}
      \mb{E}[\sup_{t\in [0,h]}\lv \nabla f(x(t))\rv^2]&\le 4\lv \nabla f(x_0)\rv^2+8M^2dh \\
      \mb{E}[\sup_{t\in [0,h]}\lv x(t)-x_0\rv^2]&\le O(h^2\lv \nabla f(x_0)\rv^2+M^2h^3d+2dh) \\
      \mb{E}[\lv x(t)-y(t)\rv^2]&\le e^{-2mt}\lv x_0-y_0\rv^2 
      \end{aligned}
  \end{equation*}
  \end{lemma}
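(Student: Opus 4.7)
The three estimates are essentially self-contained regularity facts for the continuous-time Langevin SDE, so the plan is to tackle them separately, using the integral form $x(t)=x_0-\int_0^t\nabla f(x(s))\,ds+\sqrt{2}\,W(t)$ together with the Lipschitz and strong convexity properties from Assumption~\ref{assumptionsonf}. The third (contraction) bound is the cleanest, so I would dispose of it first via a synchronous coupling: drive $x(t)$ and $y(t)$ by the same Brownian motion, set $z(t):=x(t)-y(t)$, note that the noise cancels, and compute
\[
\tfrac{d}{dt}\|z(t)\|^2=-2\langle z(t),\nabla f(x(t))-\nabla f(y(t))\rangle\le -2m\|z(t)\|^2
\]
by $m$-strong convexity. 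Gronwall's inequality then gives $\|z(t)\|^2\le e^{-2mt}\|z_0\|^2$ pathwise, and taking expectations yields (iii).

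For (i) and (ii), the plan is a bootstrap argument bounding the sup of the displacement and of the gradient in terms of one another. Starting from the integral representation and using $(a+b)^2\le 2a^2+2b^2$ together with Cauchy--Schwarz in the drift integral,
\[
\sup_{t\in[0,h]}\|x(t)-x_0\|^2\le 2h\!\int_0^h\|\nabla f(x(s))\|^2\,ds+4\sup_{t\in[0,h]}\|W(t)\|^2.
\]
Applying Doob's $L^2$ maximal inequality coordinatewise gives $\mathbb{E}[\sup_{t\le h}\|W(t)\|^2]\le 4dh$ (a crude but sufficient bound; the stated constant $2dh$ is asymptotic in the $h\to 0$ regime). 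On the other hand, Lipschitz gradient yields $\|\nabla f(x(s))\|^2\le 2\|\nabla f(x_0)\|^2+2M^2\|x(s)-x_0\|^2$. Substituting this inside the integral gives
\[
\mathbb{E}\!\sup_{t\in[0,h]}\|x(t)-x_0\|^2\le 4h^2\|\nabla f(x_0)\|^2+4M^2h^2\,\mathbb{E}\!\sup_{t\in[0,h]}\|x(t)-x_0\|^2+16dh,
\]
and the assumption $Mh<\tfrac12$ makes the coefficient $4M^2h^2<1$, so the displacement sup can be absorbed to the left-hand side. This produces a bound of the form $\mathbb{E}\!\sup\|x(t)-x_0\|^2\lesssim h^2\|\nabla f(x_0)\|^2+dh$, which is (ii) up to an $M^2h^3d$ correction that arises from a finer bookkeeping (keeping the Lipschitz correction inside the drift rather than absorbing it entirely). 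Feeding this displacement bound back into $\|\nabla f(x(t))\|^2\le 2\|\nabla f(x_0)\|^2+2M^2\|x(t)-x_0\|^2$ and taking sup then yields (i).

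The main subtlety is that (i) and (ii) are coupled through the Lipschitz estimate, so naively bounding either in isolation produces a circular inequality; it is precisely the smallness condition $Mh<\tfrac12$ that closes the loop by ensuring the self-referential coefficient is bounded away from $1$. A secondary technical point is that the sup-norm of $d$-dimensional Brownian motion must be handled via Doob's inequality applied coordinatewise, rather than via the BDG inequality on the vector norm directly, to get a clean linear-in-$d$ dependence. Once these two points are handled, the stated bounds follow by collecting constants; the precise constants $4$ and $8$ in (i) reflect which side of the bootstrap one absorbs the $4M^2h^2$ factor.
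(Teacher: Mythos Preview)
Your proposal is correct and follows essentially the same strategy as the paper: the synchronous-coupling energy argument for (iii), and a two-inequality bootstrap between $\mathbb{E}\sup\|x(t)-x_0\|^2$ and $\mathbb{E}\sup\|\nabla f(x(t))\|^2$ for (i)--(ii), closed using $Mh<\tfrac12$.

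The one minor difference is the direction in which the loop is closed. You substitute the Lipschitz gradient bound into the displacement inequality and solve for the displacement, obtaining a self-referential coefficient $4M^2h^2$; the paper instead substitutes the displacement bound $\mathbb{E}\sup\|x(t)-x_0\|^2\le h^2\,\mathbb{E}\sup\|\nabla f(x(t))\|^2+2dh$ into the gradient bound $\mathbb{E}\sup\|\nabla f(x(t))\|^2\le 2\|\nabla f(x_0)\|^2+2M^2\,\mathbb{E}\sup\|x(t)-x_0\|^2$ and solves for the gradient, obtaining a self-referential coefficient $2M^2h^2$. Under $Mh<\tfrac12$ this latter coefficient is bounded by $\tfrac12$, which is exactly what produces the stated constants $4$ and $8$ in (i); your route gives a coefficient only bounded by $1$, so the constants degrade as $Mh\to\tfrac12$. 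If you want the precise $4$ and $8$, close the bootstrap on the gradient side. Your observation about Doob giving $4dh$ rather than $2dh$ is well taken; the paper is somewhat casual with these constants, and (ii) is stated only as an $O(\cdot)$ bound anyway.
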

 \begin{proof} 
  By triangle inequality we have
  \begin{equation*}
      \mb{E}[\sup_{t\in [0,h]}\lv \nabla f(x(t))\rv^2]\le 2 \lv \nabla f(x_0))\rv^2+2M^2\mb{E}[\sup_{t\in [0,h]}\lv x(t)-x_0\rv^2]
  \end{equation*}
  Furthermore, we have 
\begin{equation*}
    \begin{aligned}
    \mb{E}[\sup_{t\in [0,h]}\lv x(t)-x_0\rv^2]&=\mb{E}[\sup_{t\in [0,h]}\lv -\int_0^t\nabla f(x(s))ds+\sqrt{2}W_t\rv^2]\\
    &\le h^2\mb{E}[\sup_{t\in [0,h]}\lv \nabla f(x(t))\rv^2]+2dh
    \end{aligned}
\end{equation*}
Combining the two inequalities and $Mh<\frac{1}{2}$, we can obtain the first two estimates. The last estimate could be easily obtained by energy method. 
\end{proof} 

\iffalse
\item [2)] (Randomized Midpoint Langevin Monte-Carlo(RLMC)) We choose $\alpha \sim U[0,1]$, independent of $(U_n)$ and $(U_n')$ and intial point $x_0$. The idea is to use $h\nabla f(x_n^*(\alpha h))$ to approximate $\int_0^h \nabla f(x_n^*(s)) ds$ in the integral formulation and to use Euler discretization to approximate $x_n^*(\alpha h)$. The ~\eqref{RLMCdisc} is as follows:
\begin{equation*}
    \left\{
    \begin{aligned}
    x_{n+\frac{1}{2}}&=x_n-\alpha h \nabla f(x_n)+\sqrt{2\alpha h} U_{n+1}' \\
    x_{n+1}&=x_n-h\nabla f(x_{n+\frac{1}{2}})+\sqrt{2h}U_{n+1}
    \end{aligned}
    \right.
\end{equation*}
\begin{theorem}
    In ~\eqref{RLMCdisc}, if $f$ satisfies assumption[f] and we choose $x_0=\text{argmin} f(x)$ and $h=C\frac{\epsilon^{2/3}}{\kappa^{1/3}M}$ when $\kappa>\frac{1}{Mh}$ and $h=C\epsilon/M$ when $\kappa\le \frac{1}{Mh}$, then we can achieve $W_2(\nu_K,\pi)\le \epsilon \sqrt{\frac{d}{m}}$($\epsilon$ is small) after running the algorithm $K$ steps with 
    \[
    K\sim \Tilde{O}(\frac{\kappa^{4/3}}{\epsilon^{2/3}}+\frac{\kappa}{\epsilon})
    \]
    where $\nu_K$ is the probability distribution of $x_K$ and $\kappa=M/m$ is the conditional number.
    \end{theorem}
\fi    
 \begin{proof}[Proof of Propositon~\ref{W2RLMC}]
  We denote $x_n=x_n(0)$ to be the algorithm iterate points, $y_n$ to be the $n$-th step of Langevin diffusion with $y_0\sim \exp(-f(y))$, $x_{n+1}^*=x_n(h)$ to be one step solution of Langevin dynamics with initial values $x_n$. When $Mh<\frac{1}{2}$, apply lemma~\ref{TempLemma} and we get:
    \begin{equation*}
        \begin{aligned}
        \mb{E}[\sup_{t\in [0,h]}\lv x_{n-1}(\alpha_n h)-x_{n-1}(t)\rv^2]&\le O(h^2\lv \nabla f(x_{n-1})\rv_{L^2}^2+M^2h^3d+2dh) \\
        \mb{E}[\lv \nabla f(x_{n-\frac{1}{2}})-\nabla f(x_{n-1}(\alpha_n h))\rv ^2]&\le M^2\mb{E} \lv \int_0^{\alpha_n h} \nabla f(x_{n-1}(s))-\nabla f(x_{n-1}(0)) ds \rv^2 \\
        &\le M4h^2\mb{E}[\alpha_n^2\sup_{t\in [0,\alpha_n h]}\lv x_{n-1}(t)-x_{n-1}(0)\rv^2] \\ 
        &\le O(M^4h^4\lv \nabla f(x_{n-1})\rv_{L^2}^2+dM^4h^3+dM^6h^5)
        \end{aligned}
    \end{equation*}
    Consider the distance between our iterates and the continuous process:
    \begin{equation*}
        \begin{aligned}
        \mb{E}_{\alpha_K}[\lv x_K-y_K\rv^2]&=\mb{E}_{\alpha_K}[\lv x_K-x_K^*+x_K^*-y_K\rv^2]\\ 
        &\le \lv y_K-x_K^*\rv^2+\mb{E}_{\alpha_K}[\lv x_K-x_K^*\rv^2]-2(y_K-x_K^*)^T(\mb{E}_{\alpha_K}x_K-x_K^*) \\ 
        &\le (1+hm)\lv y_K-x_K^*\rv^2+\frac{1}{hm}\lv \mb{E}_{\alpha_K}x_K-x_K^*\rv^2+\mb{E}_{\alpha_K}[\lv x_K-x_K^*\rv^2]
        \end{aligned}
    \end{equation*}
    Taking expectations over $\{\alpha_k, U_l, U_l'; 1\le k\le K-1, 1\le l\le K\}$, applying lemma~\ref{TempLemma} again and using induction, we have 
    \begin{equation*}
        \begin{aligned}
        \lv x_K-y_K\rv_{L^2}^2 &\le (1+hm)\lv y_K-x_K^*\rv_{L^2}^2+\frac{1}{hm}\mb{E}\lv \mb{E}_{\alpha_K}x_K-x_K^*\rv^2+\lv x_K-x_K^*\rv_{L^2}^2 \\ 
        &\le (1+hm)e^{-2mh}\lv x_{K-1}-y_{K-1}\rv_{L^2}^2+\frac{1}{hm}\mb{E}\lv \mb{E}_{\alpha_K}x_K-x_K^*\rv^2+\lv x_K-x_K^*\rv_{L^2}^2 \\ 
        &\le (1+hm)e^{-2mKh}\lv x_0-y_0\rv_{L^2}^2+\sum_{n=1}^K \frac{1}{hm}\mb{E}\lv \mb{E}_{\alpha_n}x_n-x_n^*\rv^2+\sum_{n=1}^K\lv x_n-x_n^*\rv_{L^2}^2 \\
        &\le e^{-mKh}\lv x_0-y_0\rv_{L^2}^2+A+B
        \end{aligned}
    \end{equation*}
    Next we bound part A and part B. For part A:
    \begin{equation*}
        \begin{aligned}
        \lv \mb{E}_{\alpha_n}x_n-x_n^*\rv^2&=\lv \mb{E}_{\alpha_n}[h\nabla f(x_{n-\frac{1}{2}})]-\int_0^h \nabla f(x_{n-1}(s))ds \rv^2\\
        &\le 2\mb{E}_{\alpha_n}\lv h\nabla f(x_{n-\frac{1}{2}})-h\nabla f(x_{n-1}(\alpha_n h))\rv^2+2\lv \mb{E}_{\alpha_n}[h\nabla f(x_{n-1}(\alpha_n h))]-\int_0^h \nabla f(x_{n-1}(s))ds \rv^2\\
        &\le 2h^2\mb{E}_{\alpha_n}\lv \nabla f(x_{n-\frac{1}{2}})-\nabla f(x_{n-1}(\alpha_n h))\rv^2+0
        \end{aligned}
    \end{equation*}
    Therefore
    \begin{equation*}
        \begin{aligned}
        \mb{E}\lv \mb{E}_{\alpha_n}x_n-x_n^*\rv^2&\le 2h^2\mb{E}[\lv\nabla f(x_{n-\frac{1}{2}})-\nabla f(x_{n-1}(\alpha_n h))\rv^2]\\ 
        &\le O(M^4h^6\lv \nabla f(x_{n-1})\rv_{L^2}^2+dM^4h^5)
        \end{aligned}
    \end{equation*}
    For part B, use our previous estimates:
    \begin{equation*}
        \begin{aligned}
        \lv x_n-x_n^*\rv_{L^2}^2&=\lv h\nabla f(x_{n-\frac{1}{2}})-\int_0^h \nabla f(x_{n-1}(s))ds\rv^2_{L^2}\\ 
        &\le 2\lv h\nabla f(x_{n-\frac{1}{2}})-h\nabla f(x_{n-1}(\alpha_n h))\rv^2_{L^2}+2\lv \int_0^h \nabla f(x_{n-1}(s))-\nabla f(x_{n-1}(\alpha_n h))ds\rv^2_{L^2} \\
        &\le 2h^2\lv \nabla f(x_{n-\frac{1}{2}})-\nabla f(x_{n-1}(\alpha_n h))\rv_{L^2} ^2+2M^2h^2\mb{E}[\sup_{t\in [0,h]}\lv x_{n-1}(\alpha_n h)-x_{n-1}(t)\rv^2] \\
        &\le O(M^2h^4\lv \nabla f(x_{n-1})\rv_{L^2}^2+dM^2h^3)
        \end{aligned}
    \end{equation*}
    Plug the estimates on A and B into the inequality we have 
    \begin{equation*}
        \begin{aligned}
        \lv x_K-y_K\rv_{L^2}^2 &\le e^{-mKh}\lv x_0-y_0\rv_{L^2}^2+O(m^{-1}M^4h^5\sum_{n=0}^{K-1}\lv \nabla f(x_n)\rv^2_{L^2}+dm^{-1}M^4Kh^4)\\
        &\ +O(M^2h^4\sum_{n=0}^{K-1}\lv \nabla f(x_n)\rv^2_{L^2}+dM^2Kh^3)
        \end{aligned}
    \end{equation*}
    Next we need to estimate $\sum_{n=0}^{K-1}\lv \nabla f(x_n)\rv^2_{L^2}$. Since
    \begin{equation*}
        \begin{aligned}
        f(x_{n}(h))&=f(x_n(0))+\int_0^h d f(x_n(t)) \\
        &=f(x_n(0))-\int_0^h |\nabla f(x_n(t))|^2 dt+\sqrt{2}\int_0^h \nabla f(x_n(t)) d W(t)+\int_0^h \Delta f(x_n(t)) dt
        \end{aligned}
    \end{equation*}
    we have
    \begin{equation*}
        \begin{aligned}
        \mb{E}[f(x_{n+1}(0))]-\mb{E}[f(x_n(h))]=\mb{E}[f(x_{n+1}(0))-f(x_n(0))]+\mb{E}[\int_0^h |\nabla f(x_n(t))|^2 dt]-\mb{E}[\int_0^t \Delta f(x_n(t))dt]
        \end{aligned}
    \end{equation*}
    When $Mh<\frac{1}{4}$,
    \begin{equation*}
        \begin{aligned}
        \mb{E}[\inf_{t\in [0,h]}\lv \nabla f(x(t))\rv^2]&\ge \frac{1}{2}\lv \nabla f(x(0))\rv_{L^2}^2-\mb{E}[\sup_{t\in [0,h]}\lv \nabla f(x(t))-\nabla f(x(0))\rv^2]\\
        &\ge \frac{1}{2}\lv \nabla f(x(0))\rv_{L^2}^2-M^2\mb{E}[\sup_{t\in [0,h]}\lv x(t)-x(0)\rv^2] \\
        &\ge \frac{1}{4}\lv \nabla f(x(0))\rv^2_{L^2}+O(dM^2h) \\
        |\Delta f(x_n(t))|&\le d\lv \nabla^2 f(x_n(t))\rv \le Md
        \end{aligned}
    \end{equation*}
    Plug these two estimates into our previous identity and we obtain,
    \begin{equation*}
        \begin{aligned}
        \mb{E}[f(x_{n+1}(0))-f(x_n(h))]&\ge \mb{E}[f(x_{n+1})-f(x_n)]+\frac{h}{4}\lv \nabla f(x_n)\rv_{L^2}^2-dMh+O(dM^2h^2)
        \end{aligned}
    \end{equation*}
    Next we consider that
    \begin{equation*}
        \begin{aligned}
        \mb{E}_{\alpha_{n+1}}[f(x_{n+1}(0))]&\le f(x_n(h))+\nabla f(x_n(h))^T(\mb{E}_{\alpha_{n+1}}[x_{n+1}(0)]-x_n(h))+\frac{M}{2}\mb{E}_{\alpha_{n+1}}[\lv x_{n+1}(0)-x_n(h)\rv^2]\\
        &\le f(x_n(h))+Mh^2\lv \nabla f(x_n(h))\rv^2_{L^2}+M^{-1}h^{-2}\lv \mb{E}_{\alpha_{n+1}}[x_{n+1}(0)]-x_n(h)\rv^2\\
        &\ +\frac{M}{2}\mb{E}_{\alpha_{n+1}}[\lv x_{n+1}(0)-x_n(h)\rv^2]
        \end{aligned}
    \end{equation*}
    where
    \begin{equation*}
        \begin{aligned}
        Mh^2\mb{E}[\lv \nabla f(x_n(h))\rv^2]&\le O(Mh^4\lv \nabla f(x_n)\rv_{L^2}^2+dMh^3) \\
        M^{-1}h^{-2}\lv \mb{E}_{\alpha_{n+1}}[x_{n+1}(0)]-x_n(h)\rv^2&\le O(M^3h^4\lv \nabla f(x_n)\rv^2_{L^2}+dM^3h^3)\\
        \frac{M}{2}\mb{E}[\lv x_{n+1}-x_n(h)\rv^2]&\le O(M^3h^4\lv \nabla f(x_n)\rv^2_{L^2}+dM^3h^3)
        \end{aligned}
    \end{equation*}
    Hence we have
    \begin{equation*}
        \begin{aligned}
        \mb{E}[f(x_{n+1}(0))-f(x_n(h))]\le O(M^3h^4\lv\nabla f(x_n)\rv_{L^2}^2+dM^3h^3)
        \end{aligned}
    \end{equation*}
    and 
    \begin{equation*}
        \begin{aligned}
        O(M^3h^4\lv\nabla f(x_n)\rv^2_{L^2}+dM^3h^3)&\ge \mb{E}[f(x_{n+1})-f(x_n)]+\frac{h}{4}\lv \nabla f(x_n)\rv_{L^2}^2+O(dM^2h^2)-dMh
        \end{aligned}
    \end{equation*}
    sum up over $k$ from $0$ to $K-1$:
    \begin{equation*}
        \begin{aligned}
        O(M^3h^4\sum_{k=0}^{K-1}\lv\nabla f(x_n)\rv^2_{L^2}+dM^3Kh^3)\ge \mb{E}[f(x_K)-f(x_0)]+\frac{h}{4}\sum_{k=1}^{K-1}\lv \nabla f(x_n)\rv_{L^2}^2+O(dM^2Nh^2)-dMKh
        \end{aligned}
    \end{equation*}
    Picking $x_0=\text{argmin} f(x)$, we can ensure $\mb{E}[f(x_K)-f(x_0)]\ge 0$, when $Mh<\frac{1}{2}$, we have
    \begin{equation*}
        \begin{aligned}
        \frac{h}{8}\sum_{k=0}^{K-1}\lv \nabla f(x_n)\rv^2_{L^2}&\le dKMh-O(dKM^2h^2)+O(dKM^3h^3)\\
        \implies \ \ \sum_{k=0}^{K-1}\lv \nabla f(x_n)\rv^2_{L^2}& \le O(dKM)
        \end{aligned}
    \end{equation*}
    Therefore
        \begin{equation*}
        \begin{aligned}
        \lv x_K-y_K\rv_{L^2}^2 &\le e^{-mKh}\lv x_0-y_0\rv_{L^2}^2+O(m^{-1}M^5h^5Kd+m^{-1}M^4h^4Kd)+O(M^3h^4Kd+M^2h^3Kd)\\
        &\le  e^{-mKh}\lv x_0-y_0\rv_{L^2}^2+O(\kappa M^3h^4Kd)+O(M^2h^3Kd)
        \end{aligned}
    \end{equation*}
    Therefore we have 
    \begin{equation*}
        W_2(\nu_K,\pi)^2\le e^{-mKh}\lv x_0-y_0\rv_{L^2}^2+O(M^3h^4Kd)\max\{\kappa, \frac{1}{Mh}\}
    \end{equation*}
    \begin{enumerate}
        \item [a)] When $\kappa>\frac{1}{Mh}$, by choosing $h\sim O(\frac{\epsilon^{2/3}}{\kappa^{1/3}M})$, we can ensure $W_2(\nu_K,\pi)^2\le \epsilon^2d/m$ after $K$ steps when $K\sim \Tilde{O}(\frac{\kappa^{4/3}}{\epsilon^{2/3}})$.
        \item [b)] When $\kappa\le \frac{1}{Mh}$, by choosing $h\sim O(\frac{\epsilon}{M})$, we can ensure $W_2(\nu_K,\pi)^2\le \epsilon^2d/m$ after $K$ steps when $K\sim \Tilde{O}(\frac{\kappa}{\epsilon})$. 
 \end{enumerate}
\end{proof}

 %      \subsection{Analysis of the Markov Chain generated by RLMC}
    
   \iffalse
    If we assume that $f\in \mathcal{C}^2(\mb{R}^d)$ and is gradient Lipschitz with parameter $M$, then we can see that the transition kernel $(x_n)$: $P(x,y)\in \mathcal{C}(\mb{R}^d\times\mb{R}^d)$ is positive everywhere. Therefore, it's easy to obtain that the chain $(x_n)$ is $\mu^{\text{Leb}}$-irreducible and aperiodic and every positive compact set is small. Given all this information, we can give a sufficient condition to make sure that the chain has unique invariant probability measure and is ergodic.\\
    \textbf{(Condition V):} There exists a twice differentiable function $V:\ \mb{R}^d\to [1,\infty)$ such that
    \begin{enumerate}
        \item [0)] $\lim_{|x|\to \infty} V(x)=+\infty$,
        \item [1)] There exists $\alpha>0$ and $\beta>0$: $\langle \nabla V(x), \nabla f(x) \rangle\ge \alpha V(x)-\beta$ for every $x$,
        \item [2)] There exists $c_V>0$: $|\nabla V(x)|^2+|\nabla f(x)|^2\le c_V V(x)$ for every $x$,
        \item [3)] $\lv D^2 V\rv_{\infty}<\infty$.
    \end{enumerate}
    \begin{theorem}
        Assume the step-size $h$ is small. When $f$ is twice differentiable and gradient Lipschitz and Condition V is satisfied, the Markov chain $(x_n)$ has unique stationary probability measure $\pi_h$ and
        \[
        \sup_{A\in\mathcal{B}(\mb{R}^d)}|P^n(x,A)-\pi_h(A)|\to 0\ \ \ \ \ \ \text{as }n\to \infty
        \]
        for every $x\in \mb{R}^d$.
    \end{theorem}
   \fi 
   
\subsection{Proofs for Section~\ref{sec:RLMCChainAnalyis}}
    \begin{proof}[Proof of Theorem~\ref{ergodicityRLMC}]
    
        Under the assumption~\ref{ConditionV}, we can show that the following Lyapunov condition is satisfied for small $h$.\\
    \textbf{(Lyapunov Condition):} There exists a function $V:\ \mb{R}^d\to [1,\infty)$ such that:
    \begin{enumerate}
        \item [0)] $\lim_{|x|\to \infty} V(x)=+\infty$,
        \item [1)] There exists $\hat{\alpha}\in(0,1)$ and $\hat{\beta}\ge 0$: $\mb{E}[V(x_{n+1})|\mathcal{F}_n]\le \hat{\alpha} V(x_n)+\hat{\beta}$.
    \end{enumerate}
    \textbf{Proof:} To show that assumption~\ref{ConditionV} implies Lyapunov condition, we first do Taylor expansion of $V(x_{n+1})$ at $x_n$:
    \begin{equation*}
        \begin{aligned}
        V(x_{n+1})&=V(x_n)-h\langle \nabla V(x_n), \nabla f(x_n)\rangle +\alpha_{n+1} h^2 \langle D^2 f(x_n); \nabla f(x_n),\nabla V(x_n)\rangle \\
        &\ -\sqrt{2\alpha_{n+1}}h^{\frac{3}{2}}\langle D^2 f(x_n); \nabla V(x_n), U_{n+1}'\rangle +\sqrt{2h}\nabla V(x_n)\cdot U_{n+1}\\
        &\ +\frac{1}{2}D^2 V(\theta_n) (-h\nabla f(x_n)+\alpha_{n+1} h^2 D^2 f(x_n)\nabla f(x_n)-\sqrt{2\alpha_{n+1}}h^{\frac{3}{2}}U_{n+1}'+\sqrt{2h}U_{n+1})^{\otimes 2}
        \end{aligned}
    \end{equation*}
    where $\theta_n$ is a random point on the line segment joining $x_n$ and $x_{n+1}$. Using the fact that $f$ is $M$-gradient Lipschitz, we have:
    \begin{equation*}
        \begin{aligned}
        \mb{E}[V(x_{n+1})|\mathcal{F}_n] &\le V(x_n)-h\langle \nabla V(x_n),\nabla f(x_n)\rangle +\frac{1}{4}M h^2 (|\nabla f(x_n)|^2+|\nabla V(x_n)|^2)\\
        &\ +2\lv D^2V\rv_{\infty}(h^2|\nabla f(x_n)|^2+\frac{1}{3}M^2h^4|\nabla f(x_n)|^2+h^3d+2hd)\\
        &\ \le (1-\alpha h+\frac{1}{4}Mh^2c_V+2\lv D^2V\rv_{\infty}h^2 c_V+\frac{2}{3}c_V\lv D^2V\rv_{\infty}M^2h^4c_V)V(x_n)\\
        &\ +\beta h+2d\lv D^2V\rv_{\infty} h^3+4d\lv D^2V\rv_{\infty} h\\
        &\le \hat{\alpha} V(x_n)+\hat{\beta}
        \end{aligned}
    \end{equation*}
    for some $\hat{\alpha}\in(0,1)$ and $\hat{\beta}\ge 0$ when $h$ is small. \qed \\
    
    Once we have the Lyapunov condition, we can define the stopping time $\tau_C=\inf \{n>0: x_n\in C\}$ and show that $\sup_{x\in C}\mb{E}_x[\tau_C]\le M_C<\infty$ for all small set C. Then uniqueness of stationary probability measure and ergodicity all follow by Theorem 1.3.1 in~\cite{meyn2012markov}. Next we prove that $\sup_{x\in C}\mb{E}_x[\tau_C]\le M_C<\infty$ given Lyapunov condition. To do so,  note that we have
    \begin{equation*}
        \begin{aligned}
        \mb{E}_x[\tau_C]&=\sum_{k=1}^{\infty} k\mb{P}(\tau_C=k)\\
        &=\sum_{k\ge 1}\mb{P}(\tau_C>k-1)
        \end{aligned}
    \end{equation*}
    Under Lyapunov condition, for any stopping time $N$, according to Lemma A.3 and Corollary A.4 in~\cite{mattingly2002ergodicity}, we have 
    \begin{equation*}
        \begin{aligned}
        \mb{P}(\tau_C>k-1)&\le \mb{E}[V(x_n)1_{\tau_C>k-1}]\\
        &\le \frac{\kappa [\gamma^{k-1}V(x_0)+1]}{1-\gamma} \\
        &\le \kappa \gamma^{n-1}[V(x_0)+1]
        \end{aligned}
    \end{equation*}
    for some $\gamma\in (\hat{\alpha},1)$ and constant $\kappa$. Therefore we have
    \begin{equation*}
        \begin{aligned}
        \mb{E}_x[\tau_C]&\le \sum_{k\ge 1}\kappa \gamma^{n-1}[V(x_0)+1]\\
        &=\frac{\kappa[V(x)+1]}{1-\gamma}
        \end{aligned}
    \end{equation*}
    and 
    \[
    \sup_{x\in C}\mb{E}_x[\tau_C]\le \frac{\kappa}{1-\gamma}\sup_{x\in C}V(x)+\frac{\kappa}{1-\gamma}\le M_C<\infty
    \] 
    So as a conclusion, the statement of the theorem follows.
\end{proof}
  %  After establishing the condition to guarantee ergodicity and unique stationary probability measure, we would to to study the stationary probability measure itself. It's not hard to see that $\pi \propto e^{-f(x)}$ is not the stationary probability measure, i.e $\pi_h\neq \pi$. Therefore it brings us the question: how far is $\pi_h$ to $\pi$? we are going to derive an upper bound of $W_2(\pi,\pi_h)$ under the same condition in the previous theorem and that $f$ is also strongly convex with parameter $m$. \\
%    \textbf{Proposition:} Suppose that $f$ is twice differentiable and $f$ is also $M$x_n-gradient Lipschitz and strongly convex with parameter $m$. Suppose that Condition V holds and $h\in (0,\frac{2}{m+M})$. Then $W_2(\pi,\pi_h)\le \frac{\frac{\sqrt{3}}{3}M^2h^2(2dh+4Mdh^2+8M^2dh^3)^{\frac{1}{2}}+ 2Mh(2dh+4Mdh^2+8M^2dh^3)^{\frac{1}{2}}}{mh-\frac{\sqrt{3}}{3}M^2h^2}$.\\
\begin{proof}[Proof of Proposition~\ref{biasRLMC}]
Consider that $x_n\sim \pi_h$ and $x_n^*\sim \pi$ are two independent random variables. Define $x_{n+1}$ to be the one step RLMC result starting from $x_n$ and $x_n^*(h)$ to be the solution of Langevin dynamics with initial value $x_n^*$. Therefore, $x_{n+1}\sim \pi_h$ and $x_n^*(h)\sim \pi$ are also independent and $\lv x_n^*-x_n\rv_{L^2}=\lv x_n^*(h)-x_{n+1}\rv_{L^2}$. We can compute the diffenrence between $x_{n+1}$ and $x_n^*(h)$:
    \begin{equation*}
        \begin{aligned}
        x_n^*(h)-x_{n+1}&=(x_n^*-x_n)-\int_0^h \nabla f(x_n^*(s))ds+h\nabla f(x_n^*(\alpha_{n+1} h))-h(-\nabla f(x_{n+\frac{1}{2}})+\nabla f(x_n^*(\alpha_{n+1} h)))
        \end{aligned}
    \end{equation*}
    It's easy to see that $\mb{E}_{\alpha_{n+1}}[\int_0^h \nabla f(x_n^*(s))ds-h\nabla f(x_n^*(\alpha_{n+1} h))]=0$. And we can rewrite the last term as 
   \begin{equation*}
       \begin{aligned}
           h(-\nabla f(x_{n+\frac{1}{2}})+\nabla f(x_n^*(\alpha_{n+1} h)))&=h(\nabla f(x_{n+\frac{1}{2}}+x_n^*-x_n)-\nabla f(x_{n+\frac{1}{2}}))\\
           &\ +h\nabla f(x_n^*-\int_0^{\alpha_{n+1} h}\nabla f(x_n^*(s))ds+\sqrt{2}W_{\alpha_{n+1} h})\\
           &\ -h\nabla f(x_n^*-\alpha_{n+1} h \nabla f(x_n)+\sqrt{2\alpha_{n+1} h}U_{n+1}')
       \end{aligned}
   \end{equation*}
   Take $L_2$-norm on other randomness, we have
      \begin{equation*}
       \begin{aligned}
       &\lv x_n^*(h)-x_{n+1}\rv_{L^2} \le \lv (x_n^*-x_n)-h(\nabla f(x_{n+\frac{1}{2}}+x_n^*-x_n)-\nabla f(x_{n+\frac{1}{2}}))\rv_{L^2} \\
       &~~+h\lv \nabla f(x_n^*-\int_0^{\alpha_{n+1} h}\nabla f(x_n^*(s))ds+\sqrt{2}W_{\alpha_{n+1} h})-\nabla f(x_n^*-\alpha_{n+1} h \nabla f(x_n)+\sqrt{2\alpha_{n+1} h}U_{n+1}')\rv_{L^2} \\
       &~~+ \lv \int_0^h \nabla f(x_n^*(s))ds-h\nabla f(x_n^*(\alpha_{n+1} h))\rv_{L^2}
       \end{aligned}
   \end{equation*}
   Since $f$ is twice differentiable and $f$ is also $M$-gradient Lipschitz and strongly convex with parameter $m$, 
   \[
   \lv (x_n^*-x_n)-h(\nabla f(x_{n+\frac{1}{2}}+x_n^*-x_n)-\nabla f(x_{n+\frac{1}{2}}))\rv_{L^2}\le \rho \lv x_n^*-x_n\rv_{L^2}
   \]
   where $\rho=\max (1-mh, Mh-1)=1-mh$.\\
   For the second term:
   \begin{equation*}
       \begin{aligned}
       &h\lv \nabla f(x_n^*-\int_0^{\alpha_{n+1} h}\nabla f(x_n^*(s))ds+\sqrt{2}W_{\alpha_{n+1} h})-\nabla f(x_n^*-\alpha_{n+1} h \nabla f(x_n)+\sqrt{2\alpha_{n+1} h}U_{n+1}')\rv_{L^2}\\
       &\le Mh \lv \int_0^{\alpha_{n+1} h} \nabla f(x_n^*(s))-\nabla f(x_n)ds\rv_{L^2} \\
       &\le \frac{\sqrt{3}}{3}M^2h^2 \lv x_n^*-x_n\rv_{L^2}+\frac{\sqrt{3}}{3}M^2h^2\sup_{0<s<h}\lv x_n^*(s)-x_n^*\rv_{L^2} \\
       &\le \frac{\sqrt{3}}{3}M^2h^2 \lv x_n^*-x_n\rv_{L^2}+\frac{\sqrt{3}}{3}M^2h^2 (4h^2\lv \nabla f(x_n^*)\rv^2+8M^2dh^3+2dh)^{\frac{1}{2}}\\
       &\le \frac{\sqrt{3}}{3}M^2h^2 \lv x_n^*-x_n\rv_{L^2}+\frac{\sqrt{3}}{3}M^2h^2(2dh+4Mdh^2+8M^2dh^3)^{\frac{1}{2}}
       \end{aligned}
   \end{equation*}
   For the third term:
   \begin{equation*}
       \begin{aligned}
          \lv \int_0^h \nabla f(x_n^*(s))ds-h\nabla f(x_n^*(\alpha_{n+1} h))\rv_{L^2}&=\{\mb{E}\mb{E}_{\alpha_{n+1}}[(\int_0^h \nabla f(x_n^*(s))ds-h\nabla f(x_n^*(\alpha_{n+1} h)))^2]\}^{\frac{1}{2}}\\
          &=\{\mb{E}[ h\int_0^h |\nabla f(x_n^*(s))|^2 ds-(\int_0^h \nabla f(x_n^*(s)) ds )]\}^{\frac{1}{2}}\\
          &= \{\mb{E}[h\int_0^h|\nabla f(x_n^*(s))-\frac{1}{h}\int_0^h \nabla f(x_n^*(s'))ds'|^2 ds]\}^{1/2} \\
          &\le  \{\mb{E}[h\int_0^h \frac{1}{h}\int_0^h \lv \nabla f(x_n^*(s))-\nabla f(x_n^*(s'))\rv^2 ds' ds]\}^{1/2} \\
          &\le 2Mh\{\sup_{s\in (0,h)}\lv x_n^*(s)-x_n^* \rv^2\}^{1/2}\\
          &\le 2Mh(4h^2\lv \nabla f(x_n^*)\rv^2+8M^2dh^3+2dh)^{1/2}\\
          &\le 2Mh(2dh+4Mdh^2+8M^2dh^3)^{\frac{1}{2}}
       \end{aligned}
   \end{equation*}
   Combine all the bounds:
   \begin{equation*}
       \begin{aligned}
       \lv x_n^*-x_n\rv_{L^2}&\le \frac{\frac{\sqrt{3}}{3}M^2h^2(2dh+4Mdh^2+8M^2dh^3)^{\frac{1}{2}}+ 2Mh(2dh+4Mdh^2+8M^2dh^3)^{\frac{1}{2}}}{mh-\frac{\sqrt{3}}{3}M^2h^2}
       \end{aligned}
   \end{equation*}
   The final statement follows by the fact that $W_2(\pi,\pi_h)\le \lv x_n-x_n^*\rv_{L^2}$.  
\end{proof}   
%       \subsection{Randomized Midpoint LMC: Fast Decreasing Step Size:}
     
\subsection{Proofs for Section~\ref{sec:RLMCDecreasingStepsize}}       
   
   \begin{proof}[Proof of Proposition~\ref{W2RLMC-fast}]   
    From previous analysis, if we keep track of the coefficients in all those bounds and assume that $M\gamma_n\le \frac{1}{2}$ for all $n$, we have:
    \begin{equation*}
        \begin{aligned}
      &~~~~  \mb{E}[\lv x_{n+1}-y_{n+1} \rv^2]\\
        &\le (1+m\gamma_{n+1})\mb{E}[\lv y_{n+1}-x_{n+1}^* \rv^2]+\frac{1}{m\gamma_{n+1}}\mb{E}\lv \mb{E}_{\alpha_{n+1}}x_{n+1}-x_{n+1}^* \rv^2+\mb{E}\lv x_{n+1}-x_{n+1}^* \rv^2\\
        &\le (1+m\gamma_{n+1})e^{-2m\gamma_{n+1}}\mb{E}\lv x_n-y_n \rv^2+\frac{1}{m\gamma_{n+1}}\mb{E}\lv \mb{E}_{\alpha_{n+1}}x_{n+1}-x_{n+1}^* \rv^2+\mb{E}\lv x_{n+1}-x_{n+1}^* \rv^2\\
        &\le (1+m\gamma_{n+1})e^{-2m\gamma_{n+1}}\mb{E}\lv x_n-y_n \rv^2+\frac{2\gamma_{n+1}^2}{m\gamma_{n+1}}\mb{E}\lv \nabla f(x_{n+\frac{1}{2}})-\nabla f(x_n(\alpha_{n+1} \gamma_{n+1})) \rv^2\\
        &\ +2\gamma_{n+1}^2\mb{E}\lv \nabla f(x_{n+\frac{1}{2}})-\nabla f(x_n(\alpha_{n+1} \gamma_{n+1})) \rv^2+2M^2\gamma_{n+1}^2\mb{E}\sup_{t\in [0,\gamma_{n+1}]}\lv x_n(\alpha_{n+1} \gamma_{n+1})-x_n(t)\rv^2\\
        &\le (1+m\gamma_{n+1})e^{-2m\gamma_{n+1}}\mb{E}\lv x_n-y_n \rv^2\\
        &\ +2\gamma_{n+1}^4(1+\frac{1}{m\gamma_{n+1}})M^4(\frac{1}{5}\gamma_{n+1}^2\lv \nabla f(x_n)\rv_{L^2}^2+\frac{1}{6}M^2d\gamma_{n+1}^3+\frac{2}{3}d\gamma_{n+1})\\
        &\ +4M^2\gamma_{n+1}^2\left(\frac{4\gamma_{n+1}^2}{1-2M^2\gamma_{n+1}}\lv \nabla f(x_n) \rv_{L^2}^2+\frac{8M^2d\gamma_{n+1}^3}{1-2M^2\gamma_{n+1}^2}+4d\gamma_{n+1}\right)\\
        &\le (1+m\gamma_{n+1})e^{-2m\gamma_{n+1}}\mb{E}\lv x_n-y_n \rv^2+(33+\kappa)M^2\gamma_{n+1}^4\lv \nabla f(x_n) \rv_{L^2}^2+(33+\kappa)M^2d\gamma_{n+1}^3\\
        \end{aligned}
    \end{equation*}
    We can further bound $\lv \nabla f(x_n) \rv_{L^2}^2$:
    \begin{equation*}
        \begin{aligned}
        \lv \nabla f(x_n) \rv_{L^2}^2&\le 2\lv \nabla f(y_n) \rv_{L^2}^2+2\lv \nabla f(y_n)-\nabla f(x_n) \rv_{L^2}^2\\
        &\le 2\lv \nabla f(y_n) \rv_{L^2}^2+2M^2\lv x_n-y_n \rv_{L^2}^2\\
        &\le 2Md+2M^2\lv x_n-y_n \rv_{L^2}^2
        \end{aligned}
    \end{equation*}
    Therefore we have the following iterative inequality:
    \begin{equation*}
        \begin{aligned}
        \mb{E}[\lv x_{n+1}-y_{n+1} \rv^2]&\le (1+m\gamma_{n+1})e^{-2m\gamma_{n+1}}\mb{E}\lv x_n-y_n \rv^2+2(33+\kappa)M^2d\gamma_{n+1}^3+2(33+\kappa)M^4\gamma_{n+1}^4\mb{E}\lv x_n-y_n \rv^2\\
        &\le \left[1-m\gamma_{n+1}+(\frac{m^2}{2}+\frac{M^2(33+\kappa)}{2})\gamma_{n+1}^2\right]\mb{E}\lv x_n-y_n \rv^2+2(33+\kappa)M^2d\gamma_{n+1}^3
        \end{aligned}
    \end{equation*}
    Since $(\gamma_n)$ is fast decreasing, we can assume that $\gamma_{n+1}\le \frac{m}{m^2+M^2(33+\kappa)}\le \frac{1}{m+34M}$ for large $n$, and for those $n$ we have
    \begin{equation*}
        \mb{E}[\lv x_{n+1}-y_{n+1} \rv^2]\le (1-\frac{1}{2}m\gamma_{n+1})\mb{E}\lv x_n-y_n \rv^2+2(33+\kappa)M^2d\gamma_{n+1}^3
    \end{equation*}
    Our strategy of choosing $(\gamma_n)$: for the first $K_1$ steps, we choose constant step size $h=\frac{1}{m+34M}$, $K_1$ is the first time so that $\mb{E}[\lv x_{K_1}-y_{K_1} \rv^2]\le 5\kappa(\kappa+33)M(\frac{d^{\frac{1}{2}}}{m+34M})^2$. such $K_1$ exists because 
    \begin{equation*}
        \begin{aligned}
        \mb{E}[\lv x_{K_1}-y_{K_1} \rv^2]&\le (1-\frac{m}{2m+68M})^{K_1}\mb{E}[\lv x_0-y_0 \rv^2] +\frac{2M^2(\kappa+33)d}{(m+34M)^3}\frac{2(m+34M)}{m}\\
        &=(1-\frac{m}{2m+68M})^{K_1}\mb{E}[\lv x_0-y_0 \rv^2] +4\kappa(\kappa+33)M(\frac{d^{\frac{1}{2}}}{m+34M})^2
        \end{aligned}
    \end{equation*}
    \textbf{Claim:} There exists $\lambda>0$ such that if we choose $\gamma_{n+1}=\frac{1}{m+34M+\lambda (n-K_1)}$ for all $n\ge K_1$, we can ensure that $\mb{E}[\lv x_{k}-y_{k} \rv^2]\le 5\kappa(\kappa+33)M(\frac{d^{\frac{1}{2}}}{m+34M+\lambda (n-K_1)})^2$ for all $n\ge K_1$.\\
    \textbf{Proof of Claim:} Simply use induction:
    \begin{equation*}
        \begin{aligned}
        \mb{E}[\lv x_{n+1}-y_{n+1} \rv^2]&\le (1-\frac{1}{2}m\gamma_{n+1})5\kappa(\kappa+33)Md\gamma_{n+1}^2+2M^2(\kappa+33)d\gamma_{n+1}^3\\
        &=5\kappa(\kappa+33)Md\gamma_{n+1}^2(1-\frac{m}{10}\gamma_{n+1})
        \end{aligned}
    \end{equation*}
    Our goal is to ensure $5\kappa(\kappa+33)Md\gamma_{n+1}^2(1-\frac{m}{10}\gamma_{n+1})<5\kappa(\kappa+33)M(\frac{d^{\frac{1}{2}}}{m+34M+\lambda (n+1-K_1)})^2$. It boils down to discuss the following polynomial inequality relates to $\lambda$:
    \[
    G(\lambda)=(K-\frac{1}{10}m(K+1)^2)\lambda^2+(X-\frac{1}{5}mX(K+1))\lambda-\frac{1}{10}mX^2\le 0
    \]
    where $X=m+34M$ and $K=n-K_1>0$. It's not hard to see that there's always positive $\lambda$ satisfying the inequality.\\
    At last to get small error, we require $\mb{E}\lv x_n-y_n\rv^2\le\frac{d\epsilon^2}{m}$, i.e
    \[ 
    5\kappa(\kappa+33)M\frac{d}{(m+34M+\lambda (n-K_1))^2}\le \frac{d\epsilon^2}{m}
    \]
    Then we have 
    \[
    n\ge K_1+\lambda^{-1} m^{\frac{1}{2}}M^{\frac{1}{2}}\kappa^{\frac{1}{2}}(\kappa+33)^{\frac{1}{2}}/\epsilon-\lambda^{-1}(m+34M)\sim O(\kappa^{\frac{3}{2}}/\epsilon)
    \]
    \end{proof}

\subsubsection{Proof of Theorem~\ref{RLMCCLT}}
Before we prove Theorem~\ref{RLMCCLT}, we need several intermediate results on the tightness of the ~\eqref{RLMCdisc} chain.

 \begin{lemma}\label{WCRLMC}
Under assumption~\ref{ConditionV}, for every continuous function $\varphi$ satisfying $\varphi(x)=o(V^k(x))$ for some $k\in \mb{N}$, $\lim_n \pi_n^{\gamma}(\varphi)=\pi(\varphi)$.
\end{lemma}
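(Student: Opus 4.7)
The plan is to follow the Lamberton--Pagès recipe for weighted empirical measures of Euler-type discretizations, adapted to the random midpoint chain. The argument naturally splits into (a) a uniform moment bound to produce tightness, (b) identification of the limit via the generator, and (c) a uniform integrability step to pass from bounded continuous test functions to $\varphi = o(V^k)$.

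First, the plan is to establish that $\sup_{n\ge 1}\pi_n^\gamma(V^p) < +\infty$ almost surely, for every $p \ge 1$. Since the Lyapunov computation carried out in the proof of Theorem~\ref{ergodicityRLMC} in fact gives a drift inequality of the form $\mathbb{E}[V(x_{n+1}) \mid \mathcal{F}_n] \le (1 - c\gamma_{n+1}) V(x_n) + C\gamma_{n+1}$ (with the constants depending only on $\alpha,\beta,c_V$ and $\|D^2 V\|_\infty$ from Assumption~\ref{ConditionV}, once $\gamma_n$ is small enough), iterating and summing by parts yields $\pi_n^\gamma(V) = O(1)$ almost surely. Then, since $V^p$ also satisfies a drift inequality of the same form (by Taylor expansion using that $\nabla V^p = pV^{p-1}\nabla V$ and the boundedness of $D^2 V$, together with Assumption~\ref{ConditionV}~(iii) to absorb quadratic terms), the same Abel summation yields $\sup_n \pi_n^\gamma(V^p) < \infty$ almost surely. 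Combined with Assumption~\ref{ConditionV}~(i), this gives almost sure tightness of $\{\pi_n^\gamma(\omega,\cdot)\}_n$ by Prokhorov's theorem.

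Second, the plan is to identify every weak accumulation point $\pi_\infty$ of $(\pi_n^\gamma)$ as the unique invariant measure $\pi$ of the Langevin diffusion~\eqref{eq:langevindiff}. For $\phi \in C_c^\infty(\mathbb{R}^d)$, a one-step Taylor expansion of $\phi(x_{n+1})$ around $x_n$ using the~\eqref{RLMCdisc} dynamics (handling the random midpoint $x_{n+1/2}$ exactly as in the bias calculation of Proposition~\ref{biasRLMC}) yields
\begin{equation*}
\mathbb{E}[\phi(x_{n+1}) \mid \mathcal{F}_n] = \phi(x_n) + \gamma_{n+1}\mathcal{A}\phi(x_n) + \gamma_{n+1}^2 \Psi(x_n) + r_{n+1}(x_n),
\end{equation*}
where $\mathcal{A}\phi = -\langle \nabla f,\nabla\phi\rangle + \Delta\phi$, and $|\Psi(x_n)| + |r_{n+1}(x_n)|/\gamma_{n+1}^2 \le C(1+V(x_n)^q)$ for some $q$. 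Dividing by $\Gamma_{n+1}$ and telescoping gives
\begin{equation*}
\pi_n^\gamma(\mathcal{A}\phi) = \frac{\phi(x_0) - \phi(x_n)}{\Gamma_n} + \frac{1}{\Gamma_n}\sum_{k=1}^n M_k + \frac{1}{\Gamma_n}\sum_{k=1}^n \gamma_k^2 \Psi(x_{k-1}) + o(1),
\end{equation*}
with $(M_k)$ a square-integrable martingale difference sequence. The first term is $O(1/\Gamma_n)$ since $\phi$ is bounded; the martingale term vanishes almost surely by the Chow--Birnbaum--Marshall inequality using $\sum \gamma_k^2/\Gamma_k^2 < \infty$ combined with $\sup_k\mathbb{E}[\|\nabla\phi(x_{k-1/2})\|^2] < \infty$; the $\gamma_k^2$-weighted sum is bounded by $(\sup_k \gamma_k)\cdot \pi_n^\gamma(C(1+V^q))$, which tends to $0$ because $\gamma_k\to 0$ and the moment bound from step one holds. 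Hence $\pi_n^\gamma(\mathcal{A}\phi)\to 0$ a.s., so every weak limit $\pi_\infty$ satisfies $\pi_\infty(\mathcal{A}\phi)=0$ for all $\phi\in C_c^\infty$. The Echeverria--Weiss theorem (together with the uniqueness of the invariant measure $\pi$ for the Langevin diffusion) then forces $\pi_\infty = \pi$ a.s., so the full sequence $\pi_n^\gamma$ converges weakly to $\pi$.

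Finally, to upgrade from bounded continuous $\phi$ to $\varphi = o(V^k)$, the plan is a standard uniform integrability argument: given $\varepsilon>0$, pick $R$ large so that $|\varphi(x)| \le \varepsilon V^{k+1}(x)$ on $\{V>R\}$, truncate $\varphi$ to $\varphi_R := \varphi\, \chi_R$ with $\chi_R\in C_c$, apply weak convergence to $\varphi_R$, and control the tail by $\pi_n^\gamma(\varepsilon V^{k+1}) + \pi(\varepsilon V^{k+1})$, both finite by step one and by the fact that $\pi$ integrates all polynomials in $V$. Letting $R\to\infty$ and then $\varepsilon\to 0$ gives $\pi_n^\gamma(\varphi)\to \pi(\varphi)$ a.s.

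The main obstacle will be the Taylor-expansion bookkeeping in step two: the random midpoint $x_{n+1/2}$ contributes a nontrivial $\gamma_{n+1}^{3/2}$ stochastic term, and verifying that the associated martingale increments $M_k$ have variance controlled by $\gamma_k^2(1 + V^q(x_{k-1}))$ uniformly requires the same nested conditioning on $\alpha_{n+1}$, $U'_{n+1}$, $U_{n+1}$ that drives Theorem~\ref{RLMCCLT}. Once those estimates are in hand, everything else reduces to Abel summation and standard martingale limit theory.
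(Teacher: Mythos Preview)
Your proposal is correct and follows essentially the same Lamberton--Pag\`es approach as the paper: a Lyapunov drift inequality for $V^p$ yielding $\sup_n\pi_n^\gamma(V^p)<\infty$ a.s., then identification of every weak limit via $\pi_n^\gamma(\mathcal{A}\phi)\to 0$ and the Echeverr\'ia--Weiss theorem, followed by uniform integrability to pass to $\varphi=o(V^k)$. Your sketch of the Abel-summation/martingale argument for the a.s.\ bound on $\pi_n^\gamma(V^p)$ is in fact more explicit than the paper's own step~2, which jumps directly from $\sup_n\mathbb{E}[V^p(x_n)]<\infty$ to the pathwise statement.
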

\begin{proof}[Proof of Lemma~\ref{WCRLMC}]
 The proof is divided into three steps:
 \begin{enumerate}[leftmargin=18pt,noitemsep]
\item [1)] For all $p\ge 1$, there exists $\tilde{\alpha}\in (0,1)$ and $\tilde{\beta}, n_0\in \mb{N}$ such that $\mb{E}[V^p(x_{n+1})|\mathcal{F}_n]\le V^p(x_n)+\gamma_{n+1}V^{p-1}(x_n)(\tilde{\beta}-\tilde{\alpha}V(x_n))$ for all $n\ge n_0$.\\
When $p=1$, the statement follows from assumption~\ref{ConditionV}.\\
When $p>1$, first we Taylor expand $V^p(x_{n+1})$ at $x_n$:
\begin{equation*}
    \begin{aligned}
    V^p(x_{n+1})&=V^p(x_n)+pV^{p-1}(x_n)\nabla V(x_n)\cdot(x_{n+1}-x_n)+\frac{1}{2}D^2(V^p)(\xi_{n+1})(x_{n+1}-x_n)^{\otimes 2}\\
    &=V^p(x_n)-\gamma_{n+1}pV^{p-1}(x_n)\nabla V(x_n)\cdot \nabla f(x_{n+\frac{1}{2}})+\sqrt{2\gamma_{n+1}}pV^{p-1}\nabla V(x_n)\cdot U_{n+1}\\
    &\ \ +\frac{1}{2}D^2(V^p)(\xi_{n+1})\left( -\gamma_{n+1}\nabla f(x_{n+\frac{1}{2}})+\sqrt{2\gamma_{n+1}}U_{n+1}\right)^{\otimes 2}\\
    &\le V^p(x_n)-\gamma_{n+1}pV^{p-1}(x_n)\nabla V(x_n)\cdot \nabla f(x_{n+\frac{1}{2}})+\sqrt{2\gamma_{n+1}}pV^{p-1}(x_n)\nabla V(x_n)\cdot U_{n+1}\\
    &\ \ +p\lambda_pV^{p-1}(\xi_{n+1})| -\gamma_{n+1}\nabla f(x_{n+\frac{1}{2}})+\sqrt{2\gamma_{n+1}}U_{n+1}|^2
    \end{aligned}
\end{equation*}
where $\xi_{n+1}$ is a point on the line segment joining $x_n$ and $x_{n+1}$ and $\lambda_p:=\frac{1}{2}\lambda_{D^2V+(p-1)(\nabla V \otimes \nabla V)/V}<+\infty$.\\
\\Due to $\nabla (\sqrt{V})=\frac{\nabla V}{2 V}$ and $|\nabla V|^2\le c_V V$, we have $\sqrt{V}$ is Lipschitz continuous and the Lipschitz constant $[\sqrt{V}]_1=\frac{1}{4}c_V<+\infty$. Hence for a point $\xi_{n+1}$ on the line segment between $x_n$ and $x_{n+1}$,
\begin{equation*}
    \begin{aligned}
    V^{p-1}(\xi_{n+1})&=(\sqrt{V})^{2(p-1)}(\xi_{n+1})\le \left(\sqrt{V}(x_n)+[\sqrt{V}]_1|x_{n+1}-x_n|\right)^{2(p-1)} \\
    &\le  \left\{
    \begin{aligned}
    &V^{p-1}(x_n)+[\sqrt{V}]_1^{2(p-1)}|x_{n+1}-x_n|^{2(p-1)},\ \ \ \ \ \ \ \ \ \ \ \ \ \ 2(p-1)\le 1\\
    &V^{p-1}(x_n)+c\left(V^{(2p-3)/2}(x_n)|x_{n+1}-x_n|+|x_{n+1}-x_n|^{2(p-1)}\right),\ \ \ \ 2(p-1)>1
    \end{aligned}
    \right.
    \end{aligned}
\end{equation*}
We can further bound 
\begin{equation*}
    \begin{aligned}
    |x_{n+1}-x_n|&=|-\gamma_{n+1}\nabla f(x_n)+\sqrt{2\gamma_{n+1}}U_{n+1}-\gamma_{n+1}\left(\nabla f(x_{n+\frac{1}{2}})-\nabla f(x_n)\right)|\\
    &\le \gamma_{n+1}|\nabla f(x_n)|+\sqrt{2\gamma_{n+1}}|U_{n+1}|+M\gamma_{n+1}|-\tilde{\gamma}_{n+1}\nabla f(x_n)+\sqrt{2\alpha_{n+1}{\gamma}_{n+1}}U_{n+1}'|\\
    &\le \gamma_{n+1}(1+M\alpha_{n+1}{\gamma}_{n+1})|\nabla f(x_n)|+\sqrt{2\gamma_{n+1}}|U_{n+1}|+\sqrt{2}M\gamma_{n+1}\alpha_{n+1}^{\frac{1}{2}}{\gamma}_{n+1}^{\frac{1}{2}}|U_{n+1}'|\\
    & \le C\sqrt{V}(x_n) \gamma_{n+1}^{\frac{1}{2}}(1+|U_{n+1}|+|U_{n+1}'|) 
    \end{aligned}
\end{equation*}
Plug these results into the last term in the first inequality we obtained from Taylor expansion:
\begin{equation*}
    \begin{aligned}
    p\lambda_pV^{p-1}(\xi_{n+1})| x_{n+1}-x_n|^2&\le p\lambda_pV^{p-1}(x_n)| x_{n+1}-x_n|^2 \\
    &\ \ +C p\lambda_p \left\{
    \begin{aligned}
    & | x_{n+1}-x_n|^{2p},\ \ \ \ \ \ \ \ 2p\le 3 \\
    &V^{(2p-3)/2}(x_n)| x_{n+1}-x_n|^3+| x_{n+1}-x_n|^{2p},\ \ \ 2p>3
    \end{aligned}
    \right.\\
    &\le p\lambda_p V^{p-1}(x_n)|x_{n+1}-x_n|^2+C\gamma_{n+1}^{p\wedge \frac{3}{2}} V^p(x_n)(1+|U_{n+1}|^{2p}+|U_{n+1}'|^{2p})
    \end{aligned}
\end{equation*}
We then take conditional expectation, there exists $\alpha>0$ and $\beta\ge 0$ such that for all $n\ge n_0$:
\begin{equation*}
    \begin{aligned}
    \mb{E}[V^p(x_{n+1})|\mathcal{F}_n]&\le V^p(x_n)-pV^{p-1}(x_n)(\alpha V(x_n)-\beta) \\
    &\ \ -p\gamma_{n+1}V^{p-1}(x_n)\mb{E}[\nabla V(x_n)\cdot \left(\nabla f(x_{n+\frac{1}{2}})-\nabla f(x_n)\right)|\mathcal{F}_n]\\
    &\ \ +2p\lambda_pV^{p-1}(x_n)\mb{E}[\gamma_{n+1}^2|\nabla f(x_{n+\frac{1}{2}})|^2+2\gamma_{n+1}|U_{n+1}|^2|\mathcal{F}_n]\\
    &\ \ +CV^p(x_n)(1+\mb{E}|U_{n+1}|^{2p}+\mb{E}|U_{n+1}'|^{2p})\gamma_{n+1}^{p\wedge \frac{3}{2}}\\
    &\le  V^p(x_n)-pV^{p-1}(x_n)(\alpha V(x_n)-\beta)+2p\lambda_p\mb{E}|U_{n+1}|^2\gamma_{n+1} V^{p-1}(x_n)\\
    &\ \ +CV^p(x_n)(1+\mb{E}|U_{n+1}|^{2p}+\mb{E}|U_{n+1}'|^{2p})\gamma_{n+1}^{p\wedge \frac{3}{2}}\\
    &\ \ + c_V M p\gamma_{n+1}^2V^p(X_n)+\sqrt{2} c_V M p\gamma_{n+1}^{\frac{3}{2}}\mb{E}|U_{n+1}'|V^{p-1/2}(x_n)\\
    &\ \ +c_V p\lambda_p \gamma_{n+1}^2 V^{p-1}(x_n) \mb{E}[V(x_{n+\frac{1}{2}})|\mathcal{F}_n]
    \end{aligned}
\end{equation*}
From $x_n$ to $x_{n+\frac{1}{2}}$, it's simply the Euler discretization with time step $\alpha_{n+1}{\gamma}_{n+1}$, we could use the result in~\cite{lamberton2002recursive}: there exists a $\bar{\alpha}>0$ and $\bar{\beta}\in \mb{R}$ such that for all $n\ge n_0$:
\[
\mb{E}[V(x_{n+\frac{1}{2}})|\mathcal{F}_n]\le V(x_n)(1-\bar{\alpha}\tilde{\gamma}_{n+1})+\bar{\beta}\tilde{\gamma}_{n+1}
\]
Therefore we have 
\[
\mb{E}[V^{x_{n+1}}|\mathcal{F}_n]\le (1-\alpha p \gamma_{n+1}+o(\gamma_{n+1}))V^p(x_n)+\gamma_{n+1}V^{p-1}(x_n)(p\beta+2p\lambda_p\mb{E}|U_{n+1}|^2+c_VMp\mb{E}|U_{n+1}'|^2)
\]
There exists $\hat{\alpha}>0$ and $\hat{\beta}\in \mb{R}$ such that for all $n\ge n_0$:
\[
\mb{E}[V^p(x_{n+1})|\mathcal{F}_n]\le V^p(x_n)+\gamma_{n+1}V^{p-1}(x_n)\left(\hat{\beta}-\hat{\alpha}V(x_n)\right)
\]
\item [2)] From step $1)$, we derive
\[
\sup_{n\ge n_0}\mb{E}[V^p(x_n)]\le (\frac{\hat{\beta}}{\hat{\alpha}})^p\vee \mb{E}[V^p(x_{n_0})]
\]
Hence $\sup_n \mb{E}[V^p(x_n)]<+\infty $ for all $p\ge 1$. Therefore $\sup_n \pi_n^{\gamma}(\omega, V^p)<+\infty\ \ \mb{P}-a.s$ for all $p\ge 1$. 
\item [3)] Identification of the weak limit: To identify the limit, we essentially follow the same steps in~\cite{lamberton2002recursive} and hence we omit the proof. 
\begin{enumerate}
\item \textbf{(Echeverrr\'{i}a-Weiss Theorem)} Let $E$ be a locally compact Polish space and $A$ a linear operator satisfying the positive maximum principle. Assume that its domain $\mathcal{D}(A)$ is an algebra everywhere dense in $(\mathcal{C}_0(E),\lv\ \rv_{\infty})$ containing a sequence $(f_n)_{n\in\mb{N}}$ satisfying 
\[
\sup_{n\in\mb{N}}\left(\lv f_n \rv_{\infty}+\lv \mathcal{L}f_n \rv_{\infty} \right)<+\infty,\ \ \forall x\in E,\ \ f_n(x)\to 1\ \ \text{and}\ \ Af_n(x)\to 0.
\]
If a distribution on $(E,\mathcal{B}(E))$ satisfies $\int_E Af d\nu =0$ for every $f\in \mathcal{D}(A)$, then there exists a stationary solution for the martingale problem $(A, \nu)$ (this means that there exists a stationary continuous-time homogeneous Markov process with infinitesimal generator $A$ and invariant distribution $\nu$). 
\item The generator of the Langevin dynamics, $\mathcal{A}$, satisfies the assumptions of the Echeverrr\'{i}a-Weiss theorem.
\item Under assumption~\ref{ConditionV}, for every bounded Lipschitz continuous function $\varphi: \mb{R}^d\to \mb{R}$, $\lim_n \frac{1}{\Gamma_n}\sum_{k=1}^n \mb{E}[\varphi(x_k)-\varphi(x_{k-1})|\mathcal{F}_{k-1}]=0\ \ \mb{P}-a.s$.
\item Under assumption~\ref{ConditionV}, for every twice continuously differentiable function $\varphi$ with compact support, $\lim_n \left(\frac{1}{\Gamma_n}\sum_{k=1}^n \mb{E}[\varphi(x_k)-\varphi(x_{k-1})|\mathcal{F}_{k-1}]-\pi_n^\gamma(\mathcal{A}\varphi) \right)=0\ \ \mb{P}-a.s$.
\end{enumerate}
$a),b),c),d)$ together imply that the weak limit of the empirical distribution $\pi_n^\gamma$ is $\pi$, i.e the stationary distribution of the Langevin dynamics. 
\end{enumerate}
\end{proof}
\begin{proof}[Proof of Theorem~\ref{RLMCCLT}]    
    Since $f$ satisfies assumption~\ref{assumptionsonf}, we can show that the Langevin dynamics satisfies assumption~\ref{ConditionV}. Therefore lemma~\ref{WCRLMC} is true. Then we may use the following method to discuss the CLT of ~\eqref{RLMCdisc}.
    \begin{equation*}
        \begin{aligned}
        x_k-x_{k-1}&=-\gamma_k\left(\nabla f(x_{k-1})+D^2f(x_{k-1})(x_{k-\frac{1}{2}}-x_{k-1})+r_2(x_{k-\frac{1}{2}},x_{k-1})\right)+\sqrt{2\gamma_k}U_k \\
        &=-\gamma_k\nabla f(x_{k-1})+\sqrt{2\gamma_k}U_k-\gamma_kD^2 f(x_{k-1})(x_{k-\frac{1}{2}}-x_{k-1})-\gamma_kr_2(x_{k-\frac{1}{2}},x_{k-1})\\
        &=-\gamma_k\nabla f(x_{k-1})+\sqrt{2\gamma_k}U_k+\alpha_k \gamma_k^2D^2f(x_{k-1})\nabla f(x_{k-1})-\sqrt{2\alpha_k}\gamma_k^{\frac{3}{2}}\nabla ^2f(x_{k-1})U_k'-\gamma_kr_2(x_{k-\frac{1}{2}},x_{k-1})
        \end{aligned}
    \end{equation*}
    where 
    \begin{equation*}
        \begin{aligned}
        r_2(x_{k-\frac{1}{2}},x_{k-1})&=\nabla f(x_{k-\frac{1}{2}})-\nabla f(x_{k-1})-D^2 f(x_{k-1})(x_{k-1}-x_{k-\frac{1}{2}})\\
        &=\frac{1}{2}D^3f(x_{k-1})(x_{k-\frac{1}{2}}-x_{k-1})^{\otimes2}+\frac{1}{6}D^4f(x_{k-1})(x_{k-\frac{1}{2}}-x_{k-1})^{\otimes3}+O(\gamma_k^2)\\
        &=\alpha_k \gamma_kD^3f(x_{k-1})U_k'^{\otimes2}-\sqrt{2}\alpha_k^{\frac{3}{2}}\gamma_k^{\frac{3}{2}}\langle D^3f(x_{k-1}); \nabla f(x_{k-1}), U_k'\rangle \\
        &\ +\frac{\sqrt{2}}{3}\alpha_k^{\frac{3}{2}}D^4f(x_{k-1})U_k'^{\otimes4}+O(\gamma_k^2)
        \end{aligned}
    \end{equation*}
    Then
        \begin{equation*}
        \begin{aligned}
        x_k-x_{k-1}&=-\gamma_k\nabla f(x_{k-1})+\sqrt{2\gamma_k}U_k-\sqrt{2\alpha_k}\gamma_k^{\frac{3}{2}}\nabla ^2f(x_{k-1})U_k'\\
        &\ +\alpha_k \gamma_k^2D^2f(x_{k-1})\nabla f(x_{k-1})-\alpha_k\gamma_k^2D^3f(x_{k-1})U_k'^{\otimes2}+O(\gamma_k^{\frac{5}{2}})
        \end{aligned}
    \end{equation*}
    We can decompose $\phi(x_k)$:
    \begin{equation*}
        \begin{aligned}
        \phi(x_k)-\phi(x_{k-1})&=\nabla \phi(x_{k-1})(x_k-x_{k-1})+\frac{1}{2}D^2\phi(x_{k-1})(x_k-x_{k-1})^{\otimes2}+\frac{1}{6}D^3\phi(x_{k-1})(x_k-x_{k-1})^{\otimes3}\\
        &\ +\frac{1}{24}D^4\phi(x_{k-1})(x_k-x_{k-1})^{\otimes4}+O(\gamma_k^{\frac{5}{2}})\\
        &=\nabla \phi(x_{k-1})(\sqrt{2}\gamma_k^{\frac{1}{2}}U_k-\gamma_k\nabla f(x_{k-1})-\sqrt{2\alpha_k}\gamma_k^{\frac{3}{2}}D^2f(x_{k-1})U_k'\\
        &\ +\alpha_k\gamma_k^2D^2f(x_{k-1})\nabla f(x_{k-1})-\alpha_k\gamma_k^2D^3f(x_{k-1})U_k'^{\otimes2} )\\
        &\ +\frac{1}{2}D^2\phi(x_{k-1})\left(\sqrt{2}\gamma_k^{\frac{1}{2}}U_k-\gamma_k\nabla f(x_{k-1})-\sqrt{2\alpha_k}\gamma_k^{\frac{3}{2}}D^2 f(x_{k-1})U_k'\right)^{\otimes2}\\
        &\ +\frac{1}{6}D^3\phi(x_{k-1})\left(\sqrt{2}\gamma_k^{\frac{1}{2}}U_k-\gamma_k\nabla f(x_{k-1})\right)^{\otimes3}+\frac{1}{24}D^4\phi(x_{k-1})(\sqrt{2}\gamma_k^{\frac{1}{2}}U_k)^{\otimes4}+O(\gamma_k^{\frac{5}{2}})
        \end{aligned}
    \end{equation*}
    If $\mathcal{A}$ is the generator of Langevin dynamics and summing up over $k$:
        \begin{align*}
        \sum_{k=1}^n \gamma_k\mathcal{A}\phi(x_{k-1})&= \phi(x_n)-\phi(x_0)-\sqrt{2}\sum_{k=1}^n \gamma_k^{\frac{1}{2}}\nabla \phi(x_{k-1})U_k -\sum_{k=1}^n \gamma_k\left(D^2\phi(x_{k-1})U_k^{\otimes2}-\mb{E}[D^2\phi(x_{k-1})U_k^{\otimes2}|\mathcal{F}_{k-1}]\right)\\
        &\ +\sqrt{2}\sum_{k=1}^n\gamma_k^{\frac{3}{2}}\langle D^2\phi(x_{k-1});\nabla f(x_{k-1}), U_k \rangle   -\frac{\sqrt{2}}{3}\sum_{k=1}^n\gamma_k^{\frac{3}{2}}D^3\phi(x_{k-1})U_k^{\otimes3}\\
        &\ \ +\sum_{k=1}^n\sqrt{2\alpha_k}\gamma_k^{\frac{3}{2}}\langle D^2f(x_{k-1}); \nabla \phi(x_{k-1}), U_k'\rangle +\sum_{k=1}^n\gamma_k^2\langle D^3\phi(x_{k-1});\nabla f(x_{k-1}), U_k^{\otimes2}\rangle \\
        &\ \ - \sum_{k=1}^n\alpha_k\gamma_k^2\langle D^2f(x_{k-1}); \nabla \phi(x_{k-1}),\nabla f(x_{k-1})\rangle + \sum_{k=1}^n\alpha_k \gamma_k^2\langle D^3f(x_{k-1}); \nabla \phi(x_{k-1}),U_k'^{\otimes2}\rangle \\
        &\ \ -\frac{1}{2}\sum_{k=1}^n \gamma_k^2 D^2\phi(x_{k-1})\nabla f(x_{k-1})^{\otimes2} +\sum_{k=1}^n2\alpha_k^{\frac{1}{2}}\gamma_k^2\langle D^2\phi(x_{k-1}); D^2\phi(x_{k-1})U_k',U_k\rangle \\
        &\ \ -\frac{1}{6}\sum_{k=1}^n\gamma_k^2D^4\phi(x_{k-1})U_k^{\otimes4}+\sum_{k=1}^n O(\gamma_k^\frac{5}{2}) \\
        &:=N_n^{(0)}+N_n^{(\frac{1}{2})}+N_n^{(1)}+N_n^{(\frac{3}{2})}+N_n^{(2)}+N_n^{(\frac{5}{2})}
        \end{align*}
    In the fast decreasing time step situation($\sum_{k=1}^n\gamma_k^2/\sqrt{\Gamma_n}\to 0$), the CLT for ~\eqref{RLMCdisc} is the same as that of LMC. In the slowly decreasing time step situation, when $\sum_{k=1}^n\gamma_k^2/\sqrt{\Gamma_n}\to \hat{\gamma}\in (0,+\infty]$:
    \begin{enumerate}
        \item [a)] $\frac{\phi(x_n)-\phi(x_0)}{\Gamma_n^{(2)}}\to 0$ because $(x_n)$ is tight and $\phi$ is continuous.
        \item [b)] $\frac{-\sqrt{2}\sum_{k=1}^n \gamma_k^{\frac{1}{2}}\nabla \phi(x_{k-1})U_k}{\sqrt{\Gamma_n}}\implies \mathcal{N}(0,2\int_{\mb{R}^d}|\nabla \phi(x)|^2 \pi(dx))$. Therefore,
        \begin{equation*}
            \frac{-\sqrt{2}\sum_{k=1}^n \gamma_k^{\frac{1}{2}}\nabla \phi(x_{k-1})U_k}{\Gamma_n^{(2)}}\implies
            \left\{
            \begin{aligned}
            \mathcal{N}(0,2\hat{\gamma}^{-2} \int_{\mb{R}^d}|\nabla \phi(x)|^2 \pi(dx)),\ \ \ \ \ \text{when }\hat{\gamma}<+\infty \\
            0\ \ \ \ \ ,\ \ \ \ \ \ \ \ \ \ \ \ \ \ \ \text{when }\hat{\gamma}=+\infty
            \end{aligned}
            \right.
        \end{equation*}
        \item [c)] $\frac{-\sum_{k=1}^n \gamma_k\left(D^2\phi(x_{k-1})U_k^{\otimes2}-\mb{E}[D^2\phi(x_{k-1})U_k^{\otimes2}|\mathcal{F}_{k-1}]\right)}{\sqrt{\Gamma_n}}\to 0$ in $L^2$.
        \item [d)] $\frac{\sqrt{2}\sum_{k=1}^n\gamma_k^{\frac{3}{2}}\langle D^2\phi(x_{k-1});\nabla f(x_{k-1}), U_k \rangle}{\sqrt{\Gamma_n}}\to 0$ in $L^2$. \\
        $\frac{-\frac{\sqrt{2}}{3}\sum_{k=1}^n\gamma_k^{\frac{3}{2}}D^3\phi(x_{k-1})U_k^{\otimes3}}{\sqrt{\Gamma_n}}\to 0$ in probability because $\mb{E}[U_k^{\otimes3}]=0$.\\
        $\frac{\sum_{k=1}^n\sqrt{2\alpha_k}\gamma_k^{\frac{3}{2}}\langle D^2f(x_{k-1}); \nabla \phi(x_{k-1}), U_k'\rangle}{\sqrt{\Gamma_n}}\to 0$ in $L^2$.\\
        Therefore $\frac{N_n^{(\frac{3}{2})}}{\Gamma_n^{(2)}}\to 0$ in probability.
        \item [e)] $\frac{\sum_{k=1}^n\gamma_k^2\langle D^3\phi(x_{k-1});\nabla f(x_{k-1}), U_k^{\otimes2}\rangle}{\Gamma_n^{(2)}}\to \int_{\mb{R}^d}\int_{\mb{R}^d}\langle D^3\phi(x);\nabla f(x), u^{\otimes2}\rangle\mu(du)\pi(dx)$ in probability.\\
        $\frac{ -\sum_{k=1}^n\alpha_k \gamma_k^2\langle D^2f(x_{k-1}); \nabla \phi(x_{k-1}),\nabla f(x_{k-1})\rangle}{\Gamma_n^{(2)}}\to -\frac{1}{2}\int_{\mb{R}^d}\langle D^2f(x); \nabla \phi(x),\nabla f(x)\rangle \pi(dx)$ in probability.\\
        $\frac{  \sum_{k=1}^n\alpha_k\gamma_k^2\langle D^3f(x_{k-1}); \nabla \phi(x_{k-1}),U_k'^{\otimes2}\rangle}{\Gamma_n^{(2)}}\to \frac{1}{2}\int_{\mb{R}^d}\int_{\mb{R}^d}\langle D^3 f(x); \nabla \phi(x), u^{\otimes2}\rangle \mu(du)\pi(dx)$ in probability.\\
        $\frac{ -\frac{1}{2}\sum_{k=1}^n \gamma_k^2 D^2\phi(x_{k-1})\nabla f(x_{k-1})^{\otimes2}}{\Gamma_n^{(2)}}\to -\frac{1}{2}\int_{\mb{R}^d}D^2\phi(x)\nabla f(x)^{\otimes2}\pi(dx)$ in probability.\\
        $\frac{\sum_{k=1}^n2\alpha_k^{\frac{1}{2}}\gamma_k^2\langle D^2\phi(x_{k-1}); D^2\phi(x_{k-1})U_k',U_k\rangle}{\Gamma_n^{(2)}}\to  \int_{\mb{R}^d}\int_0^1\int_{\mb{R}^{2d}} 2\alpha^{\frac{1}{2}}\langle D^2\phi(x),D^2\phi(x)u',u \rangle \mu_{\alpha}(du,du')d\alpha \pi(dx) $ in probability, where $\mu_{\alpha}(du,du')$ is the joint measure of $(U_n,U_n')$ for all $n$ conditioned on $\alpha_n=\alpha$.  With some calculation, we can simplify the limit as $\int_{\mb{R}^d} trace(D^2\phi(x)^2) \pi(dx)$. Note that in deriving the above limit, we used the fact that the cross-covariance matrix between $(U_n)$ and $(U_n')$ is $\sqrt{\alpha_n} I_d$.\\
        $\frac{-\frac{1}{6}\sum_{k=1}^n\gamma_k^2D^4\phi(x_{k-1})U_k^{\otimes4}}{\Gamma_n^{(2)}}\to -\frac{1}{6}\int_{\mb{R}^d}\int_{\mb{R}^d}D^4\phi(x)u^{\otimes4}\mu(du)\pi(dx)$ in probability.\\
        Therefore
        \[
        \frac{N_n^{(2)}}{\Gamma_n^{(2)}}\to \varrho\ \ \ \ \ \ \ \ \ \ \ \text{in probability}
        \]
        where
        \begin{equation*}
            \begin{aligned}
            \varrho&=\int_{\mb{R}^d}\int_{\mb{R}^d}\langle D^3\phi(x);\nabla f(x), u^{\otimes2}\rangle\mu(du)\pi(dx)
            -\frac{1}{2}\int_{\mb{R}^d}\langle D^2f(x); \nabla \phi(x),\nabla f(x)\rangle \pi(dx) \\
            &\ +\frac{1}{2}\int_{\mb{R}^d}\int_{\mb{R}^2}\langle D^3 f(x); \nabla \phi(x), u^{\otimes2}\rangle \mu(du)\pi(dx)
            -\frac{1}{2}\int_{\mb{R}^d}D^2\phi(x)\nabla f(x)^{\otimes2}\pi(dx)\\
            &\ +\int_{\mb{R}^d} trace(D^2\phi(x)^2) \pi(dx)-\frac{1}{6}\int_{\mb{R}^d}\int_{\mb{R}^d}D^4\phi(x)u^{\otimes4}\mu(du)\pi(dx)
            \end{aligned}
        \end{equation*}
and $\mu$ is the distribution for a $d$-dimensional standard Gaussian random variable.
        \item [f)] $ \frac{N_n^{\frac{5}{2}}}{\Gamma_n^{(2)}}\to 0$ in $L^1$.
    \end{enumerate}
    As a conclusion, we obtain the proof of part (1) of the theorem:
    \begin{equation*}
        \frac{\sum_{k=1}^n \gamma_k\mathcal{A}\phi(x_{k-1})}{\Gamma_n^{(2)}}\to 
        \left\{
            \begin{aligned}
            \mathcal{N}(\varrho,2\hat{\gamma}^{-2} \int_{\mb{R}^d}|\nabla \phi(x)|^2 \pi(dx)),\ \ \ \ \ \text{when }\hat{\gamma}<+\infty \\
            \varrho\ \ \ \ \ ,\ \ \ \ \ \ \ \ \ \ \ \ \ \ \ \text{when }\hat{\gamma}=+\infty
            \end{aligned}
            \right.
    \end{equation*}

For the fast decreasing step, i.e., part (2) of the theorem, the proof follows by the same arguments in the corresponding part of Theorem 10 in~\cite{lamberton2002recursive} and hence we omit it. 
\end{proof}

\section{Proofs for Section~\ref{sec:HMCresults}}
In this section, we would denote the drift function that appears in~\ref{underdampedlangevidiff} as $b(x,v)$, i.e.
\begin{align*}
b(x,v)=\begin{bmatrix}
        v \\
        -2v-u\nabla f(x) \\
        \end{bmatrix}
\end{align*} 
\begin{assumption}\label{conditionVHMC}
  There exists a twice differentiable function $V:\ \mb{R}^{2d}\to [1,\infty)$ such that: (0) $\lim_{\|(x,v)\|\to \infty} V(x,v)=+\infty$, (1) there exists $\alpha>0$ and $\beta>0$: $\langle \nabla V(x,v), b(x,v) \rangle\le -\alpha V(x,v)+\beta$ for every $(x,v)$, (2) there exists $c_V>0$: $\|\nabla V(x,v)\|^2+\| b(x,v)\|^2\le c_V V(x,v)$ for every $(x,v)$, and (3) $\lv D^2 V\rv_{\infty}: = \sup_{(x,v) \in \mathbb{R}^{2d}} \| D^2V\|_{\text{op}}<\infty$.

 % {\color{red} Condition~3.1 in Mattingly et.al.2002 is satisfied for strongly convex functions.Choosing $V(x)$ as their Lyapunov function in (3.6), we can verify all these conditions.}
\end{assumption}
\begin{lemma}\label{AftoVHMC}
Assumption~\ref{assumptionsonf} implies Assumption~\ref{conditionVHMC} when $u\in (0,\frac{4}{2M-m})$.
\end{lemma}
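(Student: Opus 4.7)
The plan is to exhibit an explicit Lyapunov function $V$ and verify each of the four properties of Assumption~\ref{conditionVHMC}. A natural ansatz, motivated by the dissipation structure of~\eqref{underdampedlangevidiff} with friction coefficient $\beta=2$, is
\begin{align*}
V(x,v) = f(x)-f(x^*) + \tfrac{1}{2u}\|v\|^2 + \lambda\langle v, x-x^*\rangle + \mu\|x-x^*\|^2 + 1,
\end{align*}
where $x^*:=\arg\min_x f$, and $\lambda,\mu>0$ are constants to be tuned in terms of $m,M,u$. The cross term $\langle v,x-x^*\rangle$ is essential: without it, $\langle \nabla V, b\rangle$ produces dissipation only in $\|v\|^2$, leaving the position variable unconstrained.

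Properties (0), (2), and (3) should be routine. Completing the square in $v$ gives
\begin{align*}
V = (f(x)-f(x^*)) + \tfrac{1}{2u}\bigl\|v + u\lambda(x-x^*)\bigr\|^2 + \bigl(\mu - \tfrac{u\lambda^2}{2}\bigr)\|x-x^*\|^2 + 1,
\end{align*}
so choosing $\mu > u\lambda^2/2$ makes $V$ coercive and at least $1$, yielding (0). The Hessian $D^2V$ equals $D^2f$ plus a fixed symmetric block, with operator norm bounded by $M+C(\lambda,\mu,u)$, which gives (3). For (2), the smoothness inequality $\|\nabla f(x)\|^2 \le 2M(f(x)-f(x^*))$ and the strong-convexity lower bound $f(x)-f(x^*) \ge \tfrac{m}{2}\|x-x^*\|^2$ together imply $\|\nabla V(x,v)\|^2 + \|b(x,v)\|^2 \le c_V V(x,v)$ after a few applications of Young's inequality.

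The main obstacle is property (1). A direct calculation, exploiting the cancellation of $\langle \nabla f(x), v\rangle$ between $\nabla_x V \cdot v$ and $\nabla_v V \cdot u\nabla f(x)$, yields
\begin{align*}
\langle \nabla V, b\rangle = \bigl(\lambda - \tfrac{2}{u}\bigr)\|v\|^2 + 2(\mu-\lambda)\langle x-x^*, v\rangle - u\lambda\langle x-x^*, \nabla f(x)\rangle.
\end{align*}
Substituting the strong-convexity identity $\langle x-x^*,\nabla f(x)\rangle \ge (f(x)-f(x^*)) + \tfrac{m}{2}\|x-x^*\|^2$, and absorbing the $\langle x-x^*,v\rangle$ cross term via Young's inequality, produces a bound of the form $-\alpha V + \beta$ provided the resulting $2\times 2$ quadratic form in $(\|x-x^*\|,\|v\|)$, together with the negative multiple of $f(x)-f(x^*)$, is negative definite. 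This in turn requires a determinantal inequality relating $\lambda,\mu$, and $u$. A careful bookkeeping will show that the feasibility region for $(\lambda,\mu)>0$ is non-empty precisely when $u(2M-m)<4$; the smoothness bound $\|\nabla f(x)\|^2 \le M\langle x-x^*,\nabla f(x)\rangle$ enters when controlling $\|\nabla V\|^2$-type remainders against the dissipation, and the offset $-m$ in $2M-m$ reflects the slack granted by strong convexity in the $\|x-x^*\|^2$ coefficient. I anticipate the main technical effort to lie in this joint tuning: matching the three Young-inequality parameters so that the coefficient constraints on $\|v\|^2$, $\|x-x^*\|^2$, and $f(x)-f(x^*)$ are simultaneously satisfied exactly in the range $u\in(0,4/(2M-m))$ stated by the lemma.
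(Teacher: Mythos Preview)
Your approach is a valid alternative, but a genuinely different route from the paper's. The paper dispenses with the potential term and all tuning parameters, taking the purely quadratic
\[
V(x,v)=\|x-x_*\|^2+\|x-x_*+v\|^2+1.
\]
Condition~(1) is then obtained in two strokes: strong convexity of $f$ at the point $x_*-v$ gives $\langle x-x_*+v,\nabla f(x)\rangle\ge f(x)-f(x_*-v)+\tfrac{m}{2}\|x-x_*+v\|^2$, and $M$-smoothness at $x_*$ gives $f(x_*-v)\le f(x_*)+\tfrac{M}{2}\|v\|^2$. Together,
\[
\langle\nabla V,b\rangle\le -um\|x-x_*+v\|^2-um\|x-x_*\|^2-(2-uM)\|v\|^2,
\]
and when $uM>2$ one absorbs the positive $\|v\|^2$ term via $\|v\|^2\le 2\|x-x_*+v\|^2+2\|x-x_*\|^2$, from which the threshold $u<4/(2M-m)$ drops out in one line. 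No Young inequalities, no parameter matching.

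Your ansatz with the $f$-term and free $(\lambda,\mu)$ also works, but your prediction that the bookkeeping yields feasibility \emph{precisely} when $u(2M-m)<4$ is mistaken. In your drift identity, $M$ never enters condition~(1): only $m$ appears, through the lower bound on $\langle x-x^*,\nabla f(x)\rangle$. The constant $M$ enters only conditions~(2) and~(3), where it affects $c_V$ and $\|D^2V\|_\infty$ but places no constraint on $u$. Indeed, taking $\mu=\lambda\in(0,2/u)$ annihilates the cross term $2(\mu-\lambda)\langle x-x^*,v\rangle$ in your drift identity, and then for $\alpha>0$ small enough all three coefficients in $\langle\nabla V,b\rangle+\alpha V$ are strictly negative --- for \emph{every} $u>0$. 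So your route would actually prove more than the lemma claims; the restriction $u<4/(2M-m)$ is an artifact of the paper's particular quadratic $V$, not an intrinsic obstruction. This is a bonus rather than a gap, but you should drop the claim that the stated threshold is what your bookkeeping will recover.
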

\begin{proof}[Proof of Lemma~\ref{AftoVHMC}] 
For simplicity, %{we choose $u=1/M$ in our proof. For arbitrary $u>0$, lemma 1 can be proved similarly.}
 We choose $V(x,v)=\lv x-x_* \rv^2+\lv x-x_*+v \rv^1+1$ with $f(x_*)=\min f(x)$. Now we check conditions 0), 1), 2), 3) in $(\mathcal{L}_{V,\infty})$ are satisfied.\\
\begin{enumerate}[leftmargin=18pt,noitemsep]
    \item [0)] It's onvious that $\lim_{|(x,v)|\to +\infty} V(x,v)=+\infty$ and $V(x,v)\ge 1$ for all $(x,v)\in \mb{R}^d$.
    \item [3)] The Hessian of $V$ we choose is
    \begin{align*}
        D^2V(x,v)=\begin{bmatrix}
        4I_d & 2I_d \\
        2I_d & 2I_d
        \end{bmatrix}
    \end{align*}
    For arbitrary $(x,v)^T,(y,w)^T\in \mb{R}^{2d}$:
    \begin{align*}
    \lv D^2V(x,v)(y,w)^T \rv^2&=\lv\begin{bmatrix}
    4y+2w \\
    2y+2w
    \end{bmatrix}\rv^2 \\
    &\le 40 \lv (y,w)^T \rv^2
    \end{align*}
    Therefore $\lv D^2 V \rv_{\infty}<\infty$. \\   
    \item [2)] Take gradient of the $V$ we choose:
    \begin{align*}
        \nabla V(x,v)=\begin{bmatrix}
        2(x-x_*)+2(x-x_*+v) \\
        2(x-x_*+v) \\
        \end{bmatrix}
    \end{align*}
    Then for all $(x,v)\in \mb{R}^{2d}$,
    \begin{align*}
        |\nabla V(x,v)|^2+|b(x,v)|^2&\le 2(4\lv x-x_* \rv^2+4\lv x-x_*+v \rv^2)+4\lv x-x_*+v \rv^2\\
        &\ +\lv v \rv^2+2(4\lv v \rv^2+u^2\lv \nabla f(x) \rv^2)\\
        &\le 8\lv x-x_* \rv^2+12\lv x-x_*+v \rv^2+9\lv v \rv^2+2u^2M^2\lv x-x_* \rv^2\\
        &\le \max\{26+2u^2M^2, 30 \} V(x,v) \\
     \end{align*}
    \item [1)] Last we consider 
    \begin{align*}
        \langle \nabla V(x,v), b(x,v)\rangle &= 2(x-x_*)\cdot v+2(x-x_*+v)\cdot v-4(x-x_*+v)\cdot v\\
        &\ -2u(x-x_*+v)\cdot \nabla f(x) \\
        &\le -2\lv v \rv^2-2u\left[ f(x)-f(x_*-v)+\frac{m}{2}\lv x-x_* +v\rv^2 \right] \\
        &\le -2\lv v \rv^2-um\lv x-x_*+v \rv^2-2u\left( f(x_*)+\frac{m}{2}\lv x-x_* \rv^2 \right)\\
        &\ +2u\left( f(x_*)+\frac{M}{2}\lv v \rv^2 \right)\\
        &= -um\lv x-x_*+v \rv^2-um\lv x-x_* \rv^2-(2-uM)\lv v \rv^2 \\
    \end{align*}
The second inequality follows from the fact that $f$ is $m$-strongly convex.\\
When $u\in (0,\frac{2}{M}]$, $\langle \nabla V(x,v), b(x,v)\rangle\le -umV(x,v)+um $ for all $(x,v)\in\mb{R}^{2d}$. Therefore 1) is satisfied. \\
When $u>\frac{2}{M}$, we can use triangle inequality to further bound our result:
      \begin{align*}
       \langle \nabla V(x,v), b(x,v)\rangle &\le  -um\lv x-x_*+v \rv^2-um\lv x-x_* \rv^2+(uM-2)\lv v \rv^2\\
       &\le [-um+2(uM-2)](\lv x-x_*+v \rv^2+\lv x-x_* \rv^2)\\
       &\le -[4-u(2M-m)] V(x,v)-[4-u(2M-m)]\\
     \end{align*}
When $u\in (\frac{2}{M},\frac{4}{2M-m})$, 1) is satisfied because $4-u(2M-m)>0$. \\
Therefore,  1) holds when $u\in (0,\frac{4}{2M-m})$. 
 \end{enumerate}   
     \end{proof}
\begin{remark}\label{ESTV2D} 
For the $V(x,v)$ we choose in the proof, under assumption~\ref{assumptionsonf}, we can verify that: $V(x,v)=O(|x|^2+|v|^2)$ when $|(x,v)|\to +\infty$. We will use this fact later in the proof when we establish the CLT statement. 
\end{remark}

\subsection{Proofs for Section~\ref{ANLSRHMC}}
\begin{proof}[Proof of Theorem~\ref{RHMCErgodicity}]
Under the assumption~\ref{conditionVHMC}, we can show that the following Lyapunov condition is satisfied for small $h$.\\
    \textbf{(Lyapunov Condition):} There exists a function $V:\ \mb{R}^{2d}\to [1,\infty)$ such that:
    \begin{enumerate}[leftmargin=18pt,noitemsep]
        \item [0)] $\lim_{|(x,v)|\to \infty} V(x,v)=+\infty$,
        \item [1)] There exists $\hat{\alpha}\in(0,1)$ and $\hat{\beta}\ge 0$: $\mb{E}[V(x_{n+1},v_{n+1})|\mathcal{F}_n]\le \hat{\alpha} V(x_n,v_n)+\hat{\beta}$.
    \end{enumerate}
    \textbf{Proof:} To show that assumption~\ref{conditionVHMC} implies Lyapunov condition, we first do Taylor expansion of $V(x_{n+1},v_{n+1})$ at $(x_n,v_n)$:
    \begin{equation*}
        \begin{aligned}
        V(x_{n+1},v_{n+1})&=V(x_n,v_n)+\nabla V(x_n,v_n) \cdot (x_{n+1}-x_n, v_{n+1}-v_n)^T +\frac{1}{2}D^2V(\theta_n)[(x_{n+1}-x_n, v_{n+1}-v_n)^T]^{\otimes 2}
        \end{aligned}
    \end{equation*}
    where $\theta_n$ is a random point on the line segment joining $(x_n,v_n)$ and $(x_{n+1},v_{n+1})$. Use the RULMC algorithm and part (a) of Assumption~\ref{assumptionsonf}:
\begin{align*}
        \mb{E}[V(x_{n+1},v_{n+1})|\mathcal{F}_n] &\le V(x_n,v_n)+\nabla V(x_n,v_n)\cdot 
         \begin{bmatrix}
          \frac{1-e^{-2h}}{2}v_n-\frac{u}{2}(h-\frac{1-e^{-2h}}{2})\nabla f(x_n) \\
          -2\frac{1-e^{-2h}}{2} v_n-u \frac{1-e^{-2h}}{2} \nabla f(x_n) \\
         \end{bmatrix}\\
         &\ -\nabla V(x_n.v_n) \cdot 
         \begin{bmatrix}
          \frac{u}{2}(h-\frac{1-e^{-2h}}{2})\mb{E}[\nabla f(x_{n+\frac{1}{2}})- \nabla f(x_n)|\mathcal{F}_n] \\
          u \frac{1-e^{-2h}}{2}\mb{E}[\nabla f(x_{n+\frac{1}{2}})- \nabla f(x_n)|\mathcal{F}_n] \\
         \end{bmatrix}\\
         &\ +\frac{3M}{2}\left[5(\frac{1-e^{-2h}}{2})^2|v_n|^2+u^2h^2|\nabla f(x_n)^2|+({\sigma_{n+1}^{(2)}}^2+4{\sigma_{n+1}^{(3)}}^2)ud  \right]\\
         &\ +\frac{3M}{2}u^2h^2\mb{E}[ |\nabla f(x_{n+\frac{1}{2}})|^2-|\nabla f(x_n)|^2 |\mathcal{F}_n]
\end{align*}
where we can further estimate
\begin{align*}
        \mb{E}[\nabla f(x_{n+\frac{1}{2}})- \nabla f(x_n)|\mathcal{F}_n]&\le M \mb{E}[x_{n+\frac{1}{2}}-x_n|\mathcal{F}_n]\\
        &\le M\frac{1}{2h}(h-\frac{1-e^{-2h}}{2})|v_n|+\sqrt{ud}M\sigma_{n+1}^{(1)}\\
        &\ +\frac{u}{2}(\frac{h}{2}-\frac{h-\frac{1-e^{-2h}}{2}}{2h})|\nabla f(x_n)| \\
\end{align*}
and there exists $\xi_n$ such that $|\nabla f(x_{n+\frac{1}{2}})|^2-|\nabla f(x_n)|^2 =2(x_{n+\frac{1}{2}}-x_n)^T D^2 f(\xi_n) \nabla f(\xi_n)$ and $\xi_n$ is on the line segment joining $x_n$ and $x_{n+\frac{1}{2}}$. Therefore $|\xi_n-x_n|\le |x_{n+\frac{1}{2}}-x_n|$. then we have
\begin{align*}
       \mb{E}[ |\nabla f(x_{n+\frac{1}{2}})|^2-|\nabla f(x_n)|^2 |\mathcal{F}_n]&\le 2M\mb{E}[|\nabla f(\xi_n)||x_{n+\frac{1}{2}}-x_n||\mathcal{F}_n]\\
       &\le 2M|\nabla f(x_n)|\mb{E}[|x_{n+\frac{1}{2}}-x_n||\mathcal{F}_n]+2M^2\mb{E}[|x_{n+\frac{1}{2}}-x_n|^2|\mathcal{F}_n]\\
       &\le |\nabla f(x_n)|^2+3M^2\mb{E}[|x_{n+\frac{1}{2}}-x_n|^2|\mathcal{F}_n]\\ 
       &\le |\nabla f(x_n)|^2+6M^2(\frac{h^2}{3}|v_n|^2+\frac{u^2h^4}{20}|\nabla f(x_n)|^2+ud{\sigma_{n+1}^{(1)}}^2)
\end{align*}
When $h$ is small, we can use polynomials of $h$ to bound those exponential coefficients. We can obtain that there exists $C>0$:
\begin{align*}
     \mb{E}[V(x_{n+1},v_{n+1})|\mathcal{F}_n] &\le V(x_n,v_n)+h \nabla V(x_n,v_n) \cdot b(x_n,v_n)^T+Ch^2(d+|v_n|^2+|\nabla f(x_n)|^2)
\end{align*}
then assumption~\ref{conditionVHMC} imples that there exists $\alpha>0, \beta>0$ such that
\begin{align*}
     \mb{E}[V(x_{n+1},v_{n+1})|\mathcal{F}_n] &\le (1-\alpha h+C c_V h^2)V(x_n,v_n)+Ch^2d+\beta
\end{align*}
When $h$ is small, there exists $\hat{\alpha}=1-\alpha h+C c_V h^2\in (0,1)$ and $\hat{\beta}=Ch^2d+\beta>0$ such that $\mb{E}[V(x_{n+1},v_{n+1})|\mathcal{F}_n] \le \hat{\alpha}V(x_n,v_n)+\hat{\beta}$. \qed \\
\\
    Once we have the Lyapunov condition, we can define the stopping time $\tau_C=\inf \{n>0: (x_n,v_n)\in C\}$ and show that $\sup_{(x,v)\in C}\mb{E}_{(x,v)}[\tau_C]\le M_C<\infty$ for all small set C. Then uniqueness of stationary probability measure and ergodicity all follow by Theorem 1.3.1 in~\cite{meyn2012markov}. Next we prove that $\sup_{(x,v)\in C}\mb{E}_{(x,v)}[\tau_C]\le M_C<\infty$ given Lyapunov condition. To do so, note that we have
        \begin{equation*}
        \begin{aligned}
        \mb{E}_{(x,v)}[\tau_C]&=\sum_{n=1}^{\infty} n\mb{P}(\tau_C=n)\\
        &=\sum_{n\ge 1}\mb{P}(\tau_C>n-1)
        \end{aligned}
    \end{equation*}
    Under Lyapunov condition, for any stopping time $N$, according to Lemma A.3 and Corollary A.4 in~\cite{mattingly2002ergodicity}, we have 
    \begin{equation*}
        \begin{aligned}
        \mb{P}(\tau_C>n-1)&\le \mb{E}[V(x_n,v_n)1_{\tau_C>n-1}]\\
        &\le \frac{\kappa [\gamma^{n-1}V(x_0,v_0)+1]}{1-\gamma} \\
        &\le \kappa \gamma^{n-1}[V(x_0,v_0)+1]
        \end{aligned}
    \end{equation*}
    for some $\gamma\in (\hat{\alpha},1)$ and constant $\kappa$. Therefore,  we have
    \begin{equation*}
        \begin{aligned}
        \mb{E}_{(x,v)}[\tau_C]&\le \sum_{k\ge 1}\kappa \gamma^{n-1}[V(x_0,v_0)+1]\\
        &=\frac{\kappa[V(x,v)+1]}{1-\gamma}
        \end{aligned}
    \end{equation*}
    and 
    \[
    \sup_{(x,v)\in C}\mb{E}_{(x,v)}[\tau_C]\le \frac{\kappa}{1-\gamma}\sup_{(x,v)\in C}V(x,v)+\frac{\kappa}{1-\gamma}\le M_C<\infty
    \] 
    So as a conclusion, the statement of the theorem follows.
      \end{proof}
 Before proving Proposition~\ref{biasRHMC}, we require some preliminary estimtes from~\cite{shen2019randomized}, that we present below.
First, let $(y_n, w_n)$ be the solution of Underdamped Langevin dynamics evaluated at $t=\sum_{k=1}^n \gamma_k$ with initial value $(y_0, w_0)$. $(x_n, v_n)$ is the $n$th iterates in the \eqref{rhmc} algorithm with initial value $(x_0, v_0)$. $(x_n^*(t), v_n^*(t))$ is the solution of Underdamped Langevin dynamics with initial value $(x_{n-1}, v_{n-1})$ and $(x_n^*, v_n^*)=(x_{n-1}^*(\gamma_n), v_{n-1}^*(\gamma_n))$. 
Then, we have the following results from Lemma 2 in~\cite{shen2019randomized}. When $\gamma_{n+1}<\frac{1}{2}$ and $u=\frac{1}{M}$, we have:
        \begin{equation*}
            \begin{aligned}
            \mb{E}\lv \mb{E}_{\alpha}x_{n+1}-x_{n+1}^* \rv^2&\le 45(\gamma_{n+1}^{10}\mb{E}\lv v_n \rv^2+M^{-2}\gamma_{n+1}^{12}\mb{E}\lv \nabla f(x_n) \rv^2+M^{-1}d\gamma_{n+1}^{11}) \\
            \mb{E}\lv x_{n+1}-x_{n+1}^* \rv^2&\le 1800(\gamma_{n+1}^{6}\mb{E}\lv v_n \rv^2+M^{-2}\gamma_{n+1}^{4}\mb{E}\lv \nabla f(x_n) \rv^2+M^{-1}d\gamma_{n+1}^{7}) \\
            \mb{E}\lv \mb{E}_{\alpha}v_{n+1}-v_{n+1}^* \rv^2&\le 45(\gamma_{n+1}^{8}\mb{E}\lv v_n \rv^2+M^{-2}\gamma_{n+1}^{10}\mb{E}\lv \nabla f(x_n) \rv^2+M^{-1}d\gamma_{n+1}^{9}) \\
            \mb{E}\lv v_{n+1}-v_{n+1}^* \rv^2&\le 1300(\gamma_{n+1}^{4}\mb{E}\lv v_n \rv^2+M^{-2}\gamma_{n+1}^{4}\mb{E}\lv \nabla f(x_n) \rv^2+M^{-1}d\gamma_{n+1}^{5}) \\
            \end{aligned}
        \end{equation*}
\begin{proof}[Proof of Proposition~\ref{biasRHMC} ]
Denote $A_n^2=\mb{E}[\lv x_n-y_n \rv^2+\lv (x_n+v_n)-(y_n+w_n) \rv^2]$. Using triangle inequality we have 
    \begin{equation*}
        \begin{aligned}
        \mb{E}_{\alpha}[\lv x_n-y_n \rv^2+\lv (x_n+v_n)-(y_n+w_n) \rv^2]& \le (1+\frac{h}{2\kappa})(\lv x^*_k-y_n \rv^2+\lv (x^*_k+v^*_k)-(y_n+w_n) \rv^2)\\
        &\ +\frac{2\kappa}{h}(\lv \mb{E}_{\alpha}[x_n]-x^*_k \rv^2+\lv \mb{E}_{\alpha}[x_n+v_n]-(x_n^*+v_n^*) \rv^2)\\
        &\ +\mb{E}_{\alpha}\lv x_n-x_n^* \rv^2+\mb{E}_{\alpha}\lv (x_n+v_n)-(x_n^*+v_n^*) \rv^2 \\
        \end{aligned}
    \end{equation*}
    Furthermore, we can take expectation on $\omega$ and use the contraction of Underdamped Langevin dynamics:
    \begin{equation*}
        \begin{aligned}
        A_n^2 &\le (1+\frac{h}{2\kappa})e^{-\frac{h}{\kappa}}A_{n-1}^2+\frac{2\kappa}{h}(\mb{E}\lv \mb{E}_\alpha x_n-x_n^* \rv^2+\mb{E}\lv \mb{E}_{\alpha}[x_n+v_n]-(x_n^*+v_n^*) \rv^2)\\
        &\ +\mb{E}\lv x_n^*-x_n \rv^2+\mb{E}\lv (x_n+v_n)-(x_n^*+v_n^*) \rv^2  \\
        &\le e^{-\frac{h}{2\kappa}}A_{n-1}^2+\frac{2\kappa}{h}(3\mb{E}\lv \mb{E}_{\alpha}x_n-x_n^* \rv^2+2\mb{E}\lv \mb{E}_{\alpha}v_n-v_n^* \rv^2)\\
        &\ +3 \mb{E}\lv x_{n}-x_{n}^* \rv^2+2\mb{E}\lv v_{n}-v_{n}^* \rv^2
        \end{aligned}
    \end{equation*}
    When $h<\frac{1}{2}$, $u=\frac{1}{M}$ and $m=1$:
    \begin{equation*}
        \begin{aligned}
        A_{n}^2 &\le e^{-\frac{h}{2\kappa}}A_{n-1}^2+8250\left[ (\kappa h^7+h^4)\mb{E}\lv v_{n-1} \rv^2+(\kappa^{-1}h^8+\kappa^{-2}h^4)\mb{E}\lv \nabla f(x_{n-1}) \rv^2 +(\kappa^{-1}h^5+h^7) \right]
        \end{aligned}
    \end{equation*}
    Our next step is to bound $\mb{E}\lv v_{n-1} \rv^2$ and $\mb{E}\lv \nabla f(x_{n-1}) \rv^2$. First for Underdamped Langevin dynamics with $f$ satisfying Assumption~\ref{assumptionsonf}, it's easy to compute that:
    \begin{equation*}
        \begin{aligned}
        \mb{E}\lv w_{n-1} \rv^2&=d/M \\
        \mb{E}\lv  \nabla f(y_{n-1}) \rv^2&=\frac{1}{\int e^{-f(x)}dx} \int |\nabla f(x)|^2 e^{-f(x)}dx \\
        & = -\frac{1}{\int e^{-f(x)}dx} \int (\nabla f(x))^T \nabla e^{-f(x)}dx \\
        & = \frac{1}{\int e^{-f(x)}dx} \int \Delta f(x) e^{-f(x)}dx \\
        &\le \lv \Delta f(x)\rv_{\infty}\le Md 
        \end{aligned}
    \end{equation*}
    Therefore, we have 
    \begin{equation*}
        \begin{aligned}
        \mb{E}\lv v_{n-1} \rv^2 &\le 2d/M+2\mb{E}\lv v_{n-1}-w_{n-1}\rv^2\le 2d/M+4A_{n-1}^2 \\
        \mb{E}\lv \nabla f(x_{n-1}) \rv^2 & \le 2Md+ 2M^2 \mb{E}\lv x_{n-1}-y_{n-1} \rv^2 \le 2Md+2M^2A_{n-1}^2\\ 
        \end{aligned}
    \end{equation*}
    Plug the upper bounds into our previous result:
        \begin{equation*}
        \begin{aligned}
        A_{n}^2 &\le e^{-\frac{h}{2\kappa}}A_{n-1}^2+8250\left[ (\kappa h^7+h^4)(2d/M+4A_{n-1}^2)+(\kappa^{-1}h^8+\kappa^{-2}h^4)(2Md+2M^2A_{n-1}^2) +(\kappa^{-1}h^5+h^7) \right] \\
        &\le \left[ 1-\frac{h}{2\kappa}+\frac{h^2}{8\kappa^2}+49500(h^4+\kappa h^7) \right]A_{n-1}^2+41250d(h^7+\kappa^{-1}h^4)
        \end{aligned}
    \end{equation*}
    If we choose $(x_{n-1}, v_{n-1})\sim \pi_h^*(x,v)$ and $(y_{n-1},w_{n-1})\sim \pi^*(x,v)$ such that
    \[
    A_{n-1}^2=\min_{X\sim \pi^*_h,\ Y\sim \pi^*} \mb{E}\lv X-Y \rv^2
    \]
    Then we have 
    \begin{equation*}
        \begin{aligned}
        W_2(\pi,\pi_h)^2\le A_{n-1}^2\le \frac{82500h^3(\kappa h^3+1)d}{1-\frac{h}{4\kappa}-99000h^3\kappa(1+\kappa h^3)}
        \end{aligned}
    \end{equation*}
    We can see that $W_2(\pi,\pi_h)\to 0$ as $h\to 0$. Furthermore, as $h\to 0$, $W_2(\pi,\pi_h)< O(h^{\frac{3}{2}})$.
\end{proof}

\subsection{Proofs for Section~\ref{sec:RHMCdecreasing}}
%        \item [4)] 

\begin{proof}[Proof of Theorem~\ref{W2RHMC-fast}]        
Define $A_n^2=\mb{E}[\lv x_n-y_n \rv^2+\lv (x_n+v_n)-(y_n+w_n) \rv^2]$. From the proof of proposition~\ref{biasRHMC}, we know that 
    \[
    A_{n}^2 \le \left[ 1-\frac{\gamma_n}{2\kappa}+\frac{\gamma_n^2}{8\kappa^2}+49500(\gamma_n^4+\kappa \gamma_n^7) \right]A_{n-1}^2+41250d(\gamma_n^7+\kappa^{-1}\gamma_n^4)
    \]
    When time step $h$ is a constant, apply the inequality repeatedly to get
    \[
    A_n^2\le \left[ 1-\frac{h}{2\kappa}+\frac{h^2}{8\kappa^2}+49500(h^4+\kappa h^7) \right]^k A_0^2+\frac{82500h^3(\kappa h^3+1)d}{1-\frac{h}{4\kappa}-99000h^3\kappa(1+\kappa h^3)}
    \]
    Denote $\nu_n$ to be the density function of $x_n$, then $W_2(\nu_n,\pi)\le A_n$. By choosing $\gamma_n=h\sim O(\epsilon^{\frac{2}{3}})$, we can guarantee that $W_2(\nu_n,\pi)<\epsilon \sqrt{\frac{d}{m}}$ for all $n>K\sim \Tilde{O}(\epsilon^{-\frac{2}{3}})$.\\
    When the time step $\gamma_n$ is variant, the inequality we correspondingly have 
    \[
    A_{n}^2 \le \left[ 1-\frac{\gamma_n}{2\kappa}+\frac{\gamma_n^2}{8\kappa^2}+49500(\gamma_n^4+\kappa \gamma_n^7) \right]A_{n-1}^2+41250d(\gamma_n^7+\kappa^{-1}\gamma_n^4)
    \]
    When $\gamma_n< 1$, $\frac{\gamma_n^2}{8\kappa^2}<\frac{\gamma_n}{8\kappa}$. When $\gamma_n<(\frac{99000}{8\kappa^2})<24\kappa^{-\frac{2}{3}}$, we have $49500(\gamma_n^4+\kappa \gamma_n^7)<\frac{\gamma_n}{8\kappa}$. Similarly, when $\gamma_n<1$, we have $41250d(\gamma_n^7+\kappa^{-1}\gamma_n^4)<82500d \gamma_n^4$. Therefore, when $\gamma_n<\min\{1/2, 24\kappa^{-\frac{2}{3}}\}$, we have 
    \[
    A_n^2<(1-\frac{\gamma_n}{4\kappa})A_{n-1}^2+82500d\gamma_n^4
    \]
    If we choose $\gamma_n=\frac{16\kappa}{32\kappa^{\frac{5}{3}}+(n-K_1)^+}$, where $K_1$ is the smallest integer such that
    \[
    A_{K_1}^2<(1-\frac{4}{\kappa^{\frac{5}{3}}})^{K_1}A_0^2+(82500)d \frac{1}{2\kappa}<2\frac{82500 d}{\kappa}
    \]
    Then we claim that for all $n\ge K_1$, we have
    \[
    A_n^2<\frac{82500 (16)^4d\kappa^4}{(32\kappa^{\frac{5}{3}}+n-K_1)^3}
    \]
    The claim can be proved by induction: Assume that the claim hold for $A_n^2$ and denote $b=32\kappa^{\frac{5}{3}}+n-K_1$, then
    \begin{equation*}
        \begin{aligned}
        A_{n+1}^2&< (1-\frac{4}{1+b})\frac{ 82500(16)^4d\kappa^4}{b^3}+\frac{82500d(16)^4\kappa^4}{(b+1)^4} \\
        &=\frac{82500(16)^4 d\kappa^4}{(b+1)^3}\left[ \frac{(b-3)(b+1)^2}{b^3}+\frac{1}{b+1} \right]\\
        &< \frac{82500(16)^4 d\kappa^4}{(b+1)^3}\\
        &=\frac{82500 (16)^4d\kappa^4}{(32\kappa^{\frac{5}{3}}+n+1-K_1)^3}
        \end{aligned}
    \end{equation*}
    Therefore, under our choice of time step $(\gamma_n)$, we can guarantee $W_2(\nu_n,\pi)<\epsilon\sqrt{\frac{d}{m}}$ for all $n>K\sim O(\epsilon^{-\frac{2}{3}})$. Compared to the running time of constant step size RULMC, vanishing step size help reduce the factor $\log(\frac{1}{\epsilon})$ in the guarantees.
 \end{proof}

Now we introduce the CLT statement for another sampling algorithm related to \eqref{rhmc} and give a complete proof of the statement. The proof of Remark~\ref{1dCLTRHMC} can be done in the same way. In the following theorem, we give a central limit result with specific choice of weights and time step-size. The Euler-discretization of the underdamped Langevin diffusion (which we call as KLMC, following~\cite{dalalyan2018sampling}) is given by the following algorithm:
\begin{equation}\tag{\textsc{KLMC}}
        \begin{aligned}\label{klmcdisc}
        x_{n+1}&=x_n+\frac{1-e^{-2\gamma_{n+1}}}{2}v_n-\frac{u}{2}(\gamma_{n+1}-\frac{1-e^{-2\gamma_{n+1}}}{2})\nabla f(x_n)+\sqrt{u}\sigma_{n+1}^{(1)}U_{n+1}^{(1)} \\
        v_{n+1}&=v_n e^{-2\gamma_{n+1}}-u\frac{1-e^{-2\gamma_{n+1}}}{2}\nabla f(x_n)+2\sqrt{u}\sigma_{n+1}^{(2)}U_{n+1}^{(2)}
        \end{aligned}
    \end{equation}
    where $\{\gamma_n\}$ are the time steps. $\sigma_n^{(1)}$ and $\sigma_n^{(2)}$ are positive with ${\sigma_n^{(1)}}^2=\gamma_n+\frac{1-e^{-4\gamma_n}}{4}-(1-e^{-2\gamma_n})$, ${\sigma_n^{(2)}}^2=\frac{1-e^{-4\gamma_n}}{4}$. $\{(U_n^{(1)}, U_n^{(2)})\}_n$ are independent Centered Gaussian random vectors in $\mb{R}^{2d}$ with $(U_n^{(1)}, U_n^{(2)}) \sim \mathcal{N}(0,\sigma_n^2 I_d)$ and $\sigma_n^2=\frac{1+e^{-4\gamma_n}-2e^{-2\gamma_n}}{4\sigma_n^{(1)}\sigma_n^{(2)}}$. 
Numerical integration with the above sampler follows the same steps as described in Section~\ref{sec:RHMCdecreasing}. We now provide the following CLT.
\begin{theorem}\label{1dCLTKLMC}
Assume potential function $f$ satisfies Assumption~\ref{assumptionsonf}. Let $\{(x_k,v_k)\}$ and $\{(U_k^{(1)},U_k^{(2)})\}$ be the same as what we have in the \eqref{klmcdisc} algorithm and the time step-size $\{\gamma_k\}$ is non-increasing and $\lim_k(\gamma_{k-1}-\gamma_k)/\gamma_k^4=0$. If $\lim_n (1/\sqrt{\Gamma_n^{(3)}})\sum_{k=1}^n\gamma_k^4=\hat{\gamma}\in (0,+\infty]$ and $\lim_n \Gamma_n^{(4)}=+\infty$, then for all $\phi\in \mathcal{C}^3$ with $D^2\phi $, $D^3\phi$ and $D^4\phi$ bounded and Lipschitz and $\sup_{(x,v)\in \mb{R}^{2d}} |\nabla \phi(x)|^2/V(x,v)<+\infty$, we have
\begin{align*}
    \frac{\Gamma_n}{\Gamma_n^{(4)}}\nu_n^\gamma(\mathcal{L}\phi)&\to \mathcal{N}(\rho, \frac{10}{3}u\hat{\gamma}^{-2}\int_{\mb{R}^d}|\nabla \phi (x)|\pi(dx) )\ \ \ \ \ \ \ \text{if}\ \hat{\gamma}<+\infty\\
    \frac{\Gamma_n}{\Gamma_n^{(4)}}\nu_n^\gamma(\mathcal{L}\phi)&\to \rho\ \ \ \ \ \ \ \ \ \ \ \ \ \ \ \ \ \ \ \ \ \ \ \ \ \ \ \ \ \ \ \ \ \ \ \ \ \ \ \ \ \ \ \ \ \text{if}\ \hat{\gamma}=+\infty,
\end{align*}
where
\begin{align*}
       \rho&=\tfrac{u}{6}\smallint\smallint \langle D^3\phi(x); \nabla f(x), v^{\otimes 2} \rangle\nu(dx,dv)+\tfrac{u}{24}\smallint\smallint \langle D^3f(x); \nabla \phi(x), v^{\otimes 2} \rangle \nu(dx,dv)\\
    &\ +\tfrac{u}{12}\smallint\smallint(D^2\phi D^2f)(x)v^{\otimes 2}\nu(dx,dv)-\tfrac{1}{12}\smallint \smallint D^4\phi(x)v^{\otimes 4}\nu(dx,dv)\\
    &\ -\tfrac{u^2}{24}\smallint \langle D^2 f(x); \nabla \phi(x), \nabla f(x)\rangle \pi(dx).
\end{align*}
\end{theorem}
\iffalse
The $n-th$ iterate is denoted as $(x_n,v_n)$. The time-step we use is $(\gamma_n)$ such that
\[
\forall n\in \mathbb{N}^*,\ \ \ \gamma_n\ge 0\ \ \ \lim_n \gamma_n=0\ \ \text{and}\ \ \lim_{n} \gamma_n=+\infty,\ \ \ \ \text{where}\ \ \gamma_n:=\gamma_1+\cdots+\gamma_n
\]
Our averaging is a weighted empirical measure with $Y_n=(x_n, v_n)$ using a weighted sequence $\eta:=(\eta_n)$ satisfying the condition
\[
\forall n\in \mathbb{N}^*,\ \ \ \ \eta_n\ge 0\ \ \text{and} \ \ \lim_n H_n=+\infty,\ \ \ \ \text{where}\ \ H_n:=\eta_1+\cdots+\eta_n
\]
Let $\delta_x$ denote the Dirac mass at $x$. Then for every $n\ge 1$, set 
\[
\nu_n^{\eta}(\omega, dx):=\frac{\eta_1\delta_{Y_0(\omega)}+\cdots+\eta_{k+1}\delta_{Y_k(\omega)}+\cdots+\eta_n\delta_{Y_{n-1}(\omega)}}{\eta_1+\cdots+\eta_n}
\]

and we can use $\nu_n^{\eta}(\omega, f)$ to approximate $\nu(g)=\mb{E}_{\nu}[f(Y)]$. 
\fi
In the following context we'll discuss the weak convergence of empirical measure $\nu_n^{\eta}$ and build a central limit theorem under certain assumptions.
\begin{enumerate}[leftmargin=18pt,noitemsep]
    \item [1)] (Lyapunov Conditions) The underdamped Langevin dynamics can be rewritten as 
    \[
    d Y_t= b(Y_t) dt+\sigma(Y_t) dW_t
    \]
    where $Y_t=[X_t, V_t]^T$, $b(y)=b(x, v)=[v, -2v-u\nabla f(x)]^T$, $\sigma(y)=2\sqrt{u}[0_d, I_d]^T$ for all $x, v\in \mb{R}^d$. $\{W_t\}$ is a 2d-dimensional Brownian motion.

The Lyapunov condition is similar to the one that's introduced in\cite{lamberton2002recursive}.\\
\\
\textbf{Assumption $(\mathcal{L}_{V,\infty})$}: There's a $\mathcal{C}^2$ function $V:\mb{R}^{2d}\to [v_*,+\infty)$ for some $v_*>0$ satisfying the following conditions:
\begin{enumerate}[leftmargin=18pt,noitemsep]
    \item [a)] $\lv D^2V \rv_{\infty}=\sup_{(x,v)^T\in \mb{R}^{2d}} \lv D^2V(x,v) \rv_{op}<+\infty$ and $\lim_{|(x,v)|\to +\infty} V(x,v)=+\infty$; 
    \item [b)] $|\nabla V(x,v)|^2+|b(x,v)|^2\le c_V V(x,v)$ for all $(x,v)^T\in \mb{R}^{2d}$ and some $c_V>0$;
    \item [c)] $\langle \nabla V(x,v), b(x,v) \rangle \le -\alpha V(x,v)+\beta $ for some $\alpha>0$ and $\beta\in \mb{R}$.
\end{enumerate}
\textbf{Assumption $(\mathcal{L}_{V,p})$}: There's a $\mathcal{C}^2$ function $V:\mb{R}^{2d}\to [v_*,+\infty)$ for some $v_*>0$ satisfying for some $p\ge 1$:
\begin{enumerate}[leftmargin=18pt,noitemsep]
    \item [a)] $\lv D^2V \rv_{\infty}=\sup_{(x,v)^T\in \mb{R}^{2d}} \lv D^2V(x,v) \rv_{op}<+\infty$ and $\lim_{|(x,v)|\to +\infty} V(x,v)=+\infty$; 
    \item [b)] $|\nabla V(x,v)|^2+|b(x,v)|^2+\text{Tr}(\sigma(x,v)\sigma(x,v)^T)\le c_V V(x,v)$ for all $(x,v)^T\in \mb{R}^{2d}$ and some $c_V>0$;
    \item [c)] $\langle \nabla V(x,v), b(x,v) \rangle+\lambda_p \text{Tr}(\sigma(x,v)\sigma(x,v)^T) \le -\alpha V(x,v)+\beta $ for some $\alpha>0$ and $\beta\in \mb{R}$, where $\lambda_p=\frac{1}{2}\lambda_{D^2V+(p-1)(\nabla V\otimes \nabla V)/V}$.
\end{enumerate}
\begin{remark}\label{RmkonLyf} 
\begin{enumerate}[leftmargin=18pt,noitemsep]
\item [1)] We can show that: $(\mathcal{L}_{V,p'})\implies (\mathcal{L}_{V,p})$ if $p'\ge p\ge 1$. Especially $(\mathcal{L}_{V,\infty})\implies (\mathcal{L}_{V,p})$ for all $p\ge 1$.\\
\item [2)] If we choose $b$ and $\sigma$ the same as those in the Underdamped Langevin dynamics, then $(\mathcal{L}_{V,\infty})$ is almost the same as assumption~\ref{conditionVHMC}. We can instantly obtain that assumption~\ref{conditionVHMC} implies $(\mathcal{L}_{V,\infty})$. Therefore, according to lemma~\ref{AftoVHMC}, assumption~\ref{assumptionsonf} implies $(\mathcal{L}_{V,\infty})$.
\end{enumerate}
\end{remark}
\item [2)] (Tightness Result)
 We now establish the almost sure tightness of the weighted empirical measures. The filtration $\{\mathcal{F}_n\}$ we consider is $\mathcal{F}_n=\sigma(Y_0,(U_1^{(1)},U_1^{(2)}),\cdots,(U_n^{(1)},U_n^{(2)}))$.\\
    \begin{lemma}\label{tightness}
    \begin{itemize}
\item [(a)] If $(\mathcal{L}_{V,1})$ holds, then for every $a\ge \frac{1}{2}$, 
    \[
    |V^a(Y_{n+1})-V^a(Y_n)|\le c_a\sqrt{\gamma_{n+1}}V^a(Y_n)(1+|U_{n+1}^{(1)}|^{2a}+|U_{n+1}^{(2)}|^{2a})
    \]
  \item[(b)] If $(\mathcal{L}_{V,p})$ holds for some $p\ge 1$, then there exists real numbers $\tilde{\alpha}>0$ and $\tilde{\beta}$ and $n_0\in \mb{N}$ such that
    \[
    \mb{E}[V^p(Y_{n+1})|\mathcal{F}_n]\le V^(Y_n)+\gamma_{n+1}V^{p-1}(Y_n)(\tilde{\beta}-\tilde{\alpha}V(Y_n)),\ \ \ \ \forall\ n\ge n_0
    \]
    and furthermore
    \[
    \sup_{n\in \mb{N}} \mb{E}[V^p(Y_n)]<+\infty
    \]
    \end{itemize}
    \end{lemma}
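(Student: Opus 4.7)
The plan is to handle parts (a) and (b) together by first obtaining a single-step bound on $|Y_{n+1}-Y_n|$, and then lifting it to $V^a$ or $V^p$ via, respectively, a Lipschitz estimate for $\sqrt V$ and a Taylor expansion for $V^p$. Inspecting the \eqref{klmcdisc} recursion, using $\sigma_{n+1}^{(i)}=O(\sqrt{\gamma_{n+1}})$ and $|b(Y_n)|^2\le c_V V(Y_n)$ from $(\mathcal L_{V,1})(b)$, yields
\begin{equation*}
  |Y_{n+1}-Y_n|\le C\sqrt{\gamma_{n+1}}\,\sqrt{V(Y_n)}\bigl(1+|U_{n+1}^{(1)}|+|U_{n+1}^{(2)}|\bigr).
\end{equation*}
This is the common workhorse for both parts.

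For part (a), the bound $|\nabla V|^2\le c_V V$ implies $|\nabla\sqrt V|\le\tfrac12\sqrt{c_V}$, so $\sqrt V$ is globally Lipschitz. Composed with the displacement estimate this gives the claim for $a=1/2$. For general $a\ge 1/2$ I write $V^a=(\sqrt V)^{2a}$ and apply the elementary inequality $|s^{2a}-t^{2a}|\le 2a\,(\max(s,t))^{2a-1}|s-t|$ to $s=\sqrt{V(Y_{n+1})}$, $t=\sqrt{V(Y_n)}$. Re-expanding $(\max(\sqrt{V(Y_{n+1})},\sqrt{V(Y_n)}))^{2a-1}$ via the Lipschitz bound on $\sqrt V$ introduces an extra factor polynomial in the Gaussian norms, and gathering exponents produces the stated factor $V^a(Y_n)\bigl(1+|U_{n+1}^{(1)}|^{2a}+|U_{n+1}^{(2)}|^{2a}\bigr)$.

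For part (b), I Taylor expand $V^p$ at $Y_n$ and take $\mathcal F_n$-conditional expectation. The first-order term gives $\gamma_{n+1} p V^{p-1}(Y_n)\langle\nabla V(Y_n),b(Y_n)\rangle$ plus $O(\gamma_{n+1}^2)$ corrections from the nonlinearity of the exponential coefficients, and the second-order term, combined with $D^2(V^p)=pV^{p-1}D^2V+p(p-1)V^{p-2}\nabla V\otimes\nabla V$ and $\mathbb{E}[(Y_{n+1}-Y_n)^{\otimes 2}|\mathcal F_n]=\gamma_{n+1}\sigma\sigma^T(Y_n)+O(\gamma_{n+1}^2)$, produces precisely the $p\lambda_p V^{p-1}(Y_n)\,\mathrm{Tr}(\sigma\sigma^T(Y_n))$ contribution that the definition of $\lambda_p$ is tailored to absorb. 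Condition (c) of $(\mathcal L_{V,p})$ then merges these leading pieces into $\gamma_{n+1}pV^{p-1}(Y_n)(-\alpha V(Y_n)+\beta)$, which is exactly the desired form with $\tilde\alpha=p\alpha$ and $\tilde\beta=p\beta$ after absorbing the remainders.

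The main obstacle is controlling the Taylor remainder, whose Hessian is evaluated at an intermediate point $\xi_n$ rather than at $Y_n$; here I invoke part (a) with $a=p-1$ to dominate $V^{p-1}(\xi_n)$ by $V^{p-1}(Y_n)$ times a polynomial in $|U_{n+1}^{(1)}|,|U_{n+1}^{(2)}|$, which turns every remainder into $O(\gamma_{n+1}^{1+\delta})V^p(Y_n)$ for some $\delta>0$ after taking conditional expectation. Choosing $n_0$ so that $\gamma_n$ is small enough for $n\ge n_0$ absorbs these into the negative drift, yielding the claimed inequality. For the uniform moment bound I split $\mathbb{R}^{2d}$ into $\{V\le 2\tilde\beta/\tilde\alpha\}$ and its complement: on the latter $\tilde\beta-\tilde\alpha V\le -\tfrac12\tilde\alpha V$, so $\mathbb{E}[V^p(Y_{n+1})]\le(1-\tfrac12\tilde\alpha\gamma_{n+1})\mathbb{E}[V^p(Y_n)]+C\gamma_{n+1}$ for $n\ge n_0$, and a standard discrete Gronwall estimate together with $\sum_k\gamma_k=\infty$ gives $\sup_n\mathbb{E}[V^p(Y_n)]\le(2\tilde\beta/\tilde\alpha)^p\vee\mathbb{E}[V^p(Y_{n_0})]<\infty$.
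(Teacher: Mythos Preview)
Your proposal is essentially the same approach as the paper's: both obtain the displacement bound $|Y_{n+1}-Y_n|\le C\sqrt{\gamma_{n+1}}\sqrt{V(Y_n)}(1+|U_{n+1}^{(1)}|+|U_{n+1}^{(2)}|)$ from the \eqref{klmcdisc} update and $(\mathcal L_{V,1})(b)$, exploit the Lipschitz property of $\sqrt V$ to lift this to powers of $V$, and for part (b) do a second-order Taylor expansion of $V^p$ with the Hessian controlled via $\lambda_p$.

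One imprecision worth noting: you write that you ``invoke part (a) with $a=p-1$'' to dominate $V^{p-1}(\xi_n)$. This does not work as stated, for two reasons: part (a) requires $a\ge \tfrac12$, so it is unavailable for $1\le p<\tfrac32$; and part (a) bounds $|V^a(Y_{n+1})-V^a(Y_n)|$, not $V^a$ at an intermediate point on the segment. The paper handles this directly by writing $V^{p-1}(\xi_{n+1})=(\sqrt V(\xi_{n+1}))^{2(p-1)}\le(\sqrt V(Y_n)+[\sqrt V]_1|Y_{n+1}-Y_n|)^{2(p-1)}$ and expanding according to whether $2(p-1)\le 1$ or $>1$. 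This is the same Lipschitz-$\sqrt V$ idea you are using, just applied at $\xi_n$ rather than routed through part (a); once you make that adjustment your argument goes through and matches the paper's.
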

 \begin{proof}[Proof of Lemma~\ref{tightness}]
 (a) Using mean value theorem and $(\mathcal{L}_{V,1})$:
    \begin{align*}
        |V^a(Y_{n+1})-V^a(Y_n)|&=a|V^{a-1}(\xi_{n+1})\langle \nabla V(\xi_{n+1}), Y_{n+1}-Y_n \rangle|\\
        &\le CV^{a-\frac{1}{2}}(\xi_{n+1})|Y_{n+1}-Y_n|
    \end{align*}
    From $(\mathcal{L}_{V,1})$-b) we get that $\nabla \sqrt{V}$ is bounded, i.e $\sqrt{V}$ is Lipschitz with parameter $[\sqrt{V}]_1$. Hence
    \begin{align*}
        V^{a-\frac{1}{2}}(\xi_{n+1})&\le (\sqrt{V}(Y_n)+[\sqrt{V}]_1|Y_{n+1}-Y_n|)^{2a-1} \\
        &\le 2^{2a-1}\left(V^{a-\frac{1}{2}}(Y_n)+[\sqrt{V}]_1^{2a-1}|Y_{n+1}-Y_n|^{2a-1}\right)
    \end{align*}
    Meanwhile,
    \begin{align*}
        |Y_{n+1}-Y_n|^2&=|\begin{bmatrix}
        \frac{1-e^{-2\gamma_{n+1}}}{2}v_n-\frac{u}{2}(\gamma_{n+1}-\frac{1-e^{-2\gamma_{n+1}}}{2})\nabla f(x_n)+\sqrt{u}\sigma_{n+1}^{(1)}U_{n+1}^{(1)} \\
        -2\frac{1-e^{-2\gamma_{n+1}}}{2}v_n-u\frac{1-e^{-2\gamma_{n+1}}}{2}\nabla f(x_n)+2\sqrt{u}\sigma_{n+1}^{(2)}U_{n+1}^{(2)}
        \end{bmatrix}|^2\\
        &\le 15(\frac{1-e^{-2\gamma_{n+1}}}{2})^2|v_n^2|+[\frac{3u^2}{4}(\gamma_{n+1}-\frac{1-e^{-2\gamma_{n+1}}}{2})^2+3u^2(\frac{1-e^{-2\gamma_{n+1}}}{2})^2]|\nabla f(x_n)|^2\\
        &\ +3u{\sigma_{n+1}^{(1)}}^2|U_{n+1}^{(1)}|^2+12u{\sigma_{n+1}^{(2)}}^2|U_{n+1}^{(2)}|^2
    \end{align*}
    Since $\gamma_n\to 0$ as $\n\to \infty$ and $\frac{1-e^{-2\gamma_n}}{2}\sim O(\gamma_n)$, $\gamma_n-\frac{1-e^{-2\gamma_n}}{2}\sim O(\gamma_n^2)$, $\sigma_n^{(1)}\sim O(\gamma_n^{\frac{3}{2}})$ and $\sigma_n^{(2)}\sim O(\gamma_n^{\frac{1}{2}})$, there exist $C_1,C_2,C_3>0$ such that
    \begin{align*}
        |Y_{n+1}-Y_n|^2&\le C_1\left[\gamma_{n+1}^2(|v_n|^2+|\nabla f(x_n)|^2)+\gamma_{n+1}(|U_{n+1}^{(1)}|^2+|U_{n+1}^{(2)}|^2)\right]\\
        &\le C_2\left[\gamma_{n+1}^2 V(Y_n)+\gamma_{n+1}(|U_{n+1}^{(1)}|^2+|U_{n+1}^{(2)}|^2+1)\right]\\
        \implies\ \ |Y_{n+1}-Y_{n}|&\le C_3\sqrt{\gamma_{n+1}}\sqrt{V(Y_n)}(|U_{n+1}^{(1)}|+|U_{n+1}^{(2)}|+1)
    \end{align*}
    Combining our estimations, since $a\ge 1/2$. we get
    \begin{align*}
        |V^a(Y_{n+1})-V^a(Y_n)|&\le C 2^{2a-1}\left(V^{a-\frac{1}{2}}(Y_n)+[\sqrt{V}]_1^{2a-1}|Y_{n+1}-Y_n|^{2a-1}\right)|Y_{n+1}-Y_n|\\
        &\le c_a'\left( \sqrt{\gamma_{n+1}}V^a(Y_n)(|U_{n+1}^{(1)}|+|U_{n+1}^{(2)}|+1)+\gamma_{n+1}^a V^a(Y_n)(|U_{n+1}^{(1)}|+|U_{n+1}^{(2)}|+1)^{2a}\right) \\
        &\le c_a\sqrt{\gamma_{n+1}}V^a(Y_n)(|U_{n+1}^{(1)}|^{2a}+|U_{n+1}^{(2)}|^{2a}+1)
    \end{align*}
    (b) We Taylor expand $V^p(Y_{n+1})$ at $Y_n$:
    \begin{align*}
        V^p(Y_{n+1})&=V^p(Y_n)+pV^{p-1}(Y_n)\langle \nabla V(Y_n), Y_{n+1}-Y_n\rangle +\frac{1}{2}D^2(V^p)(\xi_{n+1})(Y_{n+1}-Y_n)^{\otimes 2}
    \end{align*}
    Since $D^2(V^p)=pV^{p-1}D^2V+p(p-1)V^{p-1}\nabla V \nabla V^T$, by the definition of $\lambda_p$:
    \begin{align*}
        D^2(V^p)(\xi_{n+1})(Y_{n+1}-Y_n)^{\otimes 2}\le 2p\lambda_pV^{p-1}(\xi_{n+1})|Y_{n+1}-Y_n|^2
    \end{align*}
    Therefore
    \begin{align*}
        V^p(Y_{n+1})\le V^p(Y_n)+pV^{p-1}(Y_n)\langle \nabla V(Y_n), Y_{n+1}-Y_n\rangle +p\lambda_p V^{p-1}(\xi_{n+1})|Y_{n+1}-Y_n|
    \end{align*}
    When $p=1$, take conditional expectation on $\mathcal{F}_n$:
    \begin{align*}
        \mb{E}[V(Y_{n+1})|\mathcal{F}_n]&\le V(Y_n)+\frac{1-e^{-2\gamma_{n+1}}}{2} \langle \nabla V(x_n,v_n), b(x_n,v_n)\rangle \\
        &\ -\frac{u}{2}(\gamma_{n+1}-\frac{1-e^{-2\gamma_{n+1}}}{2})\nabla_x V(x_n,v_n)\cdot \nabla f(x_n)\\
        &\ +\lambda_1 (\frac{1-e^{-2\gamma_{n+1}}}{2})^2[5|v_n|^2+u^2|\nabla f(x_n)|^2+4u\nabla f(x_n)\cdot v_n]\\
        &\ -\frac{u}{2}\frac{1-e^{-2\gamma_{n+1}}}{2}(\gamma_{n+1}-\frac{1-e^{-2\gamma_{n+1}}}{2}) \nabla f(x_n)\cdot v_n \\
        &\ +\frac{u^2}{4}(\gamma_{n+1}-\frac{1-e^{-2\gamma_{n+1}}}{2})^2|\nabla f(x_n)|^2+u({\sigma_{n+1}^{(1)}}^2+4{\sigma_{n+1}^{(2)}}^2)d
        \end{align*}
        There exists $n_0\in \mb{N}$ such that for all $n\ge n_0$
        \begin{align*}
           &\frac{1-e^{-2\gamma_{n+1}}}{2} \langle \nabla V(x_n,v_n), b(x_n,v_n)\rangle \le \gamma_{n+1}(-\alpha V(Y_n)+\beta),\ \ \ \ \text{for some }\alpha>0,\beta\in \mb{R};\\
           &-\frac{u}{2}(\gamma_{n+1}-\frac{1-e^{-2\gamma_{n+1}}}{2})\nabla_x V(x_n,v_n)\cdot \nabla f(x_n)\le C \gamma_{n+1}^2(|\nabla V(Y_n)|^2+|b(Y_n)|^2)\le C\gamma_{n+1}^2V(Y_n); \\
           &\lambda_1 (\frac{1-e^{-2\gamma_{n+1}}}{2})^2[5|v_n|^2+u^2|\nabla f(x_n)|^2+4u\nabla f(x_n)\cdot v_n]\le C\gamma_{n+1}^2|b(Y_n)|^2\le C\gamma_{n+1}^2V(Y_n);\\
           &-\frac{u}{2}\frac{1-e^{-2\gamma_{n+1}}}{2}(\gamma_{n+1}-\frac{1-e^{-2\gamma_{n+1}}}{2}) \nabla f(x_n)\cdot v_n\le C\gamma_{n+1}^3|b(Y_n)|^2\le  C\gamma_{n+1}^3 V(Y_n);\\
           &\frac{u^2}{4}(\gamma_{n+1}-\frac{1-e^{-2\gamma_{n+1}}}{2})^2|\nabla f(x_n)|^2\le C\gamma_{n+1}^4|b(Y_n)|^2\le C\gamma_{n+1}^4 V(Y_n); \\
           &u({\sigma_{n+1}^{(1)}}^2+4{\sigma_{n+1}^{(2)}}^2)d\le C\gamma_{n+1}.
        \end{align*}
        Therefore, for all $n\ge n_0$, there exist $\tilde{\alpha}>0,\ \tilde{\beta}\in\mb{R}$ such that
        \begin{align*}
            \mb{E}[V(Y_{n+1})|\mathcal{F}_n]&\le V(Y_n)(1-\alpha \gamma_{n+1}+C(2\gamma_{n+1}^2+\gamma_{n+1}^3+\gamma_{n+1}^4))+\gamma_{n+1}(\beta+C)\\
            &\le V(Y_n)(1-\tilde{\alpha}\gamma_{n+1})+\tilde{\beta}\gamma_{n+1}
        \end{align*}
        and $1-\tilde{\alpha}\gamma_{n+1}>0$. This leads to 
        \begin{align*}
            \mb{E}[V(Y_{n+1})]\le \mb{E}[V(Y_n)](1-\tilde{\alpha}\gamma_{n+1})+\tilde{\beta}\gamma_{n+1}
        \end{align*}
        We could use induction to prove:
        \[
        \sup_{n\ge n_0}\mb{E}[V(Y_n)]\le \frac{\tilde{\beta}}{\tilde{\alpha}}\vee \mb{E}[V(Y_{n_0})]
        \]
        Assume now $p> 1$. Due to $(\mathcal{L}_{V,p})$-b), we derive that $\sqrt{V}$ is Lipschitz with parameter $[\sqrt{V}]_1$. Consequently, 
        \begin{align*}
            V^{p-1}(\xi_{n+1})&=\sqrt{V}^{2(p-1)}(\xi_{n+1})\le \left(\sqrt{V}(Y_n)+[\sqrt{V}]_1|Y_{n+1}-Y_n|\right)^{2(p-1)} \\
            &\le \left\{
            \begin{aligned}
                &V^{p-1}(Y_n)+([\sqrt{V}]_1|Y_{n+1}-Y_n|)^{2(p-1)}\ \ \ \ \ \ \ \ \ \ \ \ \ \ \ \ \ \ \ \ \ \ \ \ \ \ \ \ \ \ \ \ \ \ \ \ \ \text{if }2(p-1)\le 1,\\
                &V^{p-1}(Y_n)+C\left(V^{(2p-3)/2}(Y_n)|Y_{n+1}-Y_n|+|Y_{n+1}-Y_n|^{2(p-1)}\right)\ \ \ \text{if }2(p-1)>1.
            \end{aligned}
            \right.
        \end{align*}
       Using the fact we've proved in part a): 
        \begin{align*}
            |Y_{n+1}-Y_{n}|^2&\le C_2\left[\gamma_{n+1}^2 V(Y_n)+\gamma_{n+1}(|U_{n+1}^{(1)}|^2+|U_{n+1}^{(2)}|^2+1)\right]
        \end{align*}
        We derive
        \begin{align*}
            V^{p-1}(\xi_{n+1})|Y_{n+1}-Y_n|^2\le V^{p-1}(Y_n)|Y_{n+1}-Y_n|^2+C\gamma_{n+1}^{p\wedge \frac{3}{2}}V^p(Y_n)(1+|U_{n+1}^{(1)}|^{2p}+|U_{n+1}^{(2)}|^{2p})
        \end{align*}
        Then we take conditional expectation:
        \begin{align*}
            \mb{E}[V^p(Y_{n+1})|\mathcal{F}_n]&\le V^p(Y_n)+pV^{p-1}\frac{1-e^{-2\gamma_{n+1}}}{2}\langle \nabla V(x_n,v_n),b(x_n,v_n)\rangle \\
            &\ -pV^{p-1}(Y_n) \frac{u}{2}(\gamma_{n+1}-\frac{1-e^{-2\gamma_{n+1}}}{2})\nabla_x V(x_n,v_n)\cdot \nabla f(x_n)\\
            &\ +C V^{p-1}(Y_n)\left[\gamma_{n+1}^2 V(Y_n)+\gamma_{n+1}(|U_{n+1}^{(1)}|^2+|U_{n+1}^{(2)}|^2+1)\right]\\
            &\ +C\gamma_{n+1}^{p\wedge \frac{3}{2}}V^p(Y_n)(1+|U_{n+1}^{(1)}|^{2p}+|U_{n+1}^{(2)}|^{2p})
        \end{align*}
         There exists $n_0\in \mb{N}$ such that for all $n\ge n_0$
        \begin{align*}
           &\frac{1-e^{-2\gamma_{n+1}}}{2} \langle \nabla V(x_n,v_n), b(x_n,v_n)\rangle \le \gamma_{n+1}(-\alpha V(Y_n)+\beta),\ \ \ \ \text{for some }\alpha>0,\beta\in \mb{R};\\
           &-\frac{u}{2}(\gamma_{n+1}-\frac{1-e^{-2\gamma_{n+1}}}{2})\nabla_x V(x_n,v_n)\cdot \nabla f(x_n)\le C \gamma_{n+1}^2(|\nabla V(Y_n)|^2+|b(Y_n)|^2)\le C\gamma_{n+1}^2V(Y_n).
        \end{align*}
        Since $\gamma_n^{p\wedge \frac{3}{2}},\gamma_n^2\sim o(\gamma_n)$, there exists $\tilde{\alpha}>0,\tilde{\beta}\in \mb{R}$, such that for all $n\ge n_0$:
        \begin{align*}
            \mb{E}[V^p(Y_n)|\mathcal{F}_n]\le V^p(Y_n)+\gamma_{n+1}V^{p-1}(Y_n)(\tilde{\beta}-\tilde{\alpha}V(Y_n))
        \end{align*}
        Same as the proof for $p=1$, we can show
        \[
        \sup_{n\in \mb{N}} \mb{E}[V^p(Y_n)]<+\infty
        \]
\end{proof}
    
\begin{theorem}\label{intermediatethm}
 Let $p\in [0,+\infty)$. Assume $(\mathcal{L}_{V,p})$, If there exists $s\in (0,1]$ such that
\[
\sum_{n\ge 1} \frac{1}{H_n}(\Delta\frac{\eta_n}{\gamma_n})_+<+\infty\ \ \text{and}\ \ \sum_{n\ge 1}(\frac{\eta_n}{H_n\sqrt{\gamma_n}})^{1+s}<+\infty
\]
then 
\[
\mb{P}(d\omega)-a.s\ \ \ \ \sup_{n\in\mb{N}}\nu_n^{\eta}(\omega,V^{p/(1+s)})<+\infty
\]
\end{theorem}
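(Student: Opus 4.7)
The plan is to obtain the almost-sure bound on $\nu_n^{\eta}(\omega,V^{p/(1+s)})$ by combining the Lyapunov recursion of Lemma~\ref{tightness}(b), an Abel rearrangement, Chow's $L^{1+s}$-martingale convergence theorem, and Kronecker's lemma. Write $q:=p/(1+s)$ and $\rho_k:=\eta_k/\gamma_k$. The critical design choice is to run the Lyapunov analysis directly at exponent $q$ rather than $p$, because this is precisely the exponent for which the $(1+s)$-th moment of the associated martingale increment reduces to $\mb{E}[V^{q(1+s)}]=\mb{E}[V^p]$, the only uniform moment estimate furnished by Lemma~\ref{tightness}(b). Since $q\le p$, $(\mathcal{L}_{V,q})$ is inherited from $(\mathcal{L}_{V,p})$ via Remark~\ref{RmkonLyf} (the range $q<1$ is handled identically after a Jensen interpolation against $V$).

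First, apply Lemma~\ref{tightness}(b) with exponent $q$ and split the right-hand side on $\{V\le 2\tilde\beta/\tilde\alpha\}$ versus its complement; this absorbs the inhomogeneous $V^{q-1}$ drift into a uniform constant, yielding, for $n\ge n_0$,
\[
\tfrac{\tilde\alpha}{2}\,\gamma_n V^q(Y_{n-1}) \;\le\; V^q(Y_{n-1})-V^q(Y_n) + \Delta N_n + C\gamma_n,
\]
where $\Delta N_n:=V^q(Y_n)-\mb{E}[V^q(Y_n)\mid\mathcal{F}_{n-1}]$ is a martingale difference. Multiplying by $\rho_n$, summing from $n_0+1$ to $n$, and applying Abel summation to the telescoping piece gives
\[
\tfrac{\tilde\alpha}{2}\sum_{k=n_0+1}^{n}\eta_k V^q(Y_{k-1}) \;\le\; \rho_{n_0+1}V^q(Y_{n_0}) + \sum_{k=n_0+1}^{n-1}\bigl(\Delta\rho_k\bigr)_+ V^q(Y_k) + \sum_{k=n_0+1}^{n}\rho_k\Delta N_k + CH_n.
\]
Dividing by $H_n$, the boundary term vanishes. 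For the Abel remainder, Lemma~\ref{tightness}(b) provides $M:=\sup_k \mb{E}[V^q(Y_k)]<\infty$, so Fubini combined with the first hypothesis gives $\mb{E}\sum_k(\Delta\rho_k)_+V^q(Y_k)/H_k \le M\sum_k(\Delta\rho_k)_+/H_k<+\infty$; hence the series converges $\mb{P}$-a.s., and Kronecker's lemma yields $H_n^{-1}\sum_{k\le n-1}(\Delta\rho_k)_+V^q(Y_k)\to 0$ a.s.

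For the martingale piece, Lemma~\ref{tightness}(a) with $a=q$ gives $|\Delta N_n|\le c\sqrt{\gamma_n}\,V^q(Y_{n-1})\bigl(1+|U_n^{(1)}|^{2q}+|U_n^{(2)}|^{2q}\bigr)$, so $\mb{E}|\Delta N_n|^{1+s}\le C\gamma_n^{(1+s)/2}\,\mb{E}[V^{q(1+s)}(Y_{n-1})]=C\gamma_n^{(1+s)/2}\,\mb{E}[V^p(Y_{n-1})]$, which is uniformly bounded. Consider the martingale $\tilde M_n:=\sum_{k\le n}(\rho_k/H_k)\Delta N_k$; its increments satisfy
\[
\sum_n\mb{E}|\tilde M_n-\tilde M_{n-1}|^{1+s} \;\le\; C\sum_n\Bigl(\tfrac{\eta_n}{H_n\sqrt{\gamma_n}}\Bigr)^{1+s} < +\infty
\]
by the second hypothesis. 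Chow's theorem gives $\tilde M_n\to \tilde M_\infty$ a.s., and Kronecker's lemma converts this into $H_n^{-1}\sum_{k\le n}\rho_k\Delta N_k\to 0$ a.s. Assembling these ingredients, $\limsup_n H_n^{-1}\sum_{k\le n}\eta_k V^q(Y_{k-1})\le 2C/\tilde\alpha$ almost surely, which proves the theorem.

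The main obstacle is the moment-calibration in the martingale step: the identity $q(1+s)=p$ is the hinge point that matches the $L^{1+s}$ moment of $\Delta N_n$ (itself produced by the polynomial-in-Gaussian one-step bound of Lemma~\ref{tightness}(a)) against the only available uniform moment bound $\sup_n\mb{E}[V^p(Y_n)]<\infty$. The restriction $s\in(0,1]$ is essential because the exponent $(1+s)/2\le 1$ in the Burkholder/von Bahr--Esseen-style estimate is what allows $\mb{E}|\tilde M_n-\tilde M_{n-1}|^{1+s}$ to be dominated by the sum of $(1+s)$-th moments of individual increments, so that the second summability hypothesis can close the argument; any larger $s$ would force one to estimate predictable brackets and would no longer be controllable by $(\mathcal{L}_{V,p})$ alone.
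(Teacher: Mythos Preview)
Your proof is correct and is precisely the Lamberton--Pag\`es argument (Theorem~4 of \cite{lamberton2002recursive}) that the paper explicitly defers to; the paper does not give its own proof beyond pointing to that reference, and you have faithfully reconstructed it, including the moment-calibration trick $q(1+s)=p$, the Abel rearrangement, and the Chow/Kronecker handling of the martingale term.
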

Based on Lemma~\ref{tightness}, the proof of Theorem~\ref{intermediatethm} immediately follows, by using the same steps in the proof of Theorem 4 in~\cite{lamberton2002recursive}. Hence we don't replicate the proof here.\\
\item [3)] (Identification of the limit)
\begin{theorem}[Echeverrr\'{i}a-Weiss Theorem]\label{EWthm}
 Let $E$ be a locally compact Polish space and $\mathcal{L}$ a linear operator satisfying the positive maximum principle. Assume that its domain $\mathcal{D}(A)$ is an algebra everywhere dense in $(\mathcal{C}_0(E),\lv\ \rv_{\infty})$ containing a sequence $(f_n)_{n\in\mb{N}}$ satisfying 
\[
\sup_{n\in\mb{N}}\left(\lv f_n \rv_{\infty}+\lv \mathcal{L}f_n \rv_{\infty} \right)<+\infty,\ \ \forall x\in E,\ \ f_n(x)\to 1\ \ \text{and}\ \ \mathcal{L}f_n(x)\to 0.
\]
If a distribution on $(E,\mathcal{B}(E))$ satisfies $\int_E \mathcal{L}f d\nu =0$ for every $f\in \mathcal{D}(A)$, then there exists a stationary solution for the martingale problem $(\mathcal{L}, \nu)$ (this means that there exists a stationary continuous-time homogeneous Markov process with infinitesimal generator $\mathcal{L}$ and invariant distribution $\nu$). 
\end{theorem}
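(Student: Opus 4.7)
The plan is to combine the Hille--Yosida--Ray generation theorem with a direct invariance argument and a Kolmogorov-style construction of a stationary Markov process: Hille--Yosida--Ray produces a Feller semigroup from $\mathcal{L}$; the integral identity $\int \mathcal{L}f\, d\nu=0$ is lifted to the semigroup level to give invariance of $\nu$; and the stationary process is then realized with initial law $\nu$ via Kolmogorov extension.

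First I would construct the Feller semigroup. The positive maximum principle for $\mathcal{L}$ ensures that $\mathcal{L}$ is dissipative on $\mathcal{D}(\mathcal{L})$; combined with density of $\mathcal{D}(\mathcal{L})$ in $(\mathcal{C}_0(E),\|\cdot\|_\infty)$ and the range condition $\overline{\mathcal{R}(\lambda I-\mathcal{L})}=\mathcal{C}_0(E)$ for some $\lambda>0$, Hille--Yosida--Ray yields a strongly continuous positive contraction semigroup $T_t$ on $\mathcal{C}_0(E)$ whose generator is the closure $\bar{\mathcal{L}}$. The approximate-identity sequence $(f_n)\subset \mathcal{D}(\mathcal{L})$ with $\sup_n(\|f_n\|_\infty+\|\mathcal{L}f_n\|_\infty)<\infty$, $f_n\to 1$, and $\mathcal{L}f_n\to 0$ pointwise is then used to verify conservativeness: applying the resolvent $(\lambda I-\bar{\mathcal{L}})^{-1}$ to $f_n$ and passing to the limit via dominated convergence gives $T_t\mathbf{1}=\mathbf{1}$, so the associated transition kernel $p_t(x,\cdot)$ is a probability (not merely sub-probability) measure on $E$.

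Next I would transport invariance from $\mathcal{L}$ to $T_t$. For $f\in \mathcal{D}(\mathcal{L})$, set $h(t):=\int T_tf\, d\nu$. Since $T_sf\in \mathcal{D}(\bar{\mathcal{L}})$ and $\tfrac{d}{ds}T_sf=\bar{\mathcal{L}}T_sf$ in $\mathcal{C}_0(E)$, dominated convergence gives $h'(t)=\int\bar{\mathcal{L}}T_tf\,d\nu$. By density of $\mathcal{D}(\mathcal{L})$ in $\mathcal{D}(\bar{\mathcal{L}})$ in the graph norm and the hypothesis $\int\mathcal{L}g\,d\nu=0$ for every $g\in \mathcal{D}(\mathcal{L})$, the right-hand side vanishes, so $h$ is constant and $\int T_tf\, d\nu=\int f\, d\nu$. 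Density of $\mathcal{D}(\mathcal{L})$ in $\mathcal{C}_0(E)$ upgrades this to $T_t^*\nu=\nu$, i.e.\ $\nu$ is invariant for the Markov kernel $p_t$. Kolmogorov extension then produces a c\`adl\`ag Markov process $(X_t)_{t\ge 0}$ with $X_0\sim \nu$ and transition semigroup $T_t$; invariance makes this process stationary, and Dynkin's formula for $T_t$ shows that $f(X_t)-f(X_0)-\int_0^t \mathcal{L}f(X_s)\,ds$ is a martingale for every $f\in \mathcal{D}(\mathcal{L})$, i.e.\ $(X_t)$ solves the stationary martingale problem for $(\mathcal{L},\nu)$.

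I expect the main obstacle to be the range condition for Hille--Yosida--Ray, which does not follow from dissipativity and density of $\mathcal{D}(\mathcal{L})$ alone. The standard route is a Hahn--Banach argument: suppose a nonzero bounded regular signed measure $\mu$ annihilates $\mathcal{R}(\lambda I-\mathcal{L})$; using that $\mathcal{D}(\mathcal{L})$ is an algebra separating points of $E$, one constructs $f\in \mathcal{D}(\mathcal{L})$ localized near a point where $|\mu|$ puts positive mass, applies the positive maximum principle at the maximizer of $f$ to get $\mathcal{L}f\le 0$ there, and derives a contradiction with $\int(\lambda f-\mathcal{L}f)\,d\mu=0$. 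The algebra hypothesis enters crucially, since one multiplies test functions to localize; the approximate identity $(f_n)$ is then needed to extend the argument to all of $E$ by controlling the behavior at infinity, exactly the role played by the assumed $f_n\to 1$, $\mathcal{L}f_n\to 0$.
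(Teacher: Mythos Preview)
The paper does not prove this theorem at all; it is quoted as a classical result (due to Echeverr\'ia and Weiss) and used as a black box, with the subsequent lemma verifying that the underdamped Langevin generator satisfies its hypotheses. So there is no ``paper's proof'' to compare against, and your proposal should be assessed on its own merits as an attempted proof of a classical theorem.

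Your outline has a genuine gap at exactly the place you flag as the main obstacle: the range condition $\overline{\mathcal{R}(\lambda I-\mathcal{L})}=\mathcal{C}_0(E)$. Your proposed Hahn--Banach argument does not work as stated. Having a signed measure $\mu$ annihilating $\mathcal{R}(\lambda I-\mathcal{L})$ and localizing a test function near a point in $\mathrm{supp}\,|\mu|$ does not yield a contradiction from the positive maximum principle alone; the inequality $\mathcal{L}f(x_0)\le 0$ at a single maximizer tells you nothing about $\int(\lambda f-\mathcal{L}f)\,d\mu$ for a signed $\mu$. More fundamentally, the hypotheses ``positive maximum principle, dense algebra domain, approximate identity'' are \emph{not} by themselves sufficient to guarantee that $\mathcal{L}$ generates a Feller semigroup on $\mathcal{C}_0(E)$; counterexamples exist. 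This is precisely why Echeverr\'ia's theorem is nontrivial.

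The key idea you are missing is that the measure $\nu$ must be used in the \emph{construction} of the semigroup, not merely in the invariance step afterward. The standard proof (see Ethier--Kurtz, Theorem 4.9.17) does not try to build a Feller semigroup on $\mathcal{C}_0(E)$ directly. Instead one works in $L^1(\nu)$ (or uses Yosida approximations $\mathcal{L}_\lambda=\lambda\mathcal{L}(\lambda-\mathcal{L})^{-1}$ together with tightness/compactness arguments enabled by $\nu$), where the identity $\int \mathcal{L}f\,d\nu=0$ gives the missing dissipativity/range information. The algebra hypothesis and the approximate identity $(f_n)$ are used to pass from the $L^1(\nu)$ construction back to a genuine Markov process on $E$ and to ensure conservativeness. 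In short, your decomposition ``first build $T_t$, then show $\nu$ is invariant'' reverses the logical dependence; $\nu$ is the engine, not the passenger.
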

\begin{lemma}\label{intermediate2}
If the potential function $f$ is Gradient Lipschitz and strongly convex, then the generator of kinetic, $\mathcal{L}$, satisfies the assumptions of the Echeverrr\'{i}a-Weiss theorem. 
\end{lemma}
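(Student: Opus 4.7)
The plan is to take the domain $\mathcal{D}(\mathcal{L})=\mathcal{C}_c^\infty(\mb{R}^{2d})$ and verify each hypothesis of Theorem~\ref{EWthm} in turn. The density of $\mathcal{C}_c^\infty(\mb{R}^{2d})$ in $(\mathcal{C}_0(\mb{R}^{2d}),\lv\cdot\rv_\infty)$ is standard, and the space is an algebra since products and sums of compactly supported smooth functions are again compactly supported and smooth. For the positive maximum principle, if $\phi\in\mathcal{C}_c^\infty(\mb{R}^{2d})$ attains a nonnegative maximum at $(x_0,v_0)$, then either $\phi$ is identically zero near $(x_0,v_0)$, in which case $\mathcal{L}\phi(x_0,v_0)=0$, or $\nabla_x\phi(x_0,v_0)=\nabla_v\phi(x_0,v_0)=0$ and $D^2_{vv}\phi(x_0,v_0)\preceq 0$, so only the diffusion term survives and $\mathcal{L}\phi(x_0,v_0)=2u\,\Delta_v\phi(x_0,v_0)\le 0$. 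No structural assumption on $f$ beyond $\mathcal{C}^2$ is needed for this step.

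The main work is the construction of the approximating sequence $(\phi_n)\subset \mathcal{D}(\mathcal{L})$. I would fix a smooth cutoff $\psi:[0,\infty)\to[0,1]$ with $\psi\equiv 1$ on $[0,1]$ and $\psi\equiv 0$ on $[2,\infty)$, and set $\phi_n(x,v):=\psi(\lv(x,v)\rv^2/n^2)$. Clearly $\phi_n\in\mathcal{C}_c^\infty(\mb{R}^{2d})$ with $\lv\phi_n\rv_\infty\le 1$, and for each fixed $(x,v)$ one has $\phi_n(x,v)=1$ as soon as $n^2\ge \lv(x,v)\rv^2$; in particular $\phi_n(x,v)\to 1$ pointwise, and since $\mathcal{L}\phi_n$ is supported in the annulus $\{n\le\lv(x,v)\rv\le n\sqrt{2}\}$, one also has $\mathcal{L}\phi_n(x,v)\to 0$ pointwise. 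The chain rule gives the derivative scalings $\lv\nabla\phi_n\rv_\infty\le C/n$ and $\lv D^2\phi_n\rv_\infty\le C/n^2$. On the support of $\nabla\phi_n$, $\lv x\rv,\lv v\rv\le n\sqrt{2}$, and the gradient Lipschitz assumption yields $\lv\nabla f(x)\rv\le \lv\nabla f(0)\rv+M\lv x\rv$. Estimating the four terms of $\mathcal{L}\phi_n$ separately,
\[
|2u\,\Delta_v\phi_n|\le \tfrac{C}{n^2}(\lv v\rv^2+d),\quad |\langle v,\nabla_x\phi_n\rangle|\le \tfrac{C}{n}\lv v\rv,\quad u|\langle \nabla f,\nabla_v\phi_n\rangle|\le \tfrac{C}{n}\lv\nabla f(x)\rv,\quad 2|\langle v,\nabla_v\phi_n\rangle|\le \tfrac{C}{n}\lv v\rv,
\]
each of which is bounded by a constant independent of $n$ on the support, whence $\sup_n\lv\mathcal{L}\phi_n\rv_\infty<+\infty$.

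The main obstacle I anticipate is precisely this last uniform sup bound. The drift coefficients $v$ (multiplying $\nabla_x\phi_n$) and $\nabla f(x)$ (multiplying $\nabla_v\phi_n$) both grow linearly in $\lv(x,v)\rv$, so the $\mathcal{O}(1/n)$ decay of $\nabla\phi_n$ across an annulus of width $\mathcal{O}(n)$ is just barely enough to absorb this growth. The gradient Lipschitz assumption on $f$ is essential here for controlling $\lv\nabla f\rv$; strong convexity plays no role in this lemma. Once the algebra/density property, the positive maximum principle, and the three conditions on $(\phi_n)$ are all in hand, Theorem~\ref{EWthm} applies and delivers the stated conclusion.
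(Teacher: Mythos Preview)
Your proposal is correct and follows essentially the same route as the paper: a rescaled smooth cutoff (the paper writes $f_n(y)=\phi(y/n)$ with $\phi\in\mathcal{C}^2_c$, $\phi(0)=1$, which is the same construction as your radial $\psi(\lv(x,v)\rv^2/n^2)$) together with the linear growth of the drift $b(x,v)$, which the paper extracts from the Lyapunov condition $(\mathcal{L}_{V,\infty})$ and you extract directly from the gradient-Lipschitz bound $\lv\nabla f(x)\rv\le\lv\nabla f(0)\rv+M\lv x\rv$. The only stylistic difference is that the paper disposes of the positive maximum principle by invoking the Feller property of the underdamped Langevin semigroup, whereas you verify it by hand from the second-order test at an interior maximum; your remark that strong convexity is not actually used in this lemma is accurate and not made explicit in the paper.
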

\begin{proof}[Proof of lemma~\ref{intermediate2}] 
First it's well-known that the infinitesimal generator of a Fellerian semigroup satisfies the maximum principle. We can choose our $f_n(y)=\phi(y/n)$ for any $y\in\mb{R}^{2d}$ where $\phi$ is $\mathcal{C}^2$ with compact support and $\phi(0)=1$. It's easy to check that $\forall y\in\mb{R}^{2d}$, $f_n(y)\to 0$ and $\mathcal{L} f_n(y)\to 0$. It's also straightforward that $\sup_{n\in\mb{N}}\lv f_n \rv_{\infty}<+\infty$. The last thing to check is $\sup_{n\in\mb{N}}\lv \mathcal{L}f_n \rv_{\infty}<+\infty$. Since $\mathcal{L}$ can also be written as $b(x,v)\cdot [\nabla_x,\ \nabla_v]^T+2u\Delta_v$ and we've shown that under our assumptions on $f$, $(\mathcal{L}_{V,\infty})$ is satisfied, we have the Lyapunov function $V(y)=O(|y|^2)$ and $|b(x,v)|\le C(1+|(x,v)|)$. Therefore we get $\sup_{n\in\mb{N}}\lv \mathcal{L}f_n \rv_{\infty}<+\infty$.
\end{proof}
\end{enumerate}
\begin{theorem} \label{theorem3}
Assume that $f$ is gradient Lipschitz and strongly convex. Assume also
\[
\lim_n \frac{1}{H_n}\sum_{k=1}^n |\Delta\frac{\eta_n}{\gamma_n}|=0\ \ \text{and}\ \ \sum_{n\ge 1} (\frac{\eta_n}{\sqrt{\gamma_n} H_n})^2<+\infty
\]
Let $a\ge \frac{1}{2}$. Assume $\sup_n \nu_n^{\eta}(V^a)<+\infty\ \mb{P}-a.s$. If $a< 1$, assume also that $\sum_{n\ge 1}\eta_n\gamma_n/H_n<+\infty$. Then $\mb{P}-a.s$, every limiting distribution $\nu_{\infty}(\omega, dx)$ of the sequence $(\nu_n^{\eta}(\omega, dx))$ is an invariant distribution of the underdamped Langevin dynamics introduced in the previous section.  
\end{theorem}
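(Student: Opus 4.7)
The plan is to invoke the Echeverr\'ia--Weiss criterion (Theorem~\ref{EWthm}) together with Lemma~\ref{intermediate2}: it suffices to show that, $\mathbb{P}$-a.s., for every test function $f\in\mathcal{C}_c^\infty(\mathbb{R}^{2d})$ in the core,
\begin{equation*}
\lim_{n\to\infty}\nu_n^\eta(\omega,\mathcal{L}f)=0,
\end{equation*}
since the bounded continuity of $\mathcal{L}f$ together with the tightness hypothesis $\sup_n\nu_n^\eta(V^a)<+\infty$ then lets us pass to any weak limit and conclude $\nu_\infty(\omega,\mathcal{L}f)=0$, which forces $\nu_\infty(\omega,\cdot)$ to be invariant for the underdamped diffusion.

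To produce this limit I would run a one-step It\^o--Taylor expansion of $f(Y_k)$ about $Y_{k-1}$, with $Y_k=(x_k,v_k)$ from~\eqref{klmcdisc}. Matching the conditional mean against $\mathcal{L}=2u\Delta_v-2\langle v,\nabla_v\rangle-u\langle\nabla f,\nabla_v\rangle+\langle v,\nabla_x\rangle$ yields
\begin{equation*}
\mathbb{E}\big[f(Y_k)-f(Y_{k-1})\,\big|\,\mathcal{F}_{k-1}\big]=\gamma_k\,\mathcal{L}f(Y_{k-1})+\epsilon_k,\qquad |\epsilon_k|\le C\gamma_k^2\,V(Y_{k-1}),
\end{equation*}
where the remainder bound uses Assumption~\ref{assumptionsonf}, compact support of $f$, and the one-step increment estimate from the proof of Lemma~\ref{tightness}(a). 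This produces the decomposition
\begin{equation*}
\nu_n^\eta(\mathcal{L}f)=\tfrac{1}{H_n}\sum_{k=1}^n\tfrac{\eta_k}{\gamma_k}\big[f(Y_k)-f(Y_{k-1})\big]-\tfrac{1}{H_n}\sum_{k=1}^n\tfrac{\eta_k}{\gamma_k}\Delta M_k-\tfrac{1}{H_n}\sum_{k=1}^n\tfrac{\eta_k}{\gamma_k}\epsilon_k,
\end{equation*}
where $\Delta M_k$ is the centered increment, a martingale difference satisfying $\mathbb{E}[|\Delta M_k|^2\mid\mathcal{F}_{k-1}]\le C\gamma_k V(Y_{k-1})$.

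I would then dispose of each piece in turn. The first sum is handled by Abel summation,
\begin{equation*}
\sum_{k=1}^n\tfrac{\eta_k}{\gamma_k}\big[f(Y_k)-f(Y_{k-1})\big]=\tfrac{\eta_n}{\gamma_n}f(Y_n)-\tfrac{\eta_1}{\gamma_1}f(Y_0)-\sum_{k=1}^{n-1}\Delta\!\left(\tfrac{\eta_k}{\gamma_k}\right)f(Y_k),
\end{equation*}
which after dividing by $H_n$ is bounded by $\|f\|_\infty\bigl(|\eta_1/\gamma_1|+\sum_{k=1}^{n-1}|\Delta(\eta_k/\gamma_k)|\bigr)/H_n\to 0$ by the first weight hypothesis; the same hypothesis also forces the endpoint $\eta_n/(\gamma_n H_n)\to 0$, since $\eta_n/\gamma_n$ is dominated by the partial sum of its increments. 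The martingale sum vanishes a.s.\ by Chow's theorem followed by Kronecker's lemma: $\sum_k(\eta_k/(\sqrt{\gamma_k}H_k))^2\,\mathbb{E}V(Y_{k-1})<+\infty$, using the $L^p$ bound in Lemma~\ref{tightness}(b). The remainder is bounded by $C\,H_n^{-1}\sum_{k=1}^n\eta_k\gamma_k V(Y_{k-1})$; when $a\ge 1$ this is at most $(\max_{k\le n}\gamma_k)\cdot\sup_n\nu_n^\eta(V)\to 0$, while for $a<1$ we split $V=V^a\cdot V^{1-a}$ via H\"older and absorb the slack into the extra assumption $\sum_n\eta_n\gamma_n/H_n<+\infty$ through a prefix/tail splitting at $\{k:\gamma_k<\varepsilon\}$. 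The main obstacle I expect is the precise $O(\gamma_k^2 V(Y_{k-1}))$ control of $\epsilon_k$: the KLMC coefficients $\tfrac{1-e^{-2\gamma_k}}{2}$, $\sigma_k^{(1)}$, $\sigma_k^{(2)}$ together with the correlated Gaussians $(U_k^{(1)},U_k^{(2)})$ must be expanded to second order so that the leading contributions reconstruct $\mathcal{L}f$ exactly and only the claimed residual remains; the rest of the argument follows the Lamberton--Pag\`es template adapted to the kinetic geometry.
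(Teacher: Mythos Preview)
Your plan matches the paper's scheme: the paper proves Theorem~\ref{theorem3} by combining Echeverr\'ia--Weiss (Theorem~\ref{EWthm}) with Lemma~\ref{intermediate2}, Lemma~\ref{lemma4}, and Lemma~\ref{lemma5}; your three-term decomposition is exactly Lemmas~\ref{lemma4} and~\ref{lemma5} run together. The Abel-summation and $L^2$-martingale arguments you give for the first two pieces coincide with the paper's proof of Lemma~\ref{lemma4}.

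Where there is a real gap is the remainder bound $|\epsilon_k|\le C\gamma_k^2 V(Y_{k-1})$. A second-order Taylor expansion leaves $R_2$ with $|R_2|\le r_2(Y_{k-1},Y_k)\,|Y_k-Y_{k-1}|^2$, and since the Gaussian contribution to $|Y_k-Y_{k-1}|^2$ is only $O(\gamma_k)$ (the $v$-noise scales like $\sqrt{\gamma_k}$), the crude bound $r_2\le\|r_2\|_\infty$ yields $\mathbb{E}[R_2\mid\mathcal{F}_{k-1}]=O(\gamma_k)$, too large by a full power. Pushing the expansion further (using $g\in\mathcal{C}_c^\infty$) does drop the order, but the integral-form remainder then involves $D^4 g$ at an intermediate point rather than at $Y_{k-1}$, so compact support of $g$ no longer caps $V(Y_{k-1})$ and one picks up $\gamma_k^4 V^2(Y_{k-1})$-type terms that the hypothesis $\sup_n\nu_n^\eta(V^a)<\infty$ with $a$ as small as $\tfrac12$ cannot absorb. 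Your H\"older splitting for $a<1$ does not close either: with $V=V^a\cdot V^{1-a}$ and conjugate exponents $1/a,\,1/(1-a)$, both H\"older factors still carry $V$, not $V^a$.

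The paper's fix (proof of Lemma~\ref{lemma5}) is not to seek a $\gamma_k^2$ bound at all, but to exploit the smallness of $r_2$ itself. It splits $\tfrac{\eta_k}{\gamma_k}\mathbb{E}[R_2\mid\mathcal{F}_{k-1}]$ into a piece $\le C\eta_k\gamma_k\|r_2\|_\infty V(Y_{k-1})$ (handled by $\sup_n\nu_n^\eta(V)<\infty$ when $a\ge 1$, and by the extra series condition $\sum_n\eta_n\gamma_n/H_n<\infty$ plus Kronecker when $a<1$) and a piece $\eta_k\, V^a(Y_{k-1})\theta(Y_{k-1})J(\gamma_k,Y_{k-1})$ with $J(0,\cdot)\equiv 0$ and $\theta\to 0$ at infinity, which is driven to zero by a truncation/dominated-convergence argument using only $\sup_n\nu_n^\eta(V^a)<\infty$. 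This $r_2$--$J$ device is the missing ingredient; once it replaces your direct $\gamma_k^2 V$ estimate, the rest of your outline goes through verbatim.
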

\noindent The proof of theorem~\ref{theorem3} follows immediately from Theorem~\ref{EWthm}, lemma~\ref{intermediate2}, lemma~\ref{lemma4} and lemma ~\ref{lemma5}.

\begin{lemma}\label{lemma4} 
Under the assumptions in theorem~\ref{theorem3}, then for every bounded Lipschitz continuous function $g:\mb{R}^{2d}\to \mb{R}$,
\[
\mb{P}-a.s\ \ \lim_n \frac{1}{H_n}\sum_{k=1}^n \frac{\eta_k}{\gamma_k}\mb{E}[g(Y_k)-g(Y_{k-1})|\mathcal{F}_{k-1}]=0
\]
\end{lemma}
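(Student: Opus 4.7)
\textbf{Proof plan for Lemma~\ref{lemma4}.} The plan is to split the target quantity into a telescoping piece and a martingale remainder, and control each separately. Setting the martingale-difference sequence
\[
\xi_k := g(Y_k) - g(Y_{k-1}) - \mathbb{E}[g(Y_k)-g(Y_{k-1})\mid \mathcal{F}_{k-1}],
\]
one obtains the decomposition
\[
A_n := \sum_{k=1}^n \tfrac{\eta_k}{\gamma_k}\mathbb{E}[g(Y_k)-g(Y_{k-1})\mid \mathcal{F}_{k-1}] = S_n - M_n,
\]
where $S_n := \sum_{k=1}^n (\eta_k/\gamma_k)(g(Y_k)-g(Y_{k-1}))$ and $M_n := \sum_{k=1}^n (\eta_k/\gamma_k)\xi_k$. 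It suffices to show that $S_n/H_n$ and $M_n/H_n$ both tend to $0$ almost surely.

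For $S_n$, I would apply Abel summation to write
\[
S_n = \tfrac{\eta_n}{\gamma_n}g(Y_n) - \tfrac{\eta_1}{\gamma_1}g(Y_0) - \sum_{k=1}^{n-1}\Bigl(\Delta\tfrac{\eta_k}{\gamma_k}\Bigr)g(Y_k),
\]
and, since $g$ is bounded, to bound
\[
\frac{|S_n|}{H_n} \leq \frac{\|g\|_\infty}{H_n}\Bigl(\tfrac{\eta_n}{\gamma_n} + \tfrac{\eta_1}{\gamma_1} + \sum_{k=1}^{n-1}\bigl|\Delta\tfrac{\eta_k}{\gamma_k}\bigr|\Bigr).
\]
The first hypothesis of Theorem~\ref{theorem3} forces the averaged-variation term to vanish, and the telescoping estimate $\eta_n/\gamma_n \leq \eta_1/\gamma_1 + \sum_{k<n}|\Delta(\eta_k/\gamma_k)|$ shows the remaining boundary term is also $o(H_n)$.

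For $M_n$, Lipschitz continuity of $g$ and the one-step estimate derived inside the proof of Lemma~\ref{tightness}(a), namely
\[
|Y_k-Y_{k-1}|^2 \leq C\bigl[\gamma_k^2 V(Y_{k-1}) + \gamma_k(|U^{(1)}_k|^2+|U^{(2)}_k|^2+1)\bigr],
\]
yield after taking $\mathcal{F}_{k-1}$-conditional expectation the conditional-variance bound $\mathbb{E}[\xi_k^2\mid\mathcal{F}_{k-1}] \leq C'\gamma_k V(Y_{k-1})$. Since Assumption~\ref{assumptionsonf} implies $(\mathcal{L}_{V,\infty})$ and hence $(\mathcal{L}_{V,1})$ by Remark~\ref{RmkonLyf}, Lemma~\ref{tightness}(b) guarantees $\sup_n \mathbb{E}[V(Y_n)] < +\infty$; combined with the second hypothesis $\sum_k (\eta_k/(\sqrt{\gamma_k}H_k))^2 < +\infty$, this gives
\[
\sum_{k\geq 1}\mathbb{E}\!\left[\Bigl(\tfrac{\eta_k\xi_k}{\gamma_k H_k}\Bigr)^{\!2}\right] \leq C'\sum_{k\geq 1}\frac{\eta_k^2}{\gamma_k H_k^2}\,\mathbb{E}[V(Y_{k-1})] <+\infty.
\]
Hence the martingale $\widetilde M_n := \sum_{k=1}^n (\eta_k\xi_k)/(\gamma_k H_k)$ is bounded in $L^2$ and converges a.s., and Kronecker's lemma applied to the increasing sequence $H_n \uparrow +\infty$ yields $M_n/H_n \to 0$ $\mathbb{P}$-a.s., completing the argument. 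The main obstacle will be the conditional-variance step for $M_n$: one must re-use the one-step increment bound from inside the tightness lemma (it is not packaged as a standalone statement) and verify that the Gaussian-noise piece cleanly collapses to an $O(\gamma_k V(Y_{k-1}))$ contribution after conditioning, so that the assumed summability condition is exactly what closes the estimate; the Abel summation for $S_n$ and the final Kronecker step are routine.
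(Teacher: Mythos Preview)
Your proposal is correct and follows essentially the same approach as the paper: split into the telescoping sum $S_n$ (handled by Abel summation and the hypothesis on $|\Delta(\eta_k/\gamma_k)|$) and the martingale $M_n$ (whose increments are bounded via Lipschitz continuity of $g$, the one-step estimate $\mathbb{E}\lv Y_k-Y_{k-1}\rv^2\le C\gamma_k$ from the tightness lemma, and the summability condition $\sum_k(\eta_k/(\sqrt{\gamma_k}H_k))^2<\infty$), then conclude by Kronecker's lemma. The only cosmetic difference is that the paper cites Lemma~3(b) of Lamberton--Pag\`es for the $S_n$ step rather than writing out the Abel summation, and bounds $\mathbb{E}\lv Y_k-Y_{k-1}\rv_2^2$ directly rather than passing through the conditional bound $C'\gamma_kV(Y_{k-1})$.
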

\begin{proof}[Proof of lemma~\ref{lemma4}]
 Setting $\eta_0/\gamma_0=0$ gives
\begin{align*}
    \frac{1}{H_n}\sum_{k=1}^n\mb{E}[g(Y_k)-g(Y_{k-1})|\mathcal{F}_{k-1}]=\frac{1}{H_n}\sum_{k=1}^n \frac{\eta_k}{\gamma_k}(g(Y_k)-g(Y_{k-1}))-\frac{1}{H_n}\sum_{k=1}^n \frac{\eta_k}{\gamma_k}\left(g(Y_k)-\mb{E}[g(Y_k)|\mathcal{F}_{k-1}]\right).
\end{align*}
As $g$ is bounded, it follows by lemma 3-b) in\cite{lamberton2002recursive} that 
\[
\mb{P}-a.s\ \ \lim_n \frac{1}{H_n}\sum_{k=1}^n (g(Y_k)-g(Y_{k-1}))=0.
\]
Then 
\[
\frac{1}{H_n}\sum_{k=1}^n \frac{\eta_k}{\gamma_k}\left(g(Y_k)-\mb{E}[g(Y_k)|\mathcal{F}_{k-1}]\right)
\]
will converge to $0$ once the martingale 
\[
M_n^g:=\sum_{k=1}^n \frac{\eta_k}{\gamma_k H_k}\left( g(Y_k)-\mb{E}[g(Y_k)|\mathcal{F}_{k-1}]\right)
\]
converge  a.s in $\mb{R}$. 
\begin{align*}
    \mb{E}\langle M_n^g \rangle_{\infty}&=\sum_{n\ge 1} (\frac{\eta_n}{\gamma_n H_n})^2\lv g(Y_n)-\mb{E}[g(Y_n)|\mathcal{F}_{n-1}] \rv_2^2\le \sum_{n\ge 1} (\frac{\eta_n}{\gamma_n H_n})^2\lv g(Y_n)-g(Y_{n-1}) \rv_2^2\\ 
    &\le [f]_1^2\sum_{n\ge 1} (\frac{\eta_n}{\gamma_n H_n})^2 \lv Y_n-Y_{n-1} \rv_2^2
\end{align*}
Since $(\mathcal{L}_{V,1})$ holds under our assumptions on $f$ and by lemma 2-b)
\[
\lv Y_n-Y_{n-1} \rv_2^2\le C'\mb{E}[\gamma_n^2V(Y_{n-1})+(2d+1)\gamma_n]\le C\gamma_n
\]
Therefore 
\[
\mb{E}\langle M_n^g \rangle_{\infty}\le C\sum_{n\ge 1} (\frac{\eta_n}{\sqrt{\gamma_n} H_n})^2<+\infty
\]
\end{proof}
\begin{lemma}\label{lemma5} 
Under  the assumptions in theorem~\ref{theorem3}, then for every $g\in \mathcal{C}^2(\mb{R}^{2d})$ with compact support,
\[
\lim_n \left( \frac{1}{H_n} \sum_{k=1}^n \frac{\eta_k}{\gamma_k} \mb{E}[g(Y_k)-g(Y_{k-1})|\mathcal{F}_{k-1}]-\nu_n^{\eta}(\mathcal{L}g) \right)=0\ \ a.s
\]
\end{lemma}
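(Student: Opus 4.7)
The plan is to establish the one-step conditional approximation
\[
\mb{E}[g(Y_k)-g(Y_{k-1})\mid \mathcal{F}_{k-1}] = \gamma_k\,\mathcal{L}g(Y_{k-1}) + R_k,
\]
with a remainder $R_k$ small enough that the weighted Cesàro sum $\frac{1}{H_n}\sum_{k=1}^n \frac{\eta_k}{\gamma_k}R_k$ vanishes almost surely, thanks to the tightness bound $\sup_n \nu_n^\eta(V^a) < +\infty$ from Theorem~\ref{intermediatethm}. Summing in $k$ would then yield the statement of the lemma directly.

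First I would Taylor expand the coefficients $(1-e^{-2\gamma_k})/2$, $e^{-2\gamma_k}$, $(\sigma_k^{(1)})^2$, $(\sigma_k^{(2)})^2$ from the \eqref{klmcdisc} update in powers of $\gamma_k$. Using the centering of the Gaussian increments $U_k^{(i)}$, this would give
\[
\mb{E}[\Delta Y_k\mid \mathcal{F}_{k-1}] = \gamma_k\, b(Y_{k-1}) + \gamma_k^2\,\rho_k^{(1)}(Y_{k-1}), \qquad \mb{E}[\Delta Y_k^{\otimes 2}\mid \mathcal{F}_{k-1}] = \gamma_k\,\sigma\sigma^T + \gamma_k^2\, \rho_k^{(2)}(Y_{k-1}),
\]
where $b$ and $\sigma$ are the drift and diffusion of~\eqref{underdampedlangevidiff} and, by Assumption~\ref{conditionVHMC}, $|\rho_k^{(j)}(Y_{k-1})| \le C(1+V(Y_{k-1}))$. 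Since $g\in\mathcal{C}^2$ has compact support, I would then apply the second-order Taylor expansion $g(Y_k)-g(Y_{k-1}) = \nabla g(Y_{k-1})\cdot \Delta Y_k + \tfrac{1}{2}D^2 g(Y_{k-1})\Delta Y_k^{\otimes 2} + r_k$ and use the identity $\mathcal{L}g = b\cdot \nabla g + 2u\Delta_v g$ to read off $\gamma_k\mathcal{L}g(Y_{k-1})$ as the leading-order contribution. The Taylor remainder satisfies $|r_k| \le \omega(|\Delta Y_k|)\,|\Delta Y_k|^2$, where $\omega$ is the (bounded, vanishing at $0$) modulus of continuity of $D^2 g$ granted by compact support. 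Writing $R_k = R_k^{(1)}+R_k^{(2)}$, I would bound $|R_k^{(1)}| \le C\gamma_k^2(1+V(Y_{k-1}))$ using boundedness of $\nabla g, D^2 g$ and the estimate on $\rho_k^{(j)}$. Via Cauchy--Schwarz together with the fourth-moment bound $\mb{E}[|\Delta Y_k|^4\mid \mathcal{F}_{k-1}] \le C\gamma_k^2 V(Y_{k-1})^2$, which follows directly from the pointwise estimate $|\Delta Y_k| \le C\sqrt{\gamma_k}\sqrt{V(Y_{k-1})}(1+|U_k^{(1)}|+|U_k^{(2)}|)$ derived in the proof of Lemma~\ref{tightness}(a), I would then obtain $|R_k^{(2)}| \le C\gamma_k V(Y_{k-1})\,\psi_k(Y_{k-1})$ with $\psi_k(Y_{k-1}) := (\mb{E}[\omega(|\Delta Y_k|)^2\mid \mathcal{F}_{k-1}])^{1/2}$ uniformly bounded and tending to $0$ as $\gamma_k\to 0$, uniformly on every compact sublevel set of $V$.

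To conclude, I would bound the target quantity in absolute value by $\tfrac{C}{H_n}\sum_{k=1}^n \eta_k\gamma_k(1+V(Y_{k-1})) + \tfrac{C}{H_n}\sum_{k=1}^n \eta_k \psi_k(Y_{k-1}) V(Y_{k-1})$. The first sum would vanish by a standard Cesàro splitting: separate the $k\le N$ head that is $O(1/H_n)$ from the $k>N$ tail where $\gamma_k<\varepsilon$, which is bounded by $\varepsilon\,\nu_n^\eta(V^a)$ using $V\le V^a$ when $a\ge 1$, and by a pathwise Kronecker argument invoking the extra assumption $\sum_n \eta_n\gamma_n/H_n<\infty$ together with $\sup_n\mb{E}[V(Y_n)]<\infty$ from Lemma~\ref{tightness}(b) when $1/2\le a<1$. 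The second sum would vanish via the same splitting applied to the pointwise decay $\psi_k\to 0$. The hardest step will be the Taylor remainder $R_k^{(2)}$: it is crucial that the compact support of $g$ provides a single uniform modulus $\omega$, and the two-scale decay in $\gamma_k$ and $\psi_k$ (which must be teased apart via Cauchy--Schwarz against the $V(Y_{k-1})$-dependent Gaussian moments of $\Delta Y_k$) is what ultimately allows the final Cesàro argument to close using only the weighted tightness $\sup_n \nu_n^\eta(V^a)<+\infty$.
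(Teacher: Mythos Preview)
Your overall strategy matches the paper's: Taylor expand $g(Y_k)-g(Y_{k-1})$, identify $\gamma_k\mathcal{L}g(Y_{k-1})$ as the leading term, and kill the remainder with the tightness bound $\sup_n\nu_n^\eta(V^a)<+\infty$. Your treatment of the $R_k^{(1)}$ piece (the $O(\gamma_k^2)$ drift/diffusion corrections) is fine and coincides with the paper's.

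The gap is in the Taylor remainder $R_k^{(2)}$. By using Cauchy--Schwarz against the crude fourth-moment bound $\mb{E}[|\Delta Y_k|^4\mid\mathcal{F}_{k-1}]\le C\gamma_k^2 V(Y_{k-1})^2$, you end up with $|R_k^{(2)}|\le C\gamma_k V(Y_{k-1})\psi_k(Y_{k-1})$, hence with the sum $\tfrac{1}{H_n}\sum_k\eta_k\,V(Y_{k-1})\psi_k(Y_{k-1})$. You correctly note that $\psi_k\to 0$ only \emph{uniformly on compact sublevel sets of $V$}; off such a set you can only bound $\psi_k\le\|\omega\|_\infty$, so the tail contribution is $\|\omega\|_\infty\cdot\tfrac{1}{H_n}\sum_k\eta_k\,V(Y_{k-1})\mathbf{1}_{\{V(Y_{k-1})>M\}}$. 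To send this to zero as $M\to\infty$ uniformly in $n$ you would need uniform integrability of $V$ under $(\nu_n^\eta)$, which \emph{does not} follow from $\sup_n\nu_n^\eta(V^a)<\infty$ when $a\le 1$. Nor does Kronecker help here: unlike the first sum, there is no $\gamma_k$ factor, and $\sum_n\eta_n/H_n$ need not converge. So as written, your argument only closes when $a>1$, whereas the lemma is stated for all $a\ge\tfrac12$.

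The paper avoids this by \emph{not} using Cauchy--Schwarz: it splits $|\Delta Y_k|^2\le C\bigl[\gamma_k^2 V(Y_{k-1})+\gamma_k(|U_k^{(1)}|^2+|U_k^{(2)}|^2+1)\bigr]$ \emph{before} taking conditional expectation. The first piece carries $\gamma_k^2 V$ and is handled exactly as your $R_k^{(1)}$. The second piece, after conditioning, becomes $\eta_k\,J(\gamma_k,Y_{k-1})$ where $J$ is a \emph{bounded} continuous function with $J(0,\cdot)=0$ and \emph{no} $V$ factor. One then writes $J=V^a\cdot(V^{-a}J)$ and uses that $V^{-a}\to 0$ at infinity together with $\sup_n\nu_n^\eta(V^a)<\infty$; this is precisely what makes the off-compact tail vanish for every $a\ge\tfrac12$. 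Replacing your Cauchy--Schwarz step with this direct splitting repairs the argument.
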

\begin{proof}[Proof of lemma~\ref{lemma5}] Setting $R_2(y_1,y_2):= g(y_2)-g(y_1)-\langle \nabla g(y_1), y_2-y_1\rangle-\frac{1}{2}D^2 g(y_1)(y_2-y_1)^{\otimes 2}$, we obtain for every $k\in \mb{N}$,
\begin{align*}
   &\qquad g(Y_k)-g(Y_{k-1}) \\
    &=\langle \nabla g(Y_{k-1}), Y_k-Y_{k-1} \rangle+\frac{1}{2}D^2g(Y_{k-1})(Y_k-Y_{k-1})^{\otimes 2}+R_2(Y_{k-1},Y_k)\\
    &=\nabla_x g(x_{k-1},v_{k-1})\cdot [\frac{1-e^{-2\gamma_{k}}}{2}v_{k-1}-\frac{u}{2}(\gamma_{k}-\frac{1-e^{-2\gamma_{k}}}{2})\nabla f(x_{k-1})+\sqrt{u}\sigma_{k}^{(1)}U_{k}^{(1)}]\\
    &\ +\nabla_v g(x_{k-1},v_{k-1})\cdot [-2 \frac{1-e^{-2\gamma_{k}}}{2} v_{k-1}- u\frac{1-e^{-2\gamma_{k}}}{2}\nabla f(x_{k-1})+2\sqrt{u}\sigma_{k}^{(2)}U_{k}^{(2)}] \\
    &\ +\frac{1}{2}D_x^2 g(x_{k-1},v_{k-1})[\frac{1-e^{-2\gamma_{k}}}{2}v_{k-1}-\frac{u}{2}(\gamma_{k}-\frac{1-e^{-2\gamma_{k}}}{2})\nabla f(x_{k-1})+\sqrt{u}\sigma_{k}^{(1)}U_{k}^{(1)}]^{\otimes 2} \\
    &\ +\frac{1}{2}D_v^2 g(x_{k-1},v_{k-1})[-2 \frac{1-e^{-2\gamma_{k}}}{2} v_{k-1}- u\frac{1-e^{-2\gamma_{k}}}{2}\nabla f(x_{k-1})+2\sqrt{u}\sigma_{k}^{(2)}U_{k}^{(2)}]^{\otimes 2} \\
    &\ +\langle D_{xv}g(x_{k-1},v_{k-1}); \frac{1-e^{-2\gamma_{k}}}{2}v_{k-1}-\frac{u}{2}(\gamma_{k}-\frac{1-e^{-2\gamma_{k}}}{2})\nabla f(x_{k-1})+\sqrt{u}\sigma_{k}^{(1)}U_{k}^{(1)},\\
    &\ \ \ \ \ \ \ \ \ \ \ \ \ \ \ \ \ \ \ \ \ \ \ \ \ \ \ \ \ \ \  -2 \frac{1-e^{-2\gamma_{k}}}{2} v_{k-1}- u\frac{1-e^{-2\gamma_{k}}}{2}\nabla f(x_{k-1})+2\sqrt{u}\sigma_{k}^{(2)}U_{k}^{(2)}\rangle \\
    &\ +R_2(Y_{k-1},Y_k) \\
    &=\gamma_k \mathcal{L}g(Y_{k-1})-(\gamma_k-\frac{1-e^{-2\gamma_k}}{2})\nabla_x g(Y_{k-1})\cdot v_{k-1}-\frac{u}{2}(\gamma_k-\frac{1-e^{-2\gamma_k}}{2})\nabla_x g(Y_{k-1})\cdot\nabla f(x_{k-1})\\
    &\ +2(\gamma_k-\frac{1-e^{-2\gamma_k}}{2})\nabla_v g(Y_{k-1})\cdot v_{k-1}+u(\gamma_k-\frac{1-e^{-2\gamma_k}}{2})\nabla_v g(Y_{k-1})\cdot \nabla f(x_{k-1})\\
    &\ +\sqrt{u}\sigma_k^{(1)}\nabla g(Y_{k-1})\cdot U_k^{(1)}+2\sqrt{u}\sigma_k^{(2)}\nabla g(Y_{k-1})\cdot U_k^{(2)}\\
    &\ +\frac{1}{2}(\frac{1-e^{-2\gamma_k}}{2})^2D_x^2g(Y_{k-1})v_{k-1}^{\otimes 2}+\frac{u^2}{8}(\gamma_k-\frac{1-e^{-2\gamma_k}}{2})^2D_x^2g(Y_{k-1})\nabla f(x_{k-1})^{\otimes 2}\\
    &\ +\frac{u}{2}{\sigma_k^{(1)}}^2D_x^2g(Y_{k-1}){U_k^{(1)}}^{\otimes 2}-\frac{u}{2}\frac{1-e^{-2\gamma_k}}{2}(\gamma_k-\frac{1-e^{-2\gamma_k}}{2})\langle D_x^2g(Y_{k-1}); v_{k-1}, \nabla f(x_{k-1}) \rangle\\
    &\ +\sqrt{u}\frac{1-e^{-2\gamma_k}}{2}\sigma_k^{(1)}\langle D_x^2g(Y_{k-1}); v_{k-1},U_k^{(1)} \rangle\\
    &\ -\frac{u^{3/2}}{2}(\gamma_k-\frac{1-e^{-2\gamma_k}}{2})\sigma_k^{(1)}\langle D_x^2g(Y_{k-1});\nabla f(x_{k-1}), U_k^{(1)} \rangle\\
    &\ +2(\frac{1-e^{-2\gamma_k}}{2})^2D_v^2g(Y_{k-1})v_{k-1}^{\otimes 2}+\frac{u^2}{2}(\frac{1-e^{-2\gamma_k}}{2})^2D_v^2g(Y_{k-1})\nabla f(x_{k-1})^{\otimes 2}\\
    &\ +2u\left( {\sigma_k^{(2)}}^2D_v^2g(Y_{k-1}){U_k^{(2)}}^{\otimes 2}-\gamma_k\mb{E}[D_v^2g(Y_{k-1}){U_k^{(2)}}^{\otimes 2}|\mathcal{F}_{k-1}] \right)\\
    &\ +2u(\frac{1-e^{-2\gamma_k}}{2})^2\langle D_v^2g(Y_{k-1});v_{k-1},\nabla f(x_{k-1}) \rangle-4\sqrt{u} \frac{1-e^{-2\gamma_k}}{2}\sigma_k^{(2)}\langle D_v^2g(Y_{k-1}); v_{k-1},U_k^{(2)} \rangle\\
    &\ -2u^{3/2}\frac{1-e^{-2\gamma_k}}{2}\sigma_k^{(2)}\langle D_v^2g(Y_{k-1});\nabla f(x_{k-1}),U_k^{(2)} \rangle+R_2(Y_{k-1},Y_k)
\end{align*}
Take conditional expectation:
\begin{align*}
   &\qquad \mb{E}[g(Y_{k})-g(Y_{k-1})|\mathcal{F}_{k-1}]-\gamma_k \mathcal{L}g(Y_{k-1})\\&=-(\gamma_k-\frac{1-e^{-2\gamma_k}}{2})\nabla_x g(Y_{k-1})\cdot v_{k-1}\\
    &\ -\frac{u}{2}(\gamma_k-\frac{1-e^{-2\gamma_k}}{2})\nabla_x g(Y_{k-1})\cdot\nabla f(x_{k-1})\\
    &\ +2(\gamma_k-\frac{1-e^{-2\gamma_k}}{2})\nabla_v g(Y_{k-1})\cdot v_{k-1}\\
    &\ +u(\gamma_k-\frac{1-e^{-2\gamma_k}}{2})\nabla_v g(Y_{k-1})\cdot \nabla f(x_{k-1})\\
    &\ +\frac{1}{2}(\frac{1-e^{-2\gamma_k}}{2})^2D_x^2g(Y_{k-1})v_{k-1}^{\otimes 2}\\
    &\ +\frac{u^2}{8}(\gamma_k-\frac{1-e^{-2\gamma_k}}{2})^2D_x^2g(Y_{k-1})\nabla f(x_{k-1})^{\otimes 2}\\
    &\ +\frac{u}{2}{\sigma_k^{(1)}}^2\Delta_x g(Y_{k-1})\\
    &\ -\frac{u}{2}\frac{1-e^{-2\gamma_k}}{2}(\gamma_k-\frac{1-e^{-2\gamma_k}}{2})\langle D_x^2g(Y_{k-1}); v_{k-1}, \nabla f(x_{k-1}) \rangle\\
    &\ +2(\frac{1-e^{-2\gamma_k}}{2})^2D_v^2g(Y_{k-1})v_{k-1}^{\otimes 2}+\frac{u^2}{2}(\frac{1-e^{-2\gamma_k}}{2})^2D_v^2g(Y_{k-1})\nabla f(x_{k-1})^{\otimes 2}\\
    &\ +2u\left( {\sigma_k^{(2)}}^2-\gamma_k\right)\Delta_v g(Y_{k-1})\\
    &\ +2u(\frac{1-e^{-2\gamma_k}}{2})^2\langle D_v^2g(Y_{k-1});v_{k-1},\nabla f(x_{k-1}) \rangle\\
    &\ +\mb{E}[R_2(Y_{k-1},Y_k)|\mathcal{F}_{k-1}]
\end{align*}
Observe that for all the terms, except for $R_2(Y_{k-1},Y_{k})$, on the right hand side of the equation, their coefficients are of order $O(\gamma_k^2)$ or $o(\gamma_k^2)$. Furthermore, $\nabla g$ and $D^2 g$ are bounded because $g$ is $\mathcal{C}^2$ and compact supported. Since $(\mathcal{L}_{V,\infty})$ is satisfied under our assumptions, $\sup_{n\in\mb{N}} \mb{E} [ |v_{n}|^2+|\nabla f(x_{n})|^2 ]<C\sup_{n\in\mb{N}}\mb{E}[V(Y_n))]<+\infty$.
Therefore, we obtain that as $n\to 0$,
\begin{align*}
    \frac{1}{H_n}\sum_{k=1}^n \frac{\eta_k}{\gamma_k}\mb{E}[g(Y_{k})-g(Y_{k-1})|\mathcal{F}_{k-1}]-\eta_k\mathcal{L}g(Y_{k-1})-\frac{\eta_k}{\gamma_k}\mb{E}[R_2(Y_{k-1},Y_k)|\mathcal{F}_{k-1}]\to 0
\end{align*}
because $\frac{1}{H_n}\sum_{k=1}^n {\eta_k}{\gamma_k}\to 0$ as $n\to 0$. \\
Now we deal with $\mb{E}[R_2(Y_{k-1},Y_k)|\mathcal{F}_{k-1}]$. For any $x,y\in \mb{R}^{2d}$, define
\[
r_2(x,y):=\frac{1}{2}\sup_{t\in (0,1)}\lv D^2 g(x+t(y-x))-D^2 g(x) \rv
\]
It's easy to see that $r_2$ is a bounded continuous function on $\mb{R}^d\times\mb{R}^d$, $r_2(x,x)=0$ and
\[
|R_2(x,y)|\le r_2(x,y)|x-y|^2
\]
Therefore we obtain
\begin{align*}
    \frac{\eta_k}{\gamma_k}|\mb{E}[R_2(Y_{k-1},Y_k)|\mathcal{F}_{k-1}]|&\le C\left(\eta_k\gamma_k \lv r_2\rv_{\infty} V(Y_{k-1})+(2d+1)\eta_k \mb{E}[r_2(Y_{k-1},Y_k)(|U_k^{(1)}|^2+|U_k^{(2)}|^2)|\mathcal{F}_{k-1}]\right)
\end{align*}
If $a\ge 1$, $\mb{P}-a.s$
\begin{align*}
    \frac{1}{H_n}\sum_{k=1}^n C \eta_k\gamma_k \lv r_2\rv_{\infty}V(Y_{k-1})\le C'\frac{1}{H_n}\sum_{k=1}^n  \eta_k\gamma_k V(Y_{k-1})\to 0\ \ \ \ \text{as }\sup_{n\in\mb{N}}\nu_n^{\eta}(V)<+\infty\ \text{and }\gamma_n\to 0
\end{align*}
If $a\in [1/2,1)$, the same limit follows from the Kronecker lemma mentioned in\cite{lamberton2002recursive} and $$\sum_{n\ge 1}\eta_n\gamma_n/H_n <+\infty.$$\\
Meanwhile, we also have
\begin{align*}
    J(\gamma,x,v)&=\int_{\mb{R}^d\times\mb{R}^d} r_2((x,v), (x',v'))(|r_1|^2+|r_2|^2) \mu(dr_1,dr_2)\\
    \textbf{where }& \\
    (x',v')&=(x+\frac{1-e^{-2\gamma}}{2}v-\frac{u}{2}(\gamma-\frac{1-e^{-2\gamma}}{2})\nabla f(x)+\sqrt{u}\sigma^{(1)}r_1, e^{-2\gamma}v-u\frac{1-e^{-2\gamma}}{2}\nabla f(x)+2\sqrt{u}\sigma^{(2)}r_2)\\
    \sigma^{(1)}&=\left(\gamma+\frac{1-e^{-4\gamma}}{4}-(1-e^{-2\gamma})\right)^{1/2},\ \ \sigma^{(2)}=\left(\frac{1-e^{-4\gamma}}{4}\right)^{1/2}\\
    \text{and }&(U^{(1)},U^{(2)})\sim \mu=\mathcal{N}(0, \frac{1+e^{-4\gamma}-2e^{-2\gamma}}{4\sigma^{(1)}\sigma^{(2)}}I_{2d})
\end{align*}
We can see that $J$ is a bounded continuous function on $\mb{R}_+\times\mb{R}^d\times\mb{R}^d$ and $J(0,x,v)=0$. Since $\lim_{|y|\to \infty}V(y)=+\infty $. We can also write 
\begin{align*}
    (2d+1)\eta_k \mb{E}[r_2(Y_{k-1},Y_k)(|U_k^{(1)}|^2+|U_k^{(2)}|^2)|\mathcal{F}_{k-1}]&= \eta_k V^a((x_{k-1},v_{k-1}))\theta((x_{k-1},v_{k-1}))J(\gamma_k,x_{k-1},v_{k-1})\\
\end{align*}
where $\lim_{|(x_{k-1},v_{k-1})|\to \infty} \theta((x_{k-1},v_{k-1}))=0$
It remains to show that
\[
\mb{P}-a.s\ \ \ \lim_n \frac{1}{H_n}\sum_{k=1}^n \eta_k V^a((x_{k-1},v_{k-1}))\theta((x_{k-1},v_{k-1}))J(\gamma_k,x_{k-1},v_{k-1}) =0
\]
For a fixed number $A>0$, $J$ is uniformly continuous on $[0, \sup_n \gamma_n]\times \bar{B}_{2d}(0,A)$, then $$J(\gamma_k,x_{k-1},v_{k-1})1_{|(x_{k-1},v_{k-1})|\le A}\to 0\ \ \mb{P}-a.s.$$ 
And $V^a((x_{k-1},v_{k-1}))\theta((x_{k-1},v_{k-1}))$ is bounded on $\bar{B}_{2d}(0,A)$. Therefore
\[
\mb{P}-a,s\ \ \ \lim_{n} \frac{1}{H_n}\sum_{k=1}^n \eta_k V^a((x_{k-1},v_{k-1}))\theta((x_{k-1},v_{k-1}))J(\gamma_k,x_{k-1},v_{k-1})1_{|(x_{k-1},v_{k-1})|\le A}=0
\]
On the other hand side
\begin{align*}
    \limsup_n &\frac{1}{H_n}\sum_{k=1}^n \eta_k V^a((x_{k-1},v_{k-1}))\theta((x_{k-1},v_{k-1}))J(\gamma_k,x_{k-1},v_{k-1})1_{|(x_{k-1},v_{k-1})|> A}\\
   &\  \le \sup_{|(x,v)|>A} |\theta(x,v)|\lv J \rv_{\infty} \sup_n \nu_n^\eta(V^a) \to 0 \ \ \ \ \textbf{as }A\to +\infty
\end{align*}
So taking $A\to +\infty$ completes the proof.
\end{proof}

\begin{theorem}\label{theorem4} 
Let $p\in [1,+\infty)$. Assume $(\mathcal{L}_{V,p})$. Let $s\in (0,1]$. Assume that 
\[
\sum_{n\ge 1}\frac{1}{H_n}\left( \Delta \frac{\eta_n}{\gamma_n} \right)_+<+\infty.\ \ \lim_{n}\frac{1}{H_n}\sum_{k=1}^n |\Delta \frac{\eta_k}{\gamma_k}|=0\ \text{and }\sum_{n\ge 1} \left( \frac{\eta_n}{H_n\sqrt{\gamma_n}} \right)^{1+s}<+\infty
\]
(a) Then
\[
\mb{P}-a.s\ \ \ \sup_{n\in\mb{N}}\nu_n^{\eta}(\omega, V^{p/(1+s)})<+\infty.
\]
(b) When $p\le 1+s$, assume also $\sum_{n\ge 1}\eta_n\gamma_n/H_n<+\infty$. Then with probability $1$, any weak limit of the sequence $(\nu_m^{\eta})$ is an invariant distribution of the underdamped Langevin dynamics.
\end{theorem}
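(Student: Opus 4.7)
The proof naturally splits along the two conclusions, and both parts are essentially bookkeeping on top of results already proved in the paper.

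For part (a), the plan is to observe that the hypotheses are literally those of Theorem~\ref{intermediatethm}: we assume $(\mathcal{L}_{V,p})$, the one-sided summability $\sum_n H_n^{-1}(\Delta(\eta_n/\gamma_n))_+<+\infty$, and the $(1+s)$-th power summability of $\eta_n/(H_n\sqrt{\gamma_n})$. Theorem~\ref{intermediatethm} directly yields $\sup_n \nu_n^\eta(V^{p/(1+s)})<+\infty$ a.s., so part (a) is just a restatement.

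For part (b), the strategy is to feed part (a) into Theorem~\ref{theorem3} with the choice $a := p/(1+s)$. Because $p\ge 1$ and $s\le 1$ we have $a\ge 1/2$, and the new assumption $p\le 1+s$ forces $a\le 1$, placing us in the exact regime where Theorem~\ref{theorem3} applies. I need to verify Theorem~\ref{theorem3}'s three hypotheses one by one. The tightness hypothesis $\sup_n \nu_n^\eta(V^a)<+\infty$ a.s.\ is precisely the output of part (a). The condition $\lim_n H_n^{-1}\sum_{k=1}^n |\Delta(\eta_k/\gamma_k)|=0$ is assumed outright in Theorem~\ref{theorem4}. For the remaining condition $\sum_n (\eta_n/(H_n\sqrt{\gamma_n}))^2<+\infty$, I will argue that $(1+s)$-summability is enough: the summability assumption forces $\eta_n/(H_n\sqrt{\gamma_n})\to 0$, so eventually these terms lie in $[0,1]$, and on $[0,1]$ we have $x^2\le x^{1+s}$ whenever $s\le 1$, hence the series converges by tail comparison. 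Finally, when $a<1$ (i.e.\ $p<1+s$), Theorem~\ref{theorem3} demands $\sum_n \eta_n\gamma_n/H_n<+\infty$; this is exactly the extra assumption imposed in part (b). When $a=1$ (i.e.\ $p=1+s$), the extra assumption is not needed by Theorem~\ref{theorem3} but is benignly present in our hypotheses. Invoking Theorem~\ref{theorem3} then identifies every weak limit of $(\nu_n^\eta)$ as an invariant distribution of the underdamped Langevin diffusion.

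There is no genuinely hard step here: both Theorem~\ref{intermediatethm} and Theorem~\ref{theorem3} do the substantive work, and this theorem merely unifies them by packaging the tightness input $a=p/(1+s)$ so that the regularity threshold $a\in[1/2,1]$ is automatic. If anything, the only place to be careful is the small observation that $(1+s)$-power summability upgrades to $2$-power summability because the terms vanish, and making sure that the case $p=1+s$ (so $a=1$) is covered without needing the additional series condition. Once those two bookkeeping points are checked, the conclusion is immediate.
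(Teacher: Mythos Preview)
Your proposal is correct and matches the paper's own argument, which consists of the single sentence ``Theorem~\ref{theorem4} follows directly from Theorem~\ref{intermediatethm} and Theorem~\ref{theorem3}.'' You have supplied the routine verification of hypotheses that the paper leaves implicit, including the correct observation that $(1+s)$-summability of $\eta_n/(H_n\sqrt{\gamma_n})$ implies $2$-summability since $s\le 1$ forces the terms into $[0,1]$ eventually.
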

Theorem~\ref{theorem4} follows directly from theorem~\ref{intermediatethm} and theorem~\ref{theorem3}.
%\end{enumerate}

\iffalse
\begin{theorem}\label{1dCLTKLMC}
 Assume potential function $f$ is gradient Lipschitz and strongly convex. $\{(x_k,v_k)\}$ and $\{(U_k^{(1)},U_k^{(2)})\}$ are the same as what we have in the KLMC algorithm and the time step-size $\{\gamma_k\}$ is non-increasing and $\lim_n(\gamma_{n-1}-\gamma_n)/\gamma_n^4=0$. If $\lim_n (1/\sqrt{\Gamma_n^{(3)}})\sum_{k=1}^n\gamma_k^4=\hat{\gamma}\in (0,+\infty]$ and $\lim_n \Gamma_n^{(4)}=+\infty$, then for all $\phi\in \mathcal{C}^3$ with $D^2\phi $, $D^3\phi$ and $D^4\phi$ bounded and Lipschitz and $\sup_{(x,v)\in \mb{R}^{2d}} |\nabla \phi(x)|^2/V(x,v)<+\infty$, we have
\begin{align*}
    \frac{\gamma_n}{\Gamma_n^{(4)}}\nu_n^\gamma(\mathcal{L}\phi)&\implies \mathcal{N}(\mu, \frac{10}{3}u\hat{\gamma}^{-2}\int_{\mb{R}^d}|\nabla \phi (x)|\pi(dx))\ \ \ \ \ \ \ \text{if}\ \hat{\gamma}<+\infty\\
    \frac{\gamma_n}{\Gamma_n^{(4)}}\nu_n^\gamma(\mathcal{L}\phi)&\implies \mu\ \ \ \ \ \ \ \ \ \ \ \ \ \ \ \ \ \ \ \ \ \ \ \ \ \ \ \ \ \ \ \ \ \ \ \ \ \ \ \ \ \ \ \ \ \text{if}\ \hat{\gamma}=+\infty\\
\end{align*}
where
\begin{align*}
    \mu=\frac{1}{8}u^2\int_{\mb{R}^d}Tr(D^2f(x)D^2\phi(x))\pi(dx)-\frac{3}{4}u^2\int_{\mb{R}^d}\sum_{i=1}^d \partial_i^{(4)}\phi(x)\pi(dx)
\end{align*}
\fi

\begin{proof}[Proof of Theorem~\ref{1dCLTKLMC}] First we try to decompose $\sum_{k=1}^n \gamma_k \mathcal{L}\phi(x_{k-1})$ using Taylor expansion.
\begin{align*}
    \phi(x_k)&=\phi(x_{k-1})+\nabla \phi(x_{k-1})\cdot (x_k-x_{k-1})+\frac{1}{2}D^2\phi(x_{k-1})(x_k-x_{k-1})^{\otimes 2}+R_2^{(k)}
\end{align*}
where $R_2^{(k)}=\phi(x_k)-\phi(x_{k-1})-\nabla \phi(x_{k-1})\cdot (x_k-x_{k-1})-\frac{1}{2}D^2\phi(x_{k-1})(x_k-x_{k-1})^{\otimes 2}$. We can plug our discretization into the equation and obtain:
\begin{align*}
    \phi(x_k)-\phi(x_{k-1})&=\gamma_k\mathcal{L}\phi(x_{k-1})-(\gamma_k-\frac{1-e^{-2\gamma_k}}{2})v_{k-1}\cdot \nabla \phi(x_{k-1})\\
    &\ -\frac{u}{2}(\gamma_k-\frac{1-e^{-2\gamma_k}}{2})\nabla f(x_{k-1})\cdot \nabla \phi(x_{k-1})+\sqrt{u}\sigma_k^{(1)}\nabla \phi(x_{k-1})\cdot U_k^{(1)} \\
    &\ +\frac{1}{2}(\frac{1-e^{-2\gamma_k}}{2})^2D^2 \phi(x_{k-1})v_{k-1}^{\otimes 2}+\frac{u}{2}{\sigma_k^{(1)}}^2D^2\phi(x_{k-1}){U_k^{(1)}}^{\otimes 2}\\
    &\ +\frac{u^2}{8}(\gamma_k-\frac{1-e^{-2\gamma_k}}{2})^2D^2\phi(x_{k-1})\nabla f(x_{k-1})^{\otimes 2} \\
    &\ -\frac{u}{2}\frac{1-e^{-2\gamma_k}}{2}(\gamma_k-\frac{1-e^{-2\gamma_k}}{2})\langle D^2\phi(x_{k-1}); v_{k-1}, \nabla f(x_{k-1}) \rangle\\
    &\ +\sqrt{u}\sigma_k^{(1)}\frac{1-e^{-2\gamma_k}}{2}\langle D^2\phi(x_{k-1}); v_{k-1}, U_k^{(1)} \rangle \\
    &\ -\frac{u^{3/2}}{2}\sigma_k^{(1)}(\gamma_k-\frac{1-e^{-2\gamma_k}}{2})\langle D^2\phi(x_{k-1}); \nabla f(x_{k-1}), U_k^{(1)} \rangle\\
    &\ +R_2^{(k)}
\end{align*}
where 
\begin{align*}
    R_2^{(k)}    &=\frac{1}{6}(\frac{1-e^{-2\gamma_k}}{2})^3D^3\phi(x_{k-1})v_{k-1}^{\otimes 3}-\frac{u}{4}(\frac{1-e^{-2\gamma_k}}{2})^2(\gamma_k-\frac{1-e^{-2\gamma_k}}{2}) \langle D^3\phi(x_{k-1}); v_{k-1}^{\otimes 2}, \nabla f(x_{k-1}) \rangle\\
    &\ +\frac{\sqrt{u}}{2}\sigma_k^{(1)}(\frac{1-e^{-2\gamma_k}}{2})^2\langle D^3\phi(x_{k-1});v_{k-1}^{\otimes 2}, U_k^{(1)} \rangle+\frac{u}{2}{\sigma_k^{(1)}}^2\frac{1-e^{-2\gamma_k}}{2}\langle D^3\phi(x_{k-1}); v_{k-1}, {U_{k}^{(1)}}^{\otimes 2} \rangle\\
    &\ +\frac{1}{24} (\frac{1-e^{-2\gamma_k}}{2})^4 D^4\phi(x_{k-1})v_{k-1}^{\otimes 4} +r^{(k)}
\end{align*}
Since $f$ is gradient Lipschitz and strongly convex, we've shown $(\mathcal{L}_{V,\infty})$ holds. Using $(\mathcal{L}_{V,\infty})$ the fact that $D^4\phi$ is bounded and Lipschitz, we can show there exists a constant $C>0$ such that 
\begin{align*}
    |r_k|\le C \gamma_k^{9/2} V^2(x_{k-1},v_{k-1})
\end{align*}
Apply theorem~\ref{intermediatethm} for $p=4$ and $s=1$, we have $\sup_n \nu_n^\gamma (V^2)<+\infty\ \ \ \mb{P}-a.s$. Therefore 
\begin{align*}
    \frac{1}{\Gamma_n^{(4)}}\sum_{k=1}^n r^{(k)} \to 0\ \ \ \ \ \ \ \ \ \text{in}\ \mb{L}^1
\end{align*}
In the following proof, we will use $o(\gamma_k^4)$ to denote the sum of those terms $b_k$ such that $\frac{1}{\Gamma_n^{(4)}}\sum_{k=1}^n b_k \to 0 \ \ \ \mb{P}-a.s$. According to our decomposition, we can pull out polynomials of $\gamma_k$ from factors $\frac{1-e^{-2\gamma_k}}{2}$, $\gamma_k-\frac{1-e^{-2\gamma_k}}{2}$ and $\sigma_k^{(1)}$ so that the terms left could be included in $o(\gamma_k^{4})$. Then we obtain
\begin{align*}
    \sum_{k=1}^n \gamma_k\mathcal{L}\phi(x_{k-1})&=\sum_{k=1}^n \left\{ [\phi(x_k)-\phi(x_{k-1})]+(\gamma_k^2-\frac{2}{3}\gamma_k^3+\frac{1}{3}\gamma_k^4) v_{k-1}\cdot\nabla \phi(x_{k-1}) \right.\\
    &\ +\frac{u}{2}(\gamma_k^2-\frac{2}{3}\gamma_k^3+\frac{1}{3}\gamma_k^4)\nabla f(x_{k-1})\cdot \nabla \phi(x_{k-1})\\
    &\ -\frac{2\sqrt{3u}}{3}\gamma_k^{\frac{3}{2}}\nabla \phi(x_{k-1})\cdot U_k^{(1)}\\
    &\ -\frac{1}{2}(\gamma_k^2-2\gamma_k^3+\frac{7}{3}\gamma_k^4)D^2\phi(x_{k-1})v_{k-1}^{\otimes 2} \\
    &\ -\frac{u^2}{8}\gamma_k^4 D^2\phi(x_{k-1})\nabla f(x_{k-1})^{\otimes 2} \\
    &\ -\frac{u}{2}(\frac{4}{3}\gamma_k^3-2\gamma_k^4)D^2\phi(x_{k-1}){U_k^{(1)}}^{\otimes 2}\\
    &\ +\frac{u}{2}(\gamma_k^3-\frac{5}{3}\gamma_k^4)\langle D^2\phi(x_{k-1}); v_{k-1},\nabla f(x_{k-1}) \rangle \\
    &\ -\frac{1}{6}(\gamma_k^3-3\gamma_k^4)D^3\phi(x_{k-1})v_{k-1}^{\otimes 3} \\
    &\ +\frac{u}{4}\gamma_k^4\langle D^3\phi(x_{k-1}); v_{k-1}^{\otimes 2} ,\nabla f(x_{k-1})\rangle\\
    &\ -\frac{2u}{3}\gamma_k^4\langle D^3\phi(x_{k-1}); v_{k-1}, {U_k^{(1)}}^{\otimes 2} \rangle\\
    &\ \left. -\frac{1}{24}\gamma_k^4D^4\phi(x_{k-1})v_{k-1}^{\otimes 4}+o(\gamma_k^4)\right\}\\
    &:= Z_n^{(0)}+Z_n^{(2)}+Z_n^{(3)}+Z_n^{(4)}+N_n+r_n
\end{align*}
where 
\begin{align*}
    Z_n^{(0)}&= \phi(x_n)-\phi(x_0)\\
    Z_n^{(2)}&=\sum_{k=1}^n \gamma_k^2\left[v_{k-1}\cdot \nabla \phi(x_{k-1})+\frac{u}{2}\nabla f(x_{k-1})\cdot\nabla \phi(x_{k-1})-\frac{1}{2}D^2\phi(x_{k-1})v_{k-1}^{\otimes 2} \right]\\
    &:=\sum_{k=1}^n\gamma_k^2z_{k-1}^{(2)} \\
    Z_n^{(3)}&=\sum_{k=1}^n \gamma_k^3\left[ -\frac{2}{3}v_{k-1}\cdot\nabla\phi(x_{k-1})-\frac{u}{3}\nabla f(x_{k-1})\cdot\nabla\phi(x_{k-1})+D^2\phi(x_{k-1})v_{k-1}^ 2 \right. \\
    &\ \left. -\frac{2u}{3}D^2\phi(x_{k-1}){U_k^{(1)}}^{\otimes 2}+\frac{u}{2}\langle D^2\phi(x_{k-1});v_{k-1},\nabla f(x_{k-1})\rangle -\frac{1}{6}D^3\phi(x_{k-1})v_{k-1}^{\otimes 3}  \right]\\
     &:=\textcolor{red}{-}\sum_{k=1}^n\gamma_k^3z_{k-1}^{(3)}\\
    Z_n^{(4)}&=\sum_{k=1}^n \gamma_k^4\left[ \frac{1}{3}v_{k-1}\cdot\nabla \phi(x_{k-1})+\frac{u}{6}\nabla f(x_{k-1})\cdot\nabla \phi(x_{k-1})-\frac{7}{6}D^2\phi(x_{k-1})v_{k-1}^{\otimes 2} \right.\\
    &\ \left. -\frac{u^2}{8}D^2\phi(x_{k-1})\nabla f(x_{k-1})^{\otimes 2}+uD^2\phi(x_{k-1}){U_k^{(1)}}^{\otimes 2}-\frac{5u}{6}\langle D^2\phi(x_{k-1}); v_{k-1},\nabla f(x_{k-1}) \rangle \right.\\
    &\ \left. +\frac{1}{2}D^3\phi(x_{k-1})v_{k-1}^{\otimes 3}++\frac{u}{4}\langle D^3\phi(x_{k-1}); v_{k-1}^{\otimes 2} ,\nabla f(x_{k-1})\rangle-\frac{2u}{3}\langle D^3\phi(x_{k-1}); v_{k-1}, {U_k^{(1)}}^{\otimes 2} \right.\\
    &\ \left. -\frac{1}{24}D^4\phi(x_{k-1})v_{k-1}^{\otimes 4} \right] :=\sum_{k=1}^n \gamma_k^4 z_{k-1}^{(4)}\\
    N_n&=\sum_{k=1}^n \frac{2\sqrt{3u}}{3}\gamma_k^{\frac{3}{2}}\nabla \phi(x_{k-1})\cdot U_k^{(1)}\\
    r_n&=\sum_{k=1}^n o(\gamma_k^4)
 \end{align*}
First, it's easy to see that $r_n/\Gamma_n^{(4)} \to 0\ \ \mb{P}-a.s$ as $n\to +\infty$. Apply lemma~\ref{tightness} and we obtain $\sup_n \mb{E}[V(x_n,v_n)]<+\infty$. Therefore we can further obtain the tightness of sequence $\{x_n\}$ and it follows from the continuity of $\phi$ that $\{\phi(x_n)\}$ is also tight. According to the tightness, $Z_n^{(0)}/\Gamma_n^{(4)}\to 0 \ \ \mb{P}-a.s$. For $Z_n^{(4)}$, under our assumptions on $\phi$ and $f$, we can show that 
\begin{align*}
    \lim_{|(x_n,v_n)|\to +\infty} z_n^{(4)}/V^4(x_n,v_n)=0
\end{align*}
Therefor apply theorem~\ref{theorem4} with $p=8,\ s=1$ and we obtain:
\begin{align*}
\mb{P}-a.s\ \ \ \ \ Z_n^{(4)}/\Gamma_n^{(4)}\to &\frac{u}{4}\int_{\mb{R}^{2d}}\langle D^3\phi(x);\nabla f(x),v^{\otimes 2} \rangle \nu(dx,dv)-\frac{u^2}{8}\int_{\mb{R}^d}D^2\phi(x)\nabla f(x)^{\otimes 2}\pi(dx)\\
&\ -\frac{1}{24}\int_{\mb{R}^{2d}}D^4\phi(x)v^{\otimes 4}\nu(dx,dv)  
\end{align*}
To consider the limit of $Z_n^{(i)}/\Gamma_n^{(4)}$ for $i=2,3$, We first Taylor expand $\mathcal{L}\phi(x_{k-1})$ at $x_{k-2}$:
\begin{align*}
    \mathcal{L}\phi(x_{k-1})&=v_{k-2}\cdot\nabla \phi(x_{k-2})+\langle D^2\phi(x_{k-2}); v_{k-2}, x_{k-1}-x_{k-2}\rangle +\nabla \phi(x_{k-2})\cdot(v_{k-1}-v_{k-2})\\
    &\ +\frac{1}{2}\langle D^3\phi(x_{k-2}); v_{k-2}, (x_{k-1}-x_{k-2})^{\otimes 2} \rangle+\langle D^2\phi(x_{k-2}); v_{k-1}-v_{k-2}, x_{k-1}-x_{k-2} \rangle\\
    &\ +\frac{1}{6}\langle D^4\phi(x_{k-2}); v_{k-2}, (x_{k-1}-x_{k-2})^{\otimes 3} \rangle\\
    &\ +\frac{1}{2}\langle D^3\phi(x_{k-2});v_{k-1}-v_{k-2}, (x_{k-1}-x_{k-2})^{\otimes 2} \rangle\\
    &\ +o(\gamma_k^3)\\
\end{align*}
Plug the discretization into the Taylor expansions and preserve the "large" terms, then we obtain:
\begin{align*}
    \mathcal{L}\phi(x_{k-1})&=\mathcal{L}\phi(x_{k-2})+(\gamma_{k-1}-\gamma_{k-1}^2+\frac{2}{3}\gamma_{k-1}^3)D^2\phi(x_{k-2})v_{k-2}^{\otimes 2}\\
    &\ -\frac{u}{2}(\gamma_{k-1}^2-\frac{2}{3}\gamma_{k-1}^3)\langle D^2\phi(x_{k-2});v_{k-2},\nabla f(x_{k-2}) \rangle\\
    &\ +\frac{2\sqrt{3u}}{3}\gamma_{k-1}^{\frac{3}{2}}\langle D^2\phi(x_{k-2}); v_{k-2},U_{k-1}^{(1)} \rangle\\
    &\ -(2\gamma_{k-1}-2\gamma_{k-1}^2+\frac{4}{3}\gamma_{k-1}^3) v_{k-2}\cdot \nabla \phi(x_{k-2}) \\
    &\ -u(\gamma_{k-1}-\gamma_{k-1}^2+\frac{2}{3}\gamma_{k-1}^3)\nabla f(x_{k-2})\cdot\nabla \phi(x_{k-2})\\
    &\ +2\sqrt{u}\gamma_{k-1}^{\frac{1}{2}}\nabla \phi(x_{k-2})\cdot U_{k-1}^{(2)} \\
    &\ +\frac{1}{2}(\gamma_{k-1}^2-2\gamma_{k-1}^3)D^3\phi(x_{k-2})v_{k-2}^{\otimes 3}\\
    &\ +\frac{2u}{3}\gamma_{k-1}^3\langle D^3\phi(x_{k-2}); v_{k-2}, {U_{k-1}^{(1)}}^{\otimes 2} \rangle\\
    &\ +\sqrt{u}\gamma_{k-1}^{\frac{5}{2}}\langle D^3\phi(x_{k-2}); v_{k-2}^{\otimes 2}, U_{k-1}^{(1)} \rangle\\
    &\ -\frac{u}{2}\gamma_{k-1}^{3}\langle D^3\phi(x_{k-2}); v_{k-2}^{\otimes 2}, \nabla f(x_{k-2}) \rangle \\
    &\ -2(\gamma_{k-2}^2-2\gamma_{k-1}^3)D^2\phi(x_{k-2})v_{k-2}^{\otimes 2}\\
    &\ -u(\gamma_{k-1}^2-2\gamma_{k-1}^3)\langle D^2\phi(x_{k-2});v_{k-2},\nabla f(x_{k-2}) \rangle\\
    &\ +2\sqrt{u}\gamma_{k-1}^{\frac{3}{2}}\langle D^2\phi(x_{k-2}); v_{k-2},U_{k-1}^{(2)} \rangle \\
    &\ +u\gamma_{k-1}^3\langle D^2\phi(x_{k-2});v_{k-2},\nabla f(x_{k-2}) \rangle\\
    &\ +\frac{u^2}{2}\gamma_{k-1}^3 D^2\phi(x_{k-2})\nabla f(x_{k-2})^{\otimes 2}\\
    \end{align*}
    \begin{align*}
    &\ -u^{\frac{3}{2}}\gamma_{k-1}^{\frac{5}{2}}\langle D^2\phi(x_{k-2});\nabla f(x_{k-2}),U_{k-1}^{(2)} \rangle\\
    &\ -2\sqrt{u}\gamma_{k-1}^{\frac{5}{2}}\langle D^2\phi(x_{k-2});v_{k-2},U_{k-1}^{(1)} \rangle \\
    &\ -u^{\frac{3}{2}}\gamma_{k-1}^{\frac{5}{2}}\langle D^2\phi(x_{k-2});\nabla f(x_{k-2}),U_{k-1}^{(1)} \rangle\\
    &\ +\langle D^2\phi(x_{k-2});\sqrt{u}\sigma_{k-1}^{(1)}U_{k-1}^{(1)}, 2\sqrt{u}\sigma_{k-1}^{(2)}U_{k-1}^{(2)}\rangle+o(\gamma_{k-1}^3)
\end{align*}
Apply theorem~\ref{theorem4} with $p=4,s=1$ to the terms of order $o(V^2(x_{k-2},v_{k-2}))$ in the decomposition. We obtain
\begin{align*}
    \lim_n \frac{\sum_{k=2}^n \gamma_k\mathcal{L}\phi(x_{k-1})}{\Gamma_n^{(4)}}&=\lim_n \frac{1}{\Gamma_n^{(4)}}\left[ \sum_{k=2}^n \gamma_k\mathcal{L}\phi(x_{k-2})+\sum_{k=2}^n \gamma_k(\gamma_{k-1}-3\gamma_{k-1}^2)D^2\phi(x_{k-2})v_{k-2}^{\otimes 2} \right.\\
    &\ \ \ \ \ \ \ \ \ \ \ \ \ \ \ \ \ \ -\sum_{k=2}^n \frac{3u}{2}\gamma_k \gamma_{k-1}^2 \langle D^2\phi(x_{k-2});\nabla f(x_{k-2}), v_{k-2} \rangle\\
    &\ \ \ \ \ \ \ \ \ \ \ \ \ \ \ \ \ \ -\sum_{k=2}^n \gamma_k(2\gamma_{k-1}-2\gamma_{k-1}^2) \nabla \phi(x_{k-2})\cdot v_{k-2}\\
    &\ \ \ \ \ \ \ \ \ \ \ \ \ \ \ \ \ \ -\sum_{k=2}^n u\gamma_k(\gamma_{k-1}-\gamma_{k-1}^2)\nabla f(x_{k-2})\cdot \nabla \phi(x_{k-2}) \\
    &\ \ \ \ \ \ \ \ \ \ \ \ \ \ \ \ \ \ +\sum_{k=2}^n \frac{1}{2}\gamma_k\gamma_{k-1}^2 D^3\phi(x_{k-2})v_{k-2}^{\otimes 3}\\
    &\ \ \ \ \ \ \ \ \ \ \ \ \ \ \ \ \ \ +\sum_{k=2}^n 2\sqrt{u}\gamma_k\gamma_{k-1}^{\frac{1}{2}}\nabla \phi(x_{k-2})\cdot U_{k-1}^{(2)} \\
    &\ \ \ \ \ \ \ \ \ \ \ \ \ \ \ \ \ \left. +\sum_{k=2}^n \gamma_{k}\langle D^2\phi(x_{k-2}); \sqrt{u}\sigma_{k-1}^{(1)}U_{k-1}^{(1)}, 2\sqrt{u}\sigma_{k-1}^{(2)}U_{k-1}^{(2)} \rangle \right]\\
    &\ +4u\int_{\mb{R}^d} \Delta \phi(x) \pi(dx)-\frac{u}{2}\int_{\mb{R}^{2d}} \langle D^3\phi(x); v^{\otimes 2}, \nabla f(x) \rangle \nu(dx,dv)\\
    &\ +\frac{u^2}{2}\int_{\mb{R}^d} D^2\phi(x) \nabla f(x)^{\otimes 2}\pi(dx)
\end{align*}
Since $\gamma_{k-1}-\gamma_k=o(\gamma_k^4)$, we can substitute all the $\gamma_k$ on the right hand side with $\gamma_{k-1}$ and it won't change the limits. For the last term inside the square bracket, notice that $Var(\sqrt{u}\sigma_{k-1}^{(1)}U_{k-1}^{(1)},2\sqrt{u}\sigma_{k-1}^{(2)}U_{k-1}^{(2)} )=\frac{u}{2}(1+e^{-4\gamma_{k-1}}-2e^{-2\gamma_{k-1}})I_d\sim u(2\gamma_{k-1}^2-4\gamma_{k-1}^3)I_d$. Therefore
\begin{align*}
    \lim_n \frac{1}{\Gamma_n^{(4)}}\sum_{k=2}^n \gamma_{k}\langle D^2\phi(x_{k-2}); \sqrt{u}\sigma_{k-1}^{(1)}U_{k-1}^{(1)}, 2\sqrt{u}\sigma_{k-1}^{(2)}U_{k-1}^{(2)} \rangle&=\lim_n \frac{1}{\Gamma_n^{(4)}}\sum_{k=2}^n 2u\gamma_{k-1}^3\Delta \phi(x_{k-2}) \\
    &\ -4u\int_{\mb{R}^d} \Delta \phi(x) \pi(dx)
\end{align*}
We can rewrite the equation as 
\begin{align*}
    \lim_n \frac{\sum_{k=2}^n \gamma_k\mathcal{L}\phi(x_{k-1})}{\Gamma_n^{(4)}}&=\lim_n \frac{1}{\Gamma_n^{(4)}}\sum_{k=2}^n \gamma_k\mathcal{L}\phi(x_{k-2})+\lim_n \frac{1}{\Gamma_n^{(4)}}\sum_{k=2}^n 2\sqrt{u}\gamma_{k-1}^{\frac{3}{2}}\nabla \phi(x_{k-2})\cdot U_{k-1}^{(2)}\\
    &\ +\lim_n \frac{1}{\Gamma_n^{(4)}}\sum_{k=2}^n \gamma_{k-1}^2[D^2\phi(x_{k-2})v_{k-2}^{\otimes 2}-2\nabla \phi(x_{k-2})\cdot v_{k-2}-u\nabla f(x_{k-2})\cdot\nabla \phi(x_{k-2})]\\
    &\ +\lim_n \frac{1}{\Gamma_n^{(4)}}\sum_{k=2}^n \gamma_{k-1}^3[-3D^\phi(x_{k-2})v_{k-2}^{\otimes 2}-\frac{3u}{2}\langle D^2\phi(x_{k-2}); \nabla f(x_{k-2}), v_{k-2} \rangle\\
    &\ +2\nabla \phi(x_{k-2})\cdot v_{k-2}+u\nabla f(x_{k-2})\cdot\nabla \phi(x_{k-2})+\frac{1}{2}D^3\phi(x_{k-2})v_{k-2}^{\otimes 3}+2u\Delta\phi(x_{k-2})]\\
    &\ -\frac{u}{2}\int_{\mb{R}^{2d}} \langle D^3\phi(x); v^{\otimes 2}, \nabla f(x) \rangle \nu(dx,dv)+\frac{u^2}{2}\int_{\mb{R}^d} D^2\phi(x) \nabla f(x)^{\otimes 2}\pi(dx)\\
    &=\lim_n \frac{1}{\Gamma_n^{(4)}}\sum_{k=2}^n \gamma_k\mathcal{L}\phi(x_{k-2})+\lim_n \frac{1}{\Gamma_n^{(4)}}\sum_{k=2}^n 2\sqrt{u}\gamma_{k-1}^{\frac{3}{2}}\nabla \phi(x_{k-2})\cdot U_{k-1}^{(2)}\\
    &\ +\lim_n \frac{1}{\Gamma_n^{(4)}}(-2Z_n^{(2)}-3Z_n^{(3)})\\
    &\ -\frac{u}{2}\int_{\mb{R}^{2d}} \langle D^3\phi(x); v^{\otimes 2}, \nabla f(x) \rangle \nu(dx,dv)+\frac{u^2}{2}\int_{\mb{R}^d} D^2\phi(x) \nabla f(x)^{\otimes 2}\pi(dx)\\
\end{align*}
We can instantly get that
\begin{align*}
    \lim_n \frac{1}{\Gamma_n^{(4)}}(2Z_2^{(n)}+3Z_n^{(3)})&=\lim_n \frac{1}{\Gamma_n^{(4)}}\sum_{k=2}^n 2\sqrt{u}\gamma_{k-1}^{\frac{3}{2}}\nabla \phi(x_{k-2})\cdot U_{k-1}^{(2)} \\
    &\ +\frac{u^2}{2}\int_{\mb{R}^d} D^2\phi(x)\nabla f(x)^{\otimes 2}\pi(dx)\\
    &\ -\frac{u}{2}\int_{\mb{R}^{2d}} \langle D^3\phi(x); \nabla f(x), v^{\otimes 2} \rangle\nu(dx,dv)
\end{align*}
Similarly, apply Taylor expansion to $z_{k-1}^{(2)}$ at $x_{k-2}$, we achieve:
\begin{align*}
     \nabla f(x_{k-1})\cdot\nabla \phi(x_{k-1})&=\nabla f(x_{k-2})\cdot \nabla \phi(x_{k-2})+\langle D^2f(x_{k-2});\nabla \phi(x_{k-2}), x_{k-1}-x_{k-2}\rangle \\
     &\ +\langle D^2\phi(x_{k-2}); \nabla f(x_{k-2}), x_{k-1}-x_{k-2} \rangle\\
     &\ +\frac{1}{2}D^3(\nabla f \cdot\nabla \phi)(x_{k-2})(x_{k-1}-x_{k-2})^{\otimes 2}+o(\gamma_k^2)
\end{align*}
\begin{align*}
    \frac{1}{2}D^2\phi(x_{k-1})v_{k-1}^{\otimes 2}&=\frac{1}{2}D^2\phi(x_{k-2})v_{k-2}^{\otimes 2}+\frac{1}{2}\langle D^3\phi(x_{k-1}); v_{k-2}^{\otimes 2}, x_{k-1}-x_{k-2}\rangle \\
    &\ +\langle D^2\phi(x_{k-2}); v_{k-2}, v_{k-1}-v_{k-2}\rangle +\frac{1}{2}D^2\phi(x_{k-2})(v_{k-1}-v_{k-2})^{\otimes 2} \\
    &\ +\frac{1}{4}(\frac{1-e^{-2\gamma_{k-1}}}{2})^2D^4\phi(x_{k-2})v_{k-2}^{\otimes 4}\\
    &\ +\frac{1}{2}\langle D^3\phi(x_{k-2}); v_{k-2}, x_{k-1}-x_{k-2}, v_{k-1}-v_{k-2} \rangle\\
    &\ +\frac{1}{6}\langle D^3\phi(x_{k-2}); x_{k-1}-x_{k-2}, (v_{k-1}-v_{k-2})^{\otimes 2}  \rangle+o(\gamma_k^2)
\end{align*}
Simplifying the coefficients lead us to
\begin{align*}
    \nabla f(x_{k-1})\cdot\nabla \phi(x_{k-1})&=\nabla f(x_{k-2})\cdot\nabla \phi(x_{k-2})+(\gamma_{k-1}-\gamma_{k-1}^2)\langle D^2f(x_{k-2}); \nabla \phi(x_{k-2}), v_{k-2} \rangle\\
   &\ -\frac{u}{2}\gamma_{k-1}^2\langle D^2f(x_{k-2}); \nabla \phi(x_{k-2}), \nabla f(x_{k-2}) \rangle\\
    &\ +\frac{2\sqrt{3u}}{3}\gamma_{k-1}^{\frac{3}{2}}\langle D^2f(x_{k-2}); \nabla \phi(x_{k-2}), U_{k-1}^{(1)} \rangle\\
    &\ +(\gamma_{k-1}-\gamma_{k-1}^2)\langle D^2\phi(x_{k-2}); \nabla f(x_{k-2}), v_{k-2} \rangle\\
    &\ -\frac{u}{2}\gamma_{k-1}^2 D^2\phi(x_{k-2})\nabla f(x_{k-2})^{\otimes 2}+\sqrt{u}\gamma_{k-1}^{\frac{3}{2}}\langle D^2\phi(x_{k-2}); \nabla f(x_{k-2}), U_{k-1}^{(1)} \rangle\\ 
    &\ +\frac{1}{2}\gamma_{k-1}^2 (D^3f \nabla \phi+2D^2\phi D^2f+D^3\phi \nabla f)(x_{k-2})v_{k-2}^{\otimes 2}+o(\gamma_{k-1}^2)
\end{align*}
\begin{align*}
    \frac{1}{2}D^2\phi(x_{k-1})v_{k-1}^{\otimes 2}&=\frac{1}{2}D^2\phi(x_{k-2})v_{k-2}^{\otimes 2}+\frac{1}{2}(\gamma_{k-1}-\gamma_{k-1}^2)D^3\phi(x_{k-2})v_{k-2}^{\otimes 3}\\
    &\ -\frac{u}{4}\gamma_{k-1}^2\langle D^3\phi(x_{k-2}); v_{k-2}^{\otimes 2}, \nabla f(x_{k-2}) \rangle+\frac{\sqrt{u}}{2}\gamma_{k-1}^{\frac{3}{2}}\langle D^3\phi(x_{k-2}); v_{k-2}^{\otimes 2}, U_{k-1}^{(1)} \rangle \\
    &\ -2(\gamma_{k-1}-\gamma_{k-1}^2)D^2\phi(x_{k-2})v_{k-2}^{\otimes 2}-u(\gamma_{k-1}-\gamma_{k-1}^2)\langle D^2\phi(x_{k-2});\nabla f(x_{k-2}), v_{k-2} \rangle\\
    &\ +2\sqrt{u}\gamma_{k-1}^{\frac{1}{2}}\langle D^2\phi(x_{k-2}); v_{k-2}, U_{k-1}^{(2)} \rangle+2\gamma_{k-1}^2D^2\phi(x_{k-2})v_{k-2}^{\otimes 2}\\
    &\ +\frac{u^2}{2}\gamma_{k-1}^2 D^2\phi(x_{k-2})\nabla f(x_{k-2})^{\otimes 2}+2u(\gamma_{k-1}-2\gamma_{k-1}^2)D^2\phi(x_{k-2}){U_{k-1}^{(2)}}^{\otimes 2}\\
    &\ +2u\gamma_{k-1}^2\langle D^2\phi(x_{k-2});\nabla f(x_{k-2}), v_{k-2} \rangle-4\sqrt{u}\gamma_{k-1}^{\frac{3}{2}}\langle D^2\phi(x_{k-2}); v_{k-2}, U_{k-1}^{(2)} \rangle\\
   &\ -2u^{\frac{3}{2}}\gamma_{k-1}^{\frac{3}{2}}\langle D^2\phi(x_{k-2}); \nabla f(x_{k-2}), U_{k-1}^{(2)} \rangle+\frac{1}{4}\gamma_{k-1}^2D^4\phi(x_{k-2})v_{k-2}^{\otimes 4}\\
    &\ -\gamma_{k-1}^2D^3\phi(x_{k-2})v_{k-2}^{\otimes 3}-\frac{u}{2}\gamma_{k-1}^2\langle D^3\phi(x_{k-2}); v_{k-2}^{\otimes 2},\nabla f(x_{k-2}) \rangle\\
    &\ +\frac{1}{2}\langle D^3\phi(x_{k-2}); v_{k-2}, \sqrt{u}\sigma_{k-1}^{(1)}U_{k-1}^{(1)}, 2\sqrt{u}\sigma_{k-1}^{(2)}U_{k-1}^{(12} \rangle\\
    &\ +\frac{2u}{3}\gamma_{k-1}^2\langle D^3\phi(x_{k-1}); v_{k-2}; {U_{k-1}^{(2)}}^{\otimes 2} \rangle+o(\gamma_k^2) \\
\end{align*}
Take the limits and we obtain:
\begin{align*}
    \lim_n \frac{\sum_{k=2}^n\gamma_k^2 \nabla f(x_{k-1})\cdot\nabla \phi(x_{k-1})}{\Gamma_n^{(4)}}&=\lim_n \frac{1}{\Gamma_n^{(4)}}\left[\ \sum_{k=2}^n \gamma_{k-1}^2 \nabla f(x_{k-2})\cdot \nabla \phi(x_{k-2}) \right. \\
    &\ \ \ \ \ \ \ \ \ \ \ \ \ +\sum_{k=2}^n \gamma_{k-1}^3 \langle D^2f(x_{k-2}); \nabla \phi(x_{k-2}), v_{k-2} \rangle\\
    &\ \ \ \ \ \ \ \ \ \ \ \ \left. +\sum_{k=2}^n \gamma_{k-1}^3 \langle D^2\phi(x_{k-2}); \nabla f(x_{k-2}), v_{k-2} \rangle \right]\\
    &\ \ \ \ \ \ \ \ \ \ \ \ -\frac{u}{2}\int_{\mb{R}^d} \langle D^2 f(x); \nabla \phi(x), \nabla f(x)\rangle \pi(dx) \\
    &\ \ \ \ \ \ \ \ \ \ \ \ -\frac{u}{2}\int_{\mb{R}^d} D^2 \phi(x)\nabla f(x)^{\otimes 2} \pi(dx) \\
    +\frac{1}{2}& \int_{\mb{R}^d}\left( D^3f(x)\nabla \phi(x)+2D^2f(x)D^2\phi(x)+D^3\phi(x)\nabla f(x)\right)v^{\otimes 2} \nu(dx,dv)
\end{align*}
\begin{align*}
    \lim_n \frac{1}{2}\frac{\sum_{k=2}^n \gamma_k^2D^2\phi(x_{k-1})v_{k-1}^{\otimes 2}}{\Gamma_n^{(4)}}&= \lim_n \left\{ \frac{1}{\Gamma_n^{(4)}}\sum_{k=2}^n\frac{1}{2} \gamma_{k-1}^2D^2\phi(x_{k-2})v_{k-2}^{\otimes 2} \right. \\
    &\ +\frac{1}{\Gamma_n^{(4)}}\sum_{k=2}^n \gamma_{k-1}^3 \left[ \frac{1}{2}D^3\phi(x_{k-2})v_{k-2}^{\otimes 3} -2D^2\phi(x_{k-2})v_{k-2}^{\otimes 2}\right.\\
    &\ \left. \left.-u\langle D^2\phi(x_{k-2}); \nabla f(x_{k-2}), v_{k-2} \rangle+2uD^2\phi(x_{k-2}){U_{k-1}^{(2)}}^{\otimes 2} \right] \right\}\\
    &\ -\frac{3u}{4}\int_{\mb{R}^{2d}}\langle D^3\phi(x); \nabla f(x), v^{\otimes 2} \rangle \nu(dx,dv)+\frac{1}{4}\int_{\mb{R}^{2d}}D^4\phi(x)v^{\otimes 4}\pi(dx)\\
    &\ +\frac{u^2}{2}\int_{\mb{R}^d}D^2\phi(x)\nabla f(x)^{\otimes 2}\pi(dx)\\
\end{align*}
\textbf{Claim:}
\begin{enumerate}[leftmargin=18pt,noitemsep]
    \item [a)] $\lim_{n}\frac{1}{\Gamma_n^{(4)}} \sum_{k=1}^n \gamma_{k}^2 \nabla \phi(x_{k-1})\cdot v_{k-1}=0$.
    \item [b)]  $\lim_n \frac{1}{\Gamma_n^{(4)}}\sum_{k=1}^n \gamma_k^3(\frac{u}{2}\nabla \phi(x_{k-1})\cdot\nabla f(x_{k-1})-\frac{1}{2}D^2\phi(x_{k-1})v_{k-1}^{\otimes 2})=0$.
    \item [c)] $\lim_{n}\frac{1}{\Gamma_n^{(4)}} \sum_{k=1}^n \gamma_{k}^3 \nabla \phi(x_{k-1})\cdot v_{k-1}=0$.
\end{enumerate}
We'll prove the \textbf{Claim} at the end of our proof. We can use the \textbf{Claim} and our expansion of $Z_n^{(2)}$ to find the following relation:
\begin{align*}
    \lim_n \frac{1}{\Gamma_n^{(4)}}Z_n^{(2)}&=\lim_n \frac{1}{\Gamma_n^{(4)}}\sum_{k=2}^n \gamma_k^2\nabla \phi(x_{k-1})\cdot v_{k-1} \\
    &\ +\lim_n \frac{1}{\Gamma_n^{(4)}}\sum_{k=2}^n \gamma_k^2\left[\frac{u}{2}\nabla f(x_{k-1})\cdot\nabla \phi(x_{k-1})-\frac{1}{2}D^2\phi(x_{k-1})v_{k-1}^{\otimes 2} \right]\\
    &=\lim_n \frac{1}{\Gamma_n^{(4)}}\sum_{k=2}^n \gamma_{k-1}^2\left[\frac{u}{2}\nabla f(x_{k-2})\cdot\nabla \phi(x_{k-2})-\frac{1}{2}D^2\phi(x_{k-2})v_{k-2}^{\otimes 2} \right]\\
    &\ +\lim_n \frac{1}{\Gamma_n^{(4)}}\sum_{k=2}^n \gamma_{k-1}^3\left[\frac{u}{2} \langle D^2f(x_{k-2}); \nabla \phi(x_{k-2}), v_{k-2} \rangle+\frac{3u}{2}\langle D^2\phi(x_{k-2}); \nabla f(x_{k-2}), v_{k-2} \rangle \right.\\
    &\ \ \ \ \ \ \ \ \ \ \ \ \ \ \ \ \ \ \ \ \ \ \ \ \ \left. -\frac{1}{2}D^3\phi(x_{k-2})v_{k-2}^{\otimes 3} +2D^2\phi(x_{k-2})v_{k-2}^{\otimes 2}-2u\Delta\phi(x_{k-2}) \right]\\
    &\ -\frac{u^2}{4}\int_{\mb{R}^d} \langle D^2 f(x); \nabla \phi(x), \nabla f(x)\rangle \pi(dx) -\frac{u^2}{4}\int_{\mb{R}^d} D^2 \phi(x)\nabla f(x)^{\otimes 2} \pi(dx) \\
    &\ +\frac{u}{4} \int_{\mb{R}^d}\left( D^3f(x)\nabla \phi(x)+2D^2f(x)D^2\phi(x)+D^3\phi(x)\nabla f(x)\right)v^{\otimes 2} \nu(dx,dv)\\
    &\ +\frac{3u}{4}\int_{\mb{R}^{2d}}\langle D^3\phi(x); \nabla f(x), v^{\otimes 2} \rangle \nu(dx,dv)-\frac{1}{4}\int_{\mb{R}^{2d}}D^4\phi(x)v^{\otimes 4}\pi(dx)\\
    &\ -\frac{u^2}{2}\int_{\mb{R}^d}D^2\phi(x)\nabla f(x)^{\otimes 2}\pi(dx)\\
    &=\lim_n \frac{1}{\Gamma_n^{(4)}}[Z_n^{(2)}+3Z_n^{(3)}]-\frac{u^2}{4}\int_{\mb{R}^d} \langle D^2 f(x); \nabla \phi(x), \nabla f(x)\rangle \pi(dx)\\
    &\ -\frac{3u^2}{4}\int_{\mb{R}^d} D^2 \phi(x)\nabla f(x)^{\otimes 2} \pi(dx)-\frac{1}{4}\int_{\mb{R}^{2d}}D^4\phi(x)v^{\otimes 4}\pi(dx)\\
    &\ +\frac{u}{4} \int_{\mb{R}^d}\left( D^3f(x)\nabla \phi(x)+2D^2f(x)D^2\phi(x)+4D^3\phi(x)\nabla f(x)\right)v^{\otimes 2} \nu(dx,dv)\\
\end{align*}
The last identity follows from \textbf{Claim}-a),b) and the fact that $\lim_n \frac{1}{\Gamma_n^{(4)}}\sum_{k=1}^n \gamma_k^3\langle D^2f(x_{k-1}); \nabla \phi(x_{k-1}), v_{k-1} \rangle=0$. To prove $\lim_n \frac{1}{\Gamma_n^{(4)}}\sum_{k=1}^n \gamma_k^3\langle D^2f(x_{k-1}); \nabla \phi(x_{k-1}), v_{k-1} \rangle=0$, we can assume $\psi$ is a new test function satisfying $\nabla \psi(x)=D^2f(x)\nabla \phi(x)$. Then the statement follows from \textbf{Claim}-c). This could be done because $\psi$ satisfies the all assumptions on $\phi$ stated in the theorem. Therefore we obtain
\begin{align*}
    \lim_n \frac{1}{\Gamma_n^{(4)}}Z_n^{(3)}&= \frac{u^2}{12}\int_{\mb{R}^d} \langle D^2 f(x); \nabla \phi(x), \nabla f(x)\rangle \pi(dx)+\frac{u^2}{4}\int_{\mb{R}^d} D^2 \phi(x)\nabla f(x)^{\otimes 2} \pi(dx) \\
    &\ -\frac{u}{12} \int_{\mb{R}^d}\left( D^3f(x)\nabla \phi(x)+2D^2f(x)D^2\phi(x)+4D^3\phi(x)\nabla f(x)\right)v^{\otimes 2} \nu(dx,dv)\\
    &\ +\frac{1}{12}\int_{\mb{R}^{2d}}D^4\phi(x)v^{\otimes 4}\pi(dx)\\
\end{align*}
Combine with our previous results on $2Z_n^{(2)}+3Z_n^{(3)}$ and we obtain
\begin{align*}
    \lim_n \frac{1}{\Gamma_n^{(4)}}[Z_n^{(2)}+Z_n^{(3)}]&=\lim_n \frac{1}{\Gamma_n^{(4)}}\sum_{k=1}^n \sqrt{u}\gamma_k^{\frac{3}{2}}\nabla \phi(x_{k-1})\cdot {U_{k}^{(2)}}+\frac{u^2}{8}\int_{\mb{R}^d} D^2 \phi(x)\nabla f(x)^{\otimes 2} \pi(dx)\\
    &\ -\frac{u}{12}\int_{\mb{R}^{2d}} \langle D^3\phi(x); \nabla f(x), v^{\otimes 2} \rangle \nu(dx,dv)+\frac{u}{24}\int_{\mb{R}^{2d}} \langle D^3f(x); \nabla \phi(x), v^{\otimes 2} \rangle \nu(dx,dv)\\
    &\ +\frac{u}{12}\int_{\mb{R}^{2d}} (D^2f D^2\phi)(x)v^{\otimes 2}\nu(dx,dv)-\frac{u^2}{24}\int_{\mb{R}^d} \langle D^2 f(x); \nabla \phi(x), \nabla f(x)\rangle \pi(dx)\\
    &\ -\frac{1}{24}\int_{\mb{R}^{2d}}D^4\phi(x)v^{\otimes 4}\pi(dx)\\
\end{align*}
Then we plug this result in our original decomposition:
\begin{align*}
    \lim_n \frac{1}{\Gamma_n^{(4)}}\sum_{k=1}^n \gamma_k \mathcal{L}\phi(x_{k-1})&= \lim_n \frac{1}{\Gamma_n^{(4)}}\sum_{k=1}^n \gamma_k^{\frac{3}{2}} \frac{2\sqrt{3}}{3}\nabla \phi(x_{k-1})\cdot(\sqrt{u}U_{k}^{(1)})\\
     &+\ \lim_n \frac{1}{\Gamma_n^{(4)}}\sum_{k=1}^n \sqrt{u}\gamma_k^{\frac{3}{2}}\nabla \phi(x_{k-1})\cdot {U_{k}^{(2)}}+\frac{u^2}{8}\int_{\mb{R}^d} D^2 \phi(x)\nabla f(x)^{\otimes 2} \pi(dx)\\
    &\ -\frac{u}{12}\int_{\mb{R}^{2d}} \langle D^3\phi(x); \nabla f(x), v^{\otimes 2} \rangle \nu(dx,dv)+\frac{u}{24}\int_{\mb{R}^{2d}} \langle D^3f(x); \nabla \phi(x), v^{\otimes 2} \rangle \nu(dx,dv)\\
    &\ +\frac{u}{12}\int_{\mb{R}^{2d}} (D^2f D^2\phi)(x)v^{\otimes 2}\nu(dx,dv)-\frac{u^2}{24}\int_{\mb{R}^d} \langle D^2 f(x); \nabla \phi(x), \nabla f(x)\rangle \pi(dx)\\
    &\ -\frac{1}{24}\int_{\mb{R}^{2d}}D^4\phi(x)v^{\otimes 4}\pi(dx)+\frac{u}{4}\int_{\mb{R}^{2d}}\langle D^3\phi(x);\nabla f(x),v^{\otimes 2} \rangle \nu(dx,dv)\\
    &\ -\frac{u^2}{8}\int_{\mb{R}^d}D^2\phi(x)\nabla f(x)^{\otimes 2}\pi(dx)-\frac{1}{24}\int_{\mb{R}^{2d}}D^4\phi(x)v^{\otimes 4}\nu(dx,dv) \\
    &= \lim_n \frac{1}{\Gamma_n^{(4)}}\sum_{k=1}^n \gamma_k^{\frac{3}{2}} \nabla \phi(x_{k-1})\cdot
    (\frac{2\sqrt{3}}{3}\sqrt{u}U_k^{(1)}+\frac{1}{2}2\sqrt{u}U_k^{(2)})\\
    &\ +\frac{u}{6}\int_{\mb{R}^{2d}} \langle D^3\phi(x); \nabla f(x), v^{\otimes 2} \rangle\nu(dx,dv)+\frac{u}{24}\int_{\mb{R}^{2d}} \langle D^3f(x); \nabla \phi(x), v^{\otimes 2} \rangle \nu(dx,dv)\\
    &\ +\frac{u}{12}\int_{\mb{R}^{2d}}(D^2\phi D^2f)(x)v^{\otimes 2}\nu(dx,dv)-\frac{1}{12}\int_{\mb{R}^{2d}}D^4\phi(x)v^{\otimes 4}\nu(dx,dv)\\
    &\ -\frac{u^2}{24}\int_{\mb{R}^d} \langle D^2 f(x); \nabla \phi(x), \nabla f(x)\rangle \pi(dx)\\
\end{align*}
 It remains to determine the normal limit. Since $(U_k^{(1)},U_k^{(2)})$ is Gaussian in $\mb{R}^{2d}$ with mean zero and covariance matrix $\frac{1+e^{-4\gamma_k}-2e^{-2\gamma_k}}{4\sigma_k^{(1)}\sigma_k^{(2)}}I_d$, we can find the distribution of $U_k:= (\frac{2\sqrt{3}}{3}\sqrt{u}U_k^{(1)}+\frac{1}{2}2\sqrt{u}U_k^{(2)})$. $\{U_k\}$ are independent $2d$-Gaussian Random vectors with $U_k\sim \mathcal{N}(0, \Sigma_k)$, where
\begin{align*}
    \Sigma_k&=\mb{E}[ (\frac{2\sqrt{3}}{3}\sqrt{u}U_k^{(1)}+\frac{1}{2}2\sqrt{u}U_k^{(2)})^T (\frac{2\sqrt{3}}{3}\sqrt{u}U_k^{(1)}+\frac{1}{2}2\sqrt{u}U_k^{(2)})] \\
    &= \frac{4u}{3}I_d +\frac{4u\sqrt{3}}{3}\frac{1+e^{-4\gamma_k}-2e^{-2\gamma_k}}{4\sigma_k^{(1)}\sigma_k^{(2)}}I_d+uI_d \\
    & \sim \frac{10}{3}uI_d+O(\gamma_k)I_d
\end{align*}
Apply our weak convergence result and CLT for arrays of square-integrable martingale increments, we have that when $0<\hat{\gamma}<+\infty$:  
\begin{align*}
    \frac{1}{\Gamma_n^{(4)}}\sum_{k=1}^n \gamma_{k}^{\frac{3}{2}} \nabla \phi(x_{k-1})\cdot U_k \implies \mathcal{N}(0, \sigma^2)
\end{align*}
where 
\begin{align*}
    \sigma^2&= \lim_n \frac{1}{{\Gamma_n^{(4)}}^2}\sum_{k=1}^n \gamma_k^3 |\nabla \phi(x_{k-1})|^2 (\frac{10}{3}u+O(\gamma_k)) =\frac{10}{3}u{\hat{\gamma}}^{-2}\int_{\mb{R}^d} |\nabla \phi(x)|^2 \pi(dx) 
\end{align*}
In conclusion, when $\hat{\gamma}\in (0,+\infty)$:
\begin{align*}
    \frac{\Gamma_n}{\Gamma_{n}^{(4)}}\nu_n^{\gamma} (\mathcal{L}\phi)\implies \mathcal{N}(\rho, \frac{10}{3}u{\hat{\gamma}}^{-2}\int_{\mb{R}^d} |\nabla \phi(x)|^2 \pi(dx) )
\end{align*}
where
\begin{align*}
    \rho&=\frac{u}{6}\int_{\mb{R}^{2d}} \langle D^3\phi(x); \nabla f(x), v^{\otimes 2} \rangle\nu(dx,dv)+\frac{u}{24}\int_{\mb{R}^{2d}} \langle D^3f(x); \nabla \phi(x), v^{\otimes 2} \rangle \nu(dx,dv)\\
    &\ +\frac{u}{12}\int_{\mb{R}^{2d}}(D^2\phi D^2f)(x)v^{\otimes 2}\nu(dx,dv)-\frac{1}{12}\int_{\mb{R}^{2d}}D^4\phi(x)v^{\otimes 4}\nu(dx,dv)\\
    &\ -\frac{u^2}{24}\int_{\mb{R}^d} \langle D^2 f(x); \nabla \phi(x), \nabla f(x)\rangle \pi(dx)\\
\end{align*}
When $\hat{\gamma}=0$, 
\begin{align*}
    \frac{\Gamma_n}{\sqrt{\Gamma_{n}^{(3)}}}\nu_n^{\gamma} (\mathcal{L}\phi)\implies \mathcal{N}(0, \frac{10}{3}u\int_{\mb{R}^d} |\nabla \phi(x)|^2 \pi(dx) )
\end{align*}
When $\hat{\gamma}=+\infty$, 
\begin{align*}
     \frac{1}{\Gamma_n^{(4)}}\sum_{k=1}^n \gamma_k^{\frac{3}{2}} \nabla \phi(x_{k-1})\cdot
    (\frac{2\sqrt{3}}{3}\sqrt{u}U_k^{(1)}+\frac{1}{2}2\sqrt{u}U_k^{(2)})\to 0\ \ \ \ \ \text{in probability}
\end{align*}
Therefore when $\hat{\gamma}=+\infty$,
\begin{align*}
    \frac{\Gamma_n}{\Gamma_{n}^{(4)}}\nu_n^{\gamma} (\mathcal{L}\phi)\to  \rho\ \ \ \ \ \ \ \ \ \ \ \ \ \ \ \text{in probability}
\end{align*}
\textbf{Proof of the claim:} First we'll show that $\frac{1}{\Gamma_n^{(3)}}\sum_{k=1}^n\gamma_k^2\mathcal{L}\phi(x_{k-1})\to 0$. We can use our decomposition of $\mathcal{L}\phi(x_{k-1})$ and obtain:
\begin{align*}
    \sum_{k=1}^n \gamma_k^2\mathcal{L}\phi(x_{k-1})&=\sum_{k=1}^n \left\{ \gamma_k\left( \phi(x_k)-\phi(x_{k-1})\right)+\gamma_k^3v_{k-1}\cdot\nabla \phi(x_{k-1}) \right.\\
    &\ \left. +\frac{u}{2}\gamma_k^3\nabla f(x_{k-1})\cdot \nabla \phi(x_{k-1})-\frac{1}{2}\gamma_k^3D^2\phi(x_{k-1})v_{k-1}^{\otimes 2} \right\} \\
\end{align*}
Since $\gamma_{k-1}-\gamma_k\sim o(\gamma_k^4)$ and $\{\phi(x_n)\}$ is tight, $\frac{1}{\Gamma_n^{(3)}}\sum_{k=1}^n\gamma_k\left( \phi(x_k)-\phi(x_{k-1})\right)\to 0$. Then we can apply theorem~\ref{theorem4} with $p=6, s=1$ and obtain
\begin{align*}
    \frac{1}{\Gamma_n^{(3)}}\sum_{k=1}^n\gamma_k^2\mathcal{L}\phi(x_{k-1})\to & \int_{\mb{R}^{2d}} v\cdot\nabla \phi(x)\nu(dx,dv)+\frac{u}{2}\int_{\mb{R}^d} \nabla \phi(x)\cdot\nabla f(x)\pi(dx)\\
    &\ -\frac{1}{2}\int_{\mb{R}^{2d}}D^2\phi(x)v^{\otimes 2}\nu(dx,dv)\\
    &=0
\end{align*}
The last identity follows from integration by parts and Fubini theorem. In the same way, we can also prove {$\frac{1}{\Gamma_n^{(4)}}\sum_{k=1}^n\gamma_k^3\mathcal{L}\phi(x_{k-1})\to 0$}.\\
Next, we'll show $\lim_n \frac{1}{\Gamma_n^{(3)}}\sum_{k=1}^n \gamma_k^2(\frac{u}{2}\nabla \phi(x_{k-1})\cdot\nabla f(x_{k-1})-\frac{1}{2}D^2\phi(x_{k-1})v_{k-1}^{\otimes 2})=0$, we'll use the same trick as we did in the proof of theorem~\ref{1dCLTKLMC}. We Taylor expand $\mathcal{L}\phi(x_{k-1})$ at $(x_{k-2}, v_{k-2})$:
\begin{align*}
    \gamma_k^2\mathcal{L}\phi(x_{k-1})&=\gamma_k^2\mathcal{L}\phi(x_{k-2})+\gamma_k^2(\gamma_{k-1}-\gamma_{k-1}^2)D^2\phi(x_{k-2})v_{k-2}^{\otimes 2}\\
    &\ -\frac{u}{2}\gamma_k^2\gamma_{k-1}^2\langle D^2\phi(x_{k-2});v_{k-2},\nabla f(x_{k-2}) \rangle\\
    &\ -\gamma_k^2(2\gamma_{k-1}-2\gamma_{k-1}^2) v_{k-2}\cdot \nabla \phi(x_{k-2}) \\
    &\ -u\gamma_k^2(\gamma_{k-1}-\gamma_{k-1}^2)\nabla f(x_{k-2})\cdot\nabla \phi(x_{k-2})\\
    &\ +\frac{1}{2}\gamma_k^2\gamma_{k-1}^2D^3\phi(x_{k-2})v_{k-2}^{\otimes 3}-2\gamma_k^2\gamma_{k-2}^2 D^2\phi(x_{k-2})v_{k-2}^{\otimes 2}\\
    &\ -u\gamma_k^2 \gamma_{k-1}^2\langle D^2\phi(x_{k-2});v_{k-2},\nabla f(x_{k-2}) \rangle\\
    &\ +\gamma_k^2 \langle D^2\phi(x_{k-2});\sqrt{u}\sigma_{k-1}^{(1)}U_{k-1}^{(1)}, 2\sqrt{u}\sigma_{k-1}^{(2)}U_{k-1}^{(2)}\rangle+o(\gamma_{k-1}^3)
\end{align*}
Since $\gamma_{k-1}-\gamma_k=o(\gamma_k^4)$, we can change $\gamma_k$ on the left hand side to $\gamma_{k-1}$ when we take limits with scale $\Gamma_n^{(4)}$. Apply theorem~\ref{theorem4} with $p=8,s=1$ to terms with order $o(\gamma_k^3)$-coefficients. 
\begin{align*}
    \lim_n \frac{1}{\Gamma_n^{(4)}}\sum_{k=2}^n \gamma_k^2\mathcal{L}\phi(x_{k-1})&=\lim_n \frac{1}{\Gamma_n^{(4)}}\sum_{k=2}^n \gamma_{k-1}^2\mathcal{L}\phi(x_{k-2})-2\lim_n \frac{1}{\Gamma_n^{(4)}}\sum_{k=2}^n \gamma_{k-1}^3\mathcal{L}\phi(x_{k-2})\\
    &\ -2\lim_n \frac{1}{\Gamma_n^{(4)}}\sum_{k=2}^n \gamma_{k-1}^3 (\frac{u}{2}\nabla \phi(x_{k-2})\cdot\nabla f(x_{k-2})-\frac{1}{2}D^2\phi(x_{k-2})v_{k-2}^{\otimes 2})\\
    &\ -3\int_{\mb{R}^{2d}} D^2\phi(x)v^{\otimes 2}\nu(dx,dv)+u\int_{\mb{R}^d}\nabla \phi(x)\cdot \nabla f(x)\pi(dx)\\
    &\ +\lim_n \frac{1}{\Gamma_n^{(4)}}\sum_{k=2}^n \gamma_{k-1}^2 \langle D^2\phi(x_{k-2});\sqrt{u}\sigma_{k-1}^{(1)}U_{k-1}^{(1)}, 2\sqrt{u}\sigma_{k-1}^{(2)}U_{k-1}^{(2)}\rangle
\end{align*}
Since we proved $\frac{1}{\Gamma_n^{(4)}}\sum_{k=1}^n\gamma_k^3\mathcal{L}\phi(x_{k-1})\to 0$ and from Theorem 5, we've shown that
\begin{align*}
    \lim_n \frac{1}{\Gamma_n^{(4)}}\sum_{k=2}^n \gamma_{k-1}^2 \langle D^2\phi(x_{k-2});\sqrt{u}\sigma_{k-1}^{(1)}U_{k-1}^{(1)}, 2\sqrt{u}\sigma_{k-1}^{(2)}U_{k-1}^{(2)}\rangle=2u\int_{\mb{R}^d} \Delta\phi(x)\pi(dx)
\end{align*}
We obtain 
\begin{align*}
    &\lim_n \frac{1}{\Gamma_n^{(4)}}\sum_{k=2}^n \gamma_{k-1}^2 (\frac{u}{2}\nabla \phi(x_{k-2})\cdot\nabla f(x_{k-2})-\frac{1}{2}D^2\phi(x_{k-2})v_{k-2}^{\otimes 2})\\
    &\ =\frac{1}{2} \left[ \lim_n \frac{1}{\Gamma_n^{(4)}}\sum_{k=2}^n \gamma_k^2\mathcal{L}\phi(x_{k-1})-\lim_n \frac{1}{\Gamma_n^{(4)}}\sum_{k=2}^n \gamma_{k-1}^2\mathcal{L}\phi(x_{k-2})\right]\\
    &\ =0
\end{align*}
Therefore, {$\lim_n \frac{1}{\Gamma_n^{(4)}}\sum_{k=1}^n \gamma_{k}^3 (\frac{u}{2}\nabla \phi(x_{k-1})\cdot\nabla f(x_{k-1})-\frac{1}{2}D^2\phi(x_{k-1})v_{k-1}^{\otimes 2})=0$}.\\
To prove the \textbf{Claim}, we need to use the decomposition again:  
\begin{align*}
    \sum_{k=1}^n \gamma_k^2\mathcal{L}\phi(x_{k-1})&=\sum_{k=1}^n \left\{ \gamma_k[\phi(x_k)-\phi(x_{k-1})]+(\gamma_k^3-\frac{2}{3}\gamma_k^4) v_{k-1}\cdot\nabla \phi(x_{k-1}) \right.\\
    &\ +\frac{u}{2}(\gamma_k^3-\frac{2}{3}\gamma_k^4)\nabla f(x_{k-1})\cdot \nabla \phi(x_{k-1})\\
    &\ -\frac{1}{2}(\gamma_k^3-2\gamma_k^4)D^2\phi(x_{k-1})v_{k-1}^{\otimes 2} \\
    &\ -\frac{2u}{3}\gamma_k^4 D^2\phi(x_{k-1}){U_k^{(1)}}^{\otimes 2}\\
    &\ +\frac{u}{2}\gamma_k^4\langle D^2\phi(x_{k-1}); v_{k-1},\nabla f(x_{k-1}) \rangle \\
    &\ \left. -\frac{1}{6}\gamma_k^4D^3\phi(x_{k-1})v_{k-1}^{\otimes 3}+o(\gamma_k^4)\right\} \\
\end{align*}
Since $\{\phi(x_n)\}$ is tight and $\gamma_{k-1}-\gamma_k=o(\gamma_k^4)$, we have $\frac{1}{\Gamma_n^{(4)}}\sum_{k=1}^n \gamma_k(\phi(x_k)-\phi(x_{k-1}))\to 0$. For the terms with coefficients of order $\gamma_k^3$, we can apply theorem~\ref{theorem4} with $p=8,s=1$. Then we obtain:
\begin{align*}
    \lim_n \frac{1}{\Gamma_n^{(4)}}\sum_{k=1}^n \gamma_k^2\mathcal{L}\phi(x_{k-1})&=\lim_{n}\frac{1}{\Gamma_n^{(4)}}\sum_{k=1}^n \gamma_k^3(\mathcal{L}\phi(x_{k-1})+\frac{u}{2}\nabla \phi(x_{k-1})\cdot\nabla f(x_{k-1})-\frac{1}{2}D^2\phi(x_{k-1})v_{k-1}^{\otimes 2}) \\
    &\ -\frac{u}{3}\int_{\mb{R}^d} \nabla\phi(x)\cdot\nabla f(x)\pi(dx)+\int_{\mb{R}^{2d}}D^2\phi(x)v^{\otimes 2}\nu(dx,dv)\\
    &\ -\frac{2u}{3}\int_{\mb{R}^d}\int_{\mb{R}^d} D^2\phi(x) z^{\otimes 2} \mu(dz)\pi(dx) \\
    &=\lim_{n}\frac{1}{\Gamma_n^{(4)}}\sum_{k=1}^n \gamma_k^3(\mathcal{L}\phi(x_{k-1})+\frac{u}{2}\nabla \phi(x_{k-1})\cdot\nabla f(x_{k-1})-\frac{1}{2}D^2\phi(x_{k-1})v_{k-1}^{\otimes 2})\\
    &=0
\end{align*}
The second identity follows from integration by parts and Fubini theorem. The last identity follows from the two statements we just proved. 
\end{proof}

\bibliography{randomizedmp,bib1,bib2}
\bibliographystyle{amsalpha}
\end{document}